\pgfplotsset{compat = newest}
\theoremstyle{plain}
\newtheorem{theorem}{Theorem}[section]
\newtheorem{proposition}[theorem]{Proposition}
\newtheorem{lemma}[theorem]{Lemma}
\theoremstyle{definition}
\newtheorem{definition}[theorem]{Definition}
\theoremstyle{remark}
\newcommand{\alink}[1]{\href{#1}{paper-link}}
\newcommand{\bfC}{\mathbf{C}}
\newcommand{\bfH}{\mathbf{H}}
\newcommand{\bfI}{\mathbf{I}}
\newcommand{\bfZ}{\mathbf{Z}}
\definecolor{citecolor}{HTML}{0071BC}
\definecolor{linkcolor}{HTML}{ED1C24}
\definecolor{commentcolor}{RGB}{110,154,155}   
\newcommand{\PyComment}[1]{\footnotesize\ttfamily\textcolor{commentcolor}{\# #1}}  
\newcommand{\PyCode}[1]{\footnotesize\ttfamily\textcolor{black}{#1}} 
\def\eqref#1{equation~\ref{#1}}
\def\1{\bm{1}}
\DeclareMathAlphabet{\mathsfit}{\encodingdefault}{\sfdefault}{m}{sl}
\SetMathAlphabet{\mathsfit}{bold}{\encodingdefault}{\sfdefault}{bx}{n}
\newcommand{\R}{\mathbb{R}}
\icmltitlerunning{Matrix Information Theory for Self-Supervised Learning}
\begin{document}

\twocolumn[
\icmltitle{Matrix Information Theory for Self-Supervised Learning}



\icmlsetsymbol{equal}{*}

\begin{icmlauthorlist}
\icmlauthor{Yifan Zhang}{equal,iiis}
\icmlauthor{Zhiquan Tan}{equal,thumath}
\icmlauthor{Jingqin Yang}{equal,iiis}
\icmlauthor{Weiran Huang}{sjtu,shailab}
\icmlauthor{Yang Yuan}{iiis,shailab,qizhi}
\end{icmlauthorlist}

\icmlaffiliation{iiis}{IIIS, Tsinghua University, Beijing, China}
\icmlaffiliation{thumath}{Department of Mathematical Sciences, Tsinghua University, Beijing, China}
\icmlaffiliation{shailab}{Shanghai AI Laboratory, Shanghai, China}
\icmlaffiliation{qizhi}{Shanghai Qizhi Institute, Shanghai, China}
\icmlaffiliation{sjtu}{MIFA Lab, Qing Yuan Research Institute, SEIEE, Shanghai Jiao Tong University, Shanghai, China}

\icmlcorrespondingauthor{Yang Yuan}{yuanyang@tsinghua.edu.cn}
\icmlkeywords{Machine Learning, Self-Supervised Learning, ICML}
\vskip 0.3in
]



\printAffiliationsAndNotice{\icmlEqualContribution} 


\begin{abstract}
The maximum entropy encoding framework provides a unified perspective for many non-contrastive learning methods like SimSiam, Barlow Twins, and MEC. 
Inspired by this framework, we introduce Matrix-SSL, a novel approach that leverages matrix information theory to interpret the maximum entropy encoding loss as matrix uniformity loss. Furthermore, Matrix-SSL enhances the maximum entropy encoding method by seamlessly incorporating matrix alignment loss, directly aligning covariance matrices in different branches.
Experimental results reveal that Matrix-SSL outperforms state-of-the-art methods on the ImageNet dataset under linear evaluation settings and on MS-COCO for transfer learning tasks. Specifically, when performing transfer learning tasks on MS-COCO, our method outperforms previous SOTA methods such as MoCo v2 and BYOL up to 3.3\% with only 400 epochs compared to 800 epochs pre-training. We also try to introduce representation learning into the language modeling regime by fine-tuning a 7B model using matrix cross-entropy loss, with a margin of 3.1\% on the GSM8K dataset over the standard cross-entropy loss. 
\end{abstract}

\section{Introduction}
Contrastive learning methods \citep{chen2020simple,he2020momentum} focus on aligning similar objects closely while distancing dissimilar ones. This approach, grounded in intuitive principles, has led to deep and interesting insights. For example, 
SimCLR has been proved to perform spectral clustering on similarity graph~\citep{tan2023contrastive,haochen2021provable}, and \citet{wang2020understanding} highlight two critical aspects of contrastive loss: \textbf{alignment and uniformity}.

Alignment loss ensures similar objects are closely mapped,  whereas uniformity loss promotes a uniformly distributed output feature space 
that preserves the maximum information. 
Remarkably, many existing contrastive methods~\citep{wu2018unsupervised,he2020momentum,logeswaran2018efficient,
tian2020contrastive,hjelm2018learning,bachman2019learning,chen2020simple} can be viewed as specific implementations of these two loss types, a perspective that simplifies understanding their core mechanisms.

Simultaneously, there is growing interest in non-contrastive learning methods 
that do not use negative samples, such as
BYOL~\citep{grill2020bootstrap}, SimSiam~\citep{chen2021exploring}, Barlow Twins~\citep{zbontar2021barlow}, VICReg~\citep{bardes2021vicreg}, etc. 
Among these, 
\citet{liu2022self} presented an interesting theoretical framework called maximum entropy encoding, which proposes to maximize the following loss between the two feature matrices $\mathbf{Z}_1,\mathbf{Z}_2$ computed from different augmentations from the same input:
\[
 \mathcal{L}_{\text{MEC}} = -\mu \log \operatorname{det}\left(\mathbf{I}_{d}+\lambda \mathbf{Z}_1 \mathbf{Z}_2^\top\right).
\]
Although it may not be immediately obvious,  the above loss encourages maximum entropy encoding for the feature embeddings, which is similar to the \textbf{uniformity loss} in contrastive learning methods.
It turns out that this formulation naturally encompasses loss functions of several other non-contrastive methods like SimSiam, Barlow Twins, 
and the resulting algorithm MEC surpasses previous methods in performance~\citep{liu2022self} (element-wise alignment losses such as $\|\mathbf{z}_1 - \mathbf{z}_2\|_2$ used in BYOL can be seen as low-order Taylor expansion terms in this MEC loss).
However, a comparison of contrastive and non-contrastive methods reveals some differences:
\begin{equation*}
\begin{tabular}{c|l}
    \toprule
    \textbf{Learning Method} & \textbf{Loss Function} \\
    \midrule
    Contrastive Learning & Uniformity + Alignment \\
    Non-contrastive Learning & Uniformity \\
    \bottomrule
\end{tabular}
\end{equation*}

This observation naturally propels us towards a broader, more explorative query:

\begin{figure*}[ht]
    \centering
\input{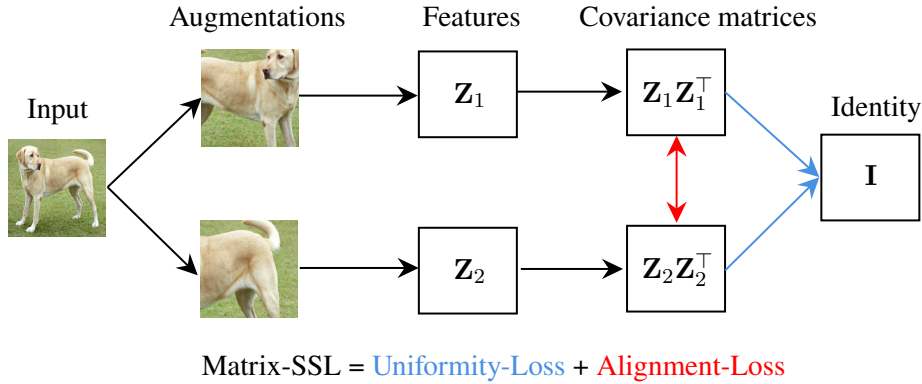}
    \caption{Illustration of the Matrix-SSL architecture. The diagram begins with the image input layer, followed by data augmentations and feature extraction, leading to the formation of covariance matrices (\(\mathbf{Z}_1\mathbf{Z}_1^\top\) and \(\mathbf{Z}_2\mathbf{Z}_2^\top\)).}
    \label{fig:arch}
\end{figure*}

\begin{center}
\fbox{\parbox{0.8\columnwidth}{
\centering
\textbf{
Could there exist a more encompassing framework
that harmonizes the virtues of both contrastive and non-contrastive learning methods?
}}}
\end{center}

In this paper, we affirmatively address this question, presenting a method that not only integrates but also enhances the advantages of both contrastive and non-contrastive learning paradigms.

The existing maximum entropy encoding framework, however, does not explicitly differentiate between feature matrices from different branches, hindering its integration with alignment loss. To bridge this gap, we introduce matrix information theory. By extending classical concepts like entropy, Kullback–Leibler (KL) divergence, and cross-entropy to matrix analogs, we offer a richer representation of associated loss functions. Notably, we find that methods like SimSiam, BYOL, Barlow Twins, and MEC can be reinterpreted as utilizing matrix cross-entropy (MCE)-based loss functions, a connection previously unexplored (see Theorem~\ref{thm:mce-tcr-md}).

Our proposed algorithm, Matrix-SSL, incorporates matrix alignment loss into non-contrastive methods, leading to improvements in empirical performance. This dual focus provides additional information and a richer signal for representation learning.
Matrix-SSL includes Matrix-Uniformity and Matrix-Alignment loss components. 
Matrix-Uniformity aligns the cross-covariance matrix of feature matrices $\bfZ_1$
and $\bfZ_2$ with the identity matrix $\bfI_d$, while Matrix-Alignment focuses on aligning their auto-covariance matrices (see Figure~\ref{fig:arch}). As a by-product, we observe the closed-form relationship between effective rank and matrix KL, which indicates that effective rank can be a powerful metric for measuring performance for various machine learning methods (see Section~\ref{sec:effective-rank}).

In experimental evaluations, our method Matrix-SSL outperforms state-of-the-art methods (SimCLR, BYOL, SimSiam, Barlow Twins, VICReg, etc.) on ImageNet datasets, especially under linear evaluation settings, our method uses only 100 epochs pre-training can outperform SimCLR 100 epochs pre-training by 4.6\%. For transfer learning tasks such as COCO detection and COCO instance segmentation, our method outperforms previous SOTA methods such as MoCo v2 and BYOL up to 3\% with only 400 epochs compared to 800 epochs pre-training. 

We further introduce representation learning into the language modeling regime and use the matrix cross-entropy loss to fine-tune large language models, achieving SOTA results on the GSM8K dataset for mathematical reasoning with a margin of 3.1\% over standard cross-entropy loss. 

In summary, our contributions can be listed as three-fold:

\vspace{-0.2cm}
\begin{itemize}[leftmargin=0.5cm, parsep=0pt, itemsep=0pt, topsep=-0.5pt]
    \item We prove the equivalence of MEC loss and matrix uniformity loss (up to constant terms and factors) in non-contrastive learning, and the closed-form relationship between effective rank and matrix KL.

    \item We provide a unified perspective of uniformity loss plus alignment loss for both contrastive and non-contrastive learning methods. 

    \item We empirically verify our method under various tasks including linear evaluation on image classification tasks, transfer learning on object detection and instance segmentation tasks, and large language model fine-tuning for mathematical reasoning tasks.  
\end{itemize}

\section{Related Work}

\textbf{Contrastive and non-contrastive SSL approaches.\hspace{1ex}} Contrastive and non-contrastive self-supervised learning methods learn representations based on diverse views or augmentations of inputs, 
without using human-annotated labels~\citep{chen2020simple, hjelm2018learning, wu2018unsupervised, tian2019contrastive, chen2021exploring, gao2021simcse, bachman2019learning, 
oord2018representation, ye2019unsupervised, henaff2020data, misra2020self, caron2020unsupervised, haochen2021provable, caron2021emerging,li2021self, zbontar2021barlow, tsai2021note, bardes2021vicreg, tian2020makes, robinson2021contrastive}.
Such representations can be used for various downstream tasks with remarkable performance. 

\textbf{Theoretical understanding of self-supervised learning.\hspace{1ex}} The empirical success of contrastive learning has triggered a surge of theoretical explorations into the contrastive loss~\citep{arora2019theoretical, haochen2021provable, haochen2022beyond, tosh2020contrastive, tosh2021contrastive, lee2020predicting, wang2022chaos, nozawa2021understanding, huang2021towards, tian2022deep, hu2022your, tan2023contrastive}.
\citet{wang2020understanding} shed light on the optimal solutions of the InfoNCE loss, decomposing it as alignment term and uniformity term, contributing to a deeper comprehension of self-supervised learning. In \citet{haochen2021provable, shen2022connect, wang2022chaos, tan2023contrastive}, self-supervised learning methods are examined from a spectral graph perspective. \citet{zimmermann2021contrastive} provides a compelling probabilistic view of contrastive learning, suggesting that it can be seen as an inversion of the data-generating process, which assumes that the ground-truth marginal distribution of the latents of the generative process is uniform. 

Various theoretical studies have also investigated non-contrastive methods for self-supervised learning~\citep{wen2022mechanism, tian2021understanding, garrido2022duality, balestriero2022contrastive, tsai2021note, pokle2022contrasting, tao2022exploring, lee2021predicting}. \citet{garrido2022duality} establishes the duality between contrastive and non-contrastive methods. \citet{balestriero2022contrastive} reveal the connections between variants of SimCLR, Barlow Twins, and VICReg to ISOMAP, Canonical Correlation Analysis, and Laplacian Eigenmaps, respectively. 

\citet{tan2023information} also use matrix information theory to analyze non-contrastive methods, but they focus on applying $\alpha$-order mutual information to characterize the loss functions of Barlow Twins and spectral contrastive learning, and extend the analysis to MAE. By contrast, our paper focuses on incorporating alignment loss into the maximum entropy encoding framework.  

\textbf{Neural collapse and dimensional collapse. \hspace{1ex}} \citet{papyan2020prevalence} describe the intriguing phenomenon of Neural Collapse (NC), which manifests when training a classification network with cross-entropy loss. This phenomenon can be summarized that all the features of a single class converge to the mean of these features. In addition, the class-means form a simplex equiangular tight frame (ETF).  \citet{zhuo2023towards} advocate for a comprehensive theoretical understanding of non-contrastive learning through the mechanism of rank differential.

\section{Background}

Self-supervised learning (SSL) aims to learn meaningful representations from unlabeled data \( \{ x_i \}_{i=1}^{n} \), which can be used to enhance performance in various downstream tasks. Prominent SSL methods (architectures) like SimCLR, SimSiam, BYOL, Barlow Twins, and VICReg, employ 2-view augmentations: an online network \( \boldsymbol{f}_{\theta} \) and a target network \( \boldsymbol{f}_{\phi} \). Given a mini-batch \( \{ \mathbf{x}_i \}_{i=1}^{B} \), each data point \( \mathbf{x}_i \) is augmented using a random transformation \( \mathcal{T} \) from a predefined set \( \tau \) to obtain \( \mathbf{x}^{\prime}_i = \mathcal{T}(\mathbf{x}_i) \). These original and augmented data points are processed through the respective networks to generate feature representations \( \mathbf{z}_1^i \) and \( \mathbf{z}_2^i \), both residing in \( \mathbb{R}^d \). The resulting matrices \( \mathbf{Z}_1 \) and \( \mathbf{Z}_2 \in \R^{d \times B}\) form the basis for the training loss \( \mathcal{L}(\mathbf{Z}_1, \mathbf{Z}_2) \), which varies based on the learning paradigm—contrastive or non-contrastive.

\subsection{Contrastive Learning}

The idea of contrastive learning is to make the representation of similar objects align and dissimilar objects apart. One of the widely adopted losses in contrastive learning is the 
InfoNCE~\citep{oord2018representation} loss, 
where we use cosine similarity $\operatorname{sim}(\boldsymbol{u}, \boldsymbol{v})=\boldsymbol{u}^{\top} \boldsymbol{v} /(\|\boldsymbol{u}\|_2\|\boldsymbol{v}\|_2)$:
\begin{align*}
&\mathcal{L}_{\text{Contrastive}}(
\bfZ_1, \bfZ_2
) =\\&
\sum_{i}
-\log \frac{\exp \left(\operatorname{sim}\left(\bfZ_1^i, \bfZ_2^i\right) / \tau\right)}{\sum_{(p,k)\neq (1,i)} \exp \left(\operatorname{sim}\left(\bfZ_1^i, \bfZ_p^k\right) / \tau\right)
}.  
\end{align*}

\citet{wang2020understanding} showed that when the sample size $B$ goes to infinity, 
$\mathcal{L}_{\text{Contrastive}}$ can be decomposed into two parts. The first part 
is minimized if and only if $\bfZ$ is perfectly aligned (alignment loss), while if perfectly uniform encoders exist, they form the exact minimizers of the second part (uniformity loss).

\subsection{Non-contrastive Learning}
Given a matrix $\bfZ$, We define the total coding rate (TCR) \citep{cover1999elements, ma2007segmentation} loss  as:
\begin{equation}
\label{eq:tcr}
\mathcal{L}_{\text{TCR}}(\bfZ) = -\frac{1}{2} \log \operatorname{det} \left( \mathbf{I}_d + \frac{d}{B \epsilon^2} \mathbf{Z} \mathbf{Z}^{\top} \right).   
\end{equation}

Here $-(d+B) \mathcal{L}_{\text{TCR}}(\bfZ)$ 
captures the minimal number of bits for encoding $\bfZ$ up to $\epsilon$ distortion~ \citep{cover1999elements, ma2007segmentation}.

For the non-contrastive learning setting, we hope to maximize the total coding rate for the feature embeddings. Given that both the online and target networks are approximations of the feature map \(\boldsymbol{f}\), we can use the cross-covariance between \(\mathbf{Z}_1\) and \(\mathbf{Z}_2\) to approximate \(\mathbf{Z}\mathbf{Z}^\top\), resulting in the maximal entropy coding (MEC) loss~\citep{liu2022self}:
\begin{equation}
\label{eq:mec}
\begin{aligned}
 \mathcal{L}_{\text{MEC}} &= -\mu \log \operatorname{det}\left(\mathbf{I}_{d}+\frac{d}{B \epsilon^2} \mathbf{Z}_1 \mathbf{Z}_2^\top\right) \\&= -\mu \operatorname{tr}\left(\operatorname{log}\left(\mathbf{I}_{d}+\frac{d}{B \epsilon^2} \mathbf{Z}_1 \mathbf{Z}_2^\top\right)\right).  \\
\end{aligned}
\end{equation}

As discussed in~\citep{liu2022self},
MEC loss is a natural and general loss that subsumes many non-contrastive learning methods, including SimSiam~\citep{gao2021simcse}, BYOL~\citep{grill2020bootstrap}, Barlow Twins~\citep{zbontar2021barlow}, and VICReg~\citep{bardes2021vicreg}. 

\subsection{Matrix Information-Theoretic Quantities}
\label{sec:matrix-divergence}
Unlike Shannon entropy for random variables, the definition of matrix entropy is not necessarily unique. Specifically, within the domain of quantum information theory, matrix entropy is typically confined to positive semi-definite Hermitian matrices that possess a unit trace. However, our paper aims to extend this definition by incorporating positive semi-definite matrices that are not constrained by unit trace prerequisites, because the matrices may have various traces during optimization.

\begin{definition}[Matrix entropy for positive semi-definite matrices]
\label{def:matrix-entropy}
For a positive semi-definite matrix \( \mathbf{A} \in \R^{n \times n} \), the matrix entropy is defined as:
\begin{equation*}
\begin{aligned}
\operatorname{ME}(\mathbf{A}) &= -\operatorname{tr}(\mathbf{A} \log \mathbf{A}) + \operatorname{tr}(\mathbf{A}) \\&=-\sum_{i} \lambda_i \log \lambda_i + \sum_i \lambda_i.      
\end{aligned}
\end{equation*}
where \(\log\) denotes the principal matrix logarithm~\citep{higham2008functions}, and $\lambda_i$ denote the eigenvalues of matrix $\mathbf{A}$. For zero eigenvalues, we define $\log (0):= 0$. Our proposed matrix entropy generalizes the definition of von Neumann entropy~\citep{john1932mathematische,witten2020mini}, which is restricted to positive semi-definite matrices with unit trace. 
\end{definition}

\begin{definition}[Matrix KL divergence for positive semi-definite matrices~\citep{amari2014information}]
For two positive semi-definite matrices \( \mathbf{P}, \mathbf{Q} \in \R^{n \times n} \), the matrix KL divergence is defined as:
\begin{equation}
    \operatorname{MKL}(\mathbf{P} || \mathbf{Q}) = \operatorname{tr}(\mathbf{P} \log \mathbf{P} - \mathbf{P} \log \mathbf{Q} - \mathbf{P} + \mathbf{Q}).
\end{equation}
\end{definition}

This definition of matrix KL divergence generalizes the definition of quantum (von Neumann) KL divergence (relative entropy)~\citep{john1932mathematische,witten2020mini, bach2022information}.

Similar to classical cross-entropy based on Shannon information theory, we introduce the matrix cross-entropy as below:
\begin{definition}[Matrix Cross-Entropy (MCE) for positive semi-definite matrices]
For two positive semi-definite matrices \( \mathbf{P}, \mathbf{Q} \in \R^{n \times n} \), the matrix cross-entropy is defined as:
\begin{equation} 
\label{eq:mce}
\begin{aligned}
\mathrm{MCE}(\mathbf{P} , \mathbf{Q}) &= \operatorname{MKL}(\mathbf{P} || \mathbf{Q}) + \operatorname{ME}(\mathbf{P}) \\&= \operatorname{tr}(-\mathbf{P} \log \mathbf{Q} + \mathbf{Q}).
\end{aligned}
\end{equation}
\end{definition}

\begin{lemma}
\label{lem:psd-1}
For any non-zero matrix \( \mathbf{A} \in \mathbb{R}^{m \times n}\), \( \mathbf{A}\mathbf{A}^\top \) is positive semi-definite.
\end{lemma}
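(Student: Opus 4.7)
The plan is to verify the two defining properties of positive semi-definiteness for the symmetric real matrix $\mathbf{A}\mathbf{A}^\top \in \mathbb{R}^{m \times m}$: that it equals its own transpose, and that it induces a non-negative quadratic form on $\mathbb{R}^m$.

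First I would observe symmetry directly from the transpose identity $(\mathbf{A}\mathbf{A}^\top)^\top = (\mathbf{A}^\top)^\top \mathbf{A}^\top = \mathbf{A}\mathbf{A}^\top$, which needs no further argument. Next, for an arbitrary vector $\mathbf{x} \in \mathbb{R}^m$, I would rewrite the quadratic form using associativity of matrix multiplication,
\[
\mathbf{x}^\top \mathbf{A}\mathbf{A}^\top \mathbf{x} \;=\; (\mathbf{A}^\top \mathbf{x})^\top (\mathbf{A}^\top \mathbf{x}) \;=\; \|\mathbf{A}^\top \mathbf{x}\|_2^2 \;\geq\; 0,
\]
with equality holding precisely when $\mathbf{A}^\top \mathbf{x} = \mathbf{0}$. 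Combining symmetry with non-negativity of the quadratic form gives positive semi-definiteness by definition.

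There is essentially no obstacle here; the result is a textbook fact and the only thing to be careful about is the role of the ``non-zero'' hypothesis on $\mathbf{A}$. Since the zero matrix is already trivially positive semi-definite, the hypothesis is not needed for semi-definiteness itself; I would note that its purpose is rather to guarantee $\mathbf{A}\mathbf{A}^\top \neq \mathbf{0}$, so that $\mathbf{A}\mathbf{A}^\top$ has at least one strictly positive eigenvalue (pick any $\mathbf{x}$ with $\mathbf{A}^\top \mathbf{x} \neq \mathbf{0}$, which exists because $\mathbf{A}^\top \neq \mathbf{0}$). This extra remark will be useful downstream when the lemma is invoked to justify forming $\log(\mathbf{A}\mathbf{A}^\top)$ and applying the matrix entropy, KL, and cross-entropy definitions of Section~\ref{sec:matrix-divergence} to covariance-type matrices $\mathbf{Z}_i\mathbf{Z}_i^\top$.
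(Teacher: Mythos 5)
Your argument is correct and coincides with the paper's: both prove positive semi-definiteness by rewriting the quadratic form as $\mathbf{x}^\top \mathbf{A}\mathbf{A}^\top \mathbf{x} = \|\mathbf{A}^\top \mathbf{x}\|_2^2 \geq 0$. Your additional remarks on symmetry and on the role of the ``non-zero'' hypothesis are sound but not part of the paper's (shorter) proof.
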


If not specified, we present proofs in the Appendix~\ref{sec:proofs}.
We employ matrix KL divergence and matrix cross-entropy (MCE) as canonical metrics for comparing positive semi-definite matrices since they have strong minimization properties, just like the classical KL divergence and cross-entropy in Shannon information theory (MKL and MCE are also asymmetric just like the classical ones).  

\begin{proposition}[Minimization property of matrix KL divergence]
\label{prop:optimal-point-matrix-kl}
For two positive semi-definite matrices $\mathbf{P}, \mathbf{Q} \in \mathbb{R}^{n \times n}$, the matrix $\mathbf{Q}$ that minimizes this divergence when $\mathbf{P}$ is fixed and \( \mathbf{Q} \) varies over all positive semi-definite matrices is \( \mathbf{P} \) itself, i.e.,
\begin{equation}
\operatorname{argmin}_{\mathbf{Q} \succ 0} \operatorname{MKL}(\mathbf{P} || \mathbf{Q}) = \mathbf{P}.    
\end{equation}
\end{proposition}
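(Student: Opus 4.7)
The plan is to reduce the minimization problem to showing that $\operatorname{MKL}(\mathbf{P}\|\mathbf{Q}) \geq 0$ with equality iff $\mathbf{P} = \mathbf{Q}$, and then invoke the fact that $\operatorname{MKL}(\mathbf{P}\|\mathbf{P}) = 0$ by direct substitution into the definition. First I would isolate the $\mathbf{Q}$-dependent part: since the $\operatorname{tr}(\mathbf{P}\log\mathbf{P})$ term is constant in $\mathbf{Q}$, the problem is equivalent to minimizing $g(\mathbf{Q}) := \operatorname{tr}(\mathbf{Q} - \mathbf{P}\log\mathbf{Q})$ over $\mathbf{Q} \succ 0$.

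Next I would invoke Klein's trace inequality (the matrix/operator analog of the tangent-line inequality for convex functions): for any strictly convex, continuously differentiable $f$ and positive definite matrices $\mathbf{A}, \mathbf{B}$,
\begin{equation*}
\operatorname{tr}\bigl(f(\mathbf{A}) - f(\mathbf{B}) - f'(\mathbf{B})(\mathbf{A} - \mathbf{B})\bigr) \geq 0,
\end{equation*}
with equality if and only if $\mathbf{A} = \mathbf{B}$. Applying this with the strictly convex scalar function $f(x) = x\log x - x$ (so that $f'(x) = \log x$) and with $\mathbf{A} = \mathbf{P}$, $\mathbf{B} = \mathbf{Q}$, the left-hand side expands, after cancellation, to exactly $\operatorname{tr}(\mathbf{P}\log\mathbf{P} - \mathbf{P}\log\mathbf{Q} - \mathbf{P} + \mathbf{Q}) = \operatorname{MKL}(\mathbf{P}\|\mathbf{Q})$. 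Thus $\operatorname{MKL}(\mathbf{P}\|\mathbf{Q}) \geq 0$ with equality iff $\mathbf{P} = \mathbf{Q}$, and since $\operatorname{MKL}(\mathbf{P}\|\mathbf{P}) = 0$, the unique minimizer over $\mathbf{Q} \succ 0$ is $\mathbf{Q} = \mathbf{P}$.

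As a sanity check and an alternative route I would also verify the first-order optimality condition directly. Using the Fréchet derivative formula $D\log(\mathbf{Q})[\mathbf{H}] = \int_0^\infty (\mathbf{Q} + s\mathbf{I})^{-1}\mathbf{H}(\mathbf{Q} + s\mathbf{I})^{-1}\,ds$ and trace cyclicity, the gradient condition $\nabla_{\mathbf{Q}} g(\mathbf{Q}) = 0$ becomes $\mathbf{I} = \int_0^\infty (\mathbf{Q} + s\mathbf{I})^{-1}\mathbf{P}(\mathbf{Q} + s\mathbf{I})^{-1}\,ds$; substituting $\mathbf{Q} = \mathbf{P}$ and evaluating the integral eigenvalue-wise via $\int_0^\infty \lambda(\lambda+s)^{-2}\,ds = 1$ confirms the identity. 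Strict convexity of $g$ (which follows from operator convexity of $-\log$) gives uniqueness.

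The main obstacle is the non-commutativity of $\mathbf{P}$ and $\mathbf{Q}$, which prevents a naive eigenvalue-by-eigenvalue reduction of Klein's inequality; the standard proof handles this by working in the joint spectral decomposition of $\mathbf{P}$ and $\mathbf{Q}$ separately and applying the scalar tangent inequality entrywise against the transition matrix of eigenbasis overlaps, which is why I would prefer to cite Klein's inequality rather than re-derive it. A minor technical point is the domain mismatch: the argmin is over $\mathbf{Q} \succ 0$ while $\mathbf{P}$ is only assumed positive semi-definite, so if $\mathbf{P}$ is singular the infimum is approached but not attained by any strictly positive definite $\mathbf{Q}$; this is easily resolved by either (i) restricting the statement to strictly positive definite $\mathbf{P}$, or (ii) taking limits $\mathbf{Q} = \mathbf{P} + \varepsilon\mathbf{I} \to \mathbf{P}$ and using continuity of $g$, together with the convention $0\log 0 := 0$ from Definition~\ref{def:matrix-entropy}.
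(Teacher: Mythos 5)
Your proof is correct and takes a genuinely different and more rigorous route than the paper's. The paper computes a formal gradient $\nabla_{\mathbf{Q}} \operatorname{MKL}(\mathbf{P}\,\|\,\mathbf{Q}) = -\mathbf{P}\mathbf{Q}^{-1} + \mathbf{I}$, sets it to zero to get $\mathbf{Q} = \mathbf{P}$, and then asserts the Hessian is $\mathbf{P}\mathbf{Q}^{-2}$ and that this is positive semi-definite because each factor is. That calculation is informal: the derivative of $\operatorname{tr}(\mathbf{P}\log\mathbf{Q})$ is $\mathbf{P}\mathbf{Q}^{-1}$ only when $\mathbf{P}$ and $\mathbf{Q}$ commute (otherwise one must use the integral representation you cite), and the product of two non-commuting PSD matrices need not be PSD, nor is the Hessian of a matrix-to-scalar map a matrix of that form. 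Your route via Klein's trace inequality avoids both issues: applying it with $f(x) = x\log x - x$, $f'(x) = \log x$, $\mathbf{A} = \mathbf{P}$, $\mathbf{B} = \mathbf{Q}$ gives, after expanding and using trace cyclicity,
\begin{equation*}
\operatorname{tr}\bigl(\mathbf{P}\log\mathbf{P} - \mathbf{P} - \mathbf{Q}\log\mathbf{Q} + \mathbf{Q} - (\log\mathbf{Q})(\mathbf{P}-\mathbf{Q})\bigr) = \operatorname{MKL}(\mathbf{P}\,\|\,\mathbf{Q}) \geq 0,
\end{equation*}
with equality iff $\mathbf{P} = \mathbf{Q}$, which is exactly the global-minimization claim. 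This handles the non-commutativity head-on rather than implicitly assuming it away, and as a bonus it gives you the nonnegativity of $\operatorname{MKL}$ directly. Your remark about the domain mismatch (argmin taken over $\mathbf{Q} \succ 0$ but $\mathbf{P}$ only assumed PSD) flags a real technicality the paper glosses over; your proposed fix by restricting to $\mathbf{P} \succ 0$ or taking limits $\mathbf{Q} = \mathbf{P} + \varepsilon\mathbf{I}$ is the right one. The first-order sanity check via the Fréchet derivative of $\log$ is also sound and is what the paper's gradient formula should have been.
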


\begin{proposition}[Minimization property of matrix cross-entropy]
\label{prop:optimal-point-mce}
Let $\mathbf{P}, \mathbf{Q} \in \mathbb{R}^{n \times n}$ be positive semi-definite matrices. Then, the matrix $\mathbf{Q}$ that minimizes the matrix cross-entropy $\mathrm{MCE}(\mathbf{P}, \mathbf{Q})$ when $\mathbf{P}$ is fixed and $\mathbf{Q}$ varies over all positive semi-definite matrices is $\mathbf{P}$ itself, i.e.,
\begin{equation}
\operatorname{argmin}_{\mathbf{Q} \succ 0} \mathrm{MCE}(\mathbf{P}, \mathbf{Q}) = \mathbf{P}.
\end{equation}
\end{proposition}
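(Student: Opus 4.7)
The plan is to reduce the minimization of $\mathrm{MCE}(\mathbf{P},\mathbf{Q})$ over $\mathbf{Q}\succ 0$ directly to Proposition~\ref{prop:optimal-point-matrix-kl}, exploiting the decomposition built into the very definition of matrix cross-entropy. Specifically, I would invoke the identity
\[
\mathrm{MCE}(\mathbf{P},\mathbf{Q}) \;=\; \operatorname{MKL}(\mathbf{P}\,\|\,\mathbf{Q}) + \operatorname{ME}(\mathbf{P}),
\]
which follows immediately from \eqref{eq:mce} together with Definition~\ref{def:matrix-entropy}. Since $\mathbf{P}$ is fixed, the term $\operatorname{ME}(\mathbf{P})$ is a constant with respect to $\mathbf{Q}$, so the two optimization problems $\operatorname{argmin}_{\mathbf{Q}\succ 0}\mathrm{MCE}(\mathbf{P},\mathbf{Q})$ and $\operatorname{argmin}_{\mathbf{Q}\succ 0}\operatorname{MKL}(\mathbf{P}\,\|\,\mathbf{Q})$ have the same minimizers. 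Applying Proposition~\ref{prop:optimal-point-matrix-kl} then yields $\mathbf{Q}^{\star}=\mathbf{P}$.

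As a sanity check, and in case a referee prefers a self-contained derivation, I would also sketch the direct first-order argument using the closed form $\mathrm{MCE}(\mathbf{P},\mathbf{Q})=\operatorname{tr}(-\mathbf{P}\log\mathbf{Q}+\mathbf{Q})$. The plan is to compute the differential $d\,\mathrm{MCE}=\operatorname{tr}\bigl((\mathbf{I}-\mathbf{Q}^{-1}\mathbf{P})\,d\mathbf{Q}\bigr)$ via the standard identity $d\log \mathbf{Q}=\mathbf{Q}^{-1}d\mathbf{Q}$ valid on positive-definite matrices (see \citep{higham2008functions}), deduce that any interior critical point satisfies $\mathbf{Q}=\mathbf{P}$, and then confirm global optimality via joint convexity of $(\mathbf{P},\mathbf{Q})\mapsto \operatorname{MKL}(\mathbf{P}\,\|\,\mathbf{Q})$, which makes $\mathbf{Q}\mapsto\mathrm{MCE}(\mathbf{P},\mathbf{Q})$ convex in $\mathbf{Q}$ and therefore guarantees that the unique stationary point is the global minimizer.

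The only real subtlety, and the thing I would be most careful about, is the boundary behavior: the statement restricts $\mathbf{Q}$ to the open cone $\mathbf{Q}\succ 0$ so that $\log \mathbf{Q}$ is well-defined, whereas $\mathbf{P}$ is only assumed positive semi-definite. If $\mathbf{P}$ is singular, one should clarify that $\mathbf{P}=\mathbf{Q}^{\star}$ is interpreted as an infimum approached on the boundary; otherwise, with $\mathbf{P}\succ 0$ the minimum is attained. This boundary issue is the main obstacle to making the statement fully rigorous, but it is easily handled by either restricting both matrices to $\succ 0$ or by a limiting argument with $\mathbf{P}_{\varepsilon}=\mathbf{P}+\varepsilon\mathbf{I}$ and then sending $\varepsilon\to 0^{+}$, using continuity of the trace functionals involved.
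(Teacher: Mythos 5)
Your primary argument — decomposing $\mathrm{MCE}(\mathbf{P},\mathbf{Q}) = \operatorname{MKL}(\mathbf{P}\,\|\,\mathbf{Q}) + \operatorname{ME}(\mathbf{P})$, noting that $\operatorname{ME}(\mathbf{P})$ is constant in $\mathbf{Q}$, and invoking Proposition~\ref{prop:optimal-point-matrix-kl} — is correct and takes a genuinely different route from the paper. The paper instead repeats the direct first-order calculation (compute $\partial\,\mathrm{MCE}/\partial\mathbf{Q} = -\mathbf{P}\mathbf{Q}^{-1}+\mathbf{I}$, set to zero, check the Hessian), which is essentially the same computation already done for Proposition~\ref{prop:optimal-point-matrix-kl}. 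Your reduction is cleaner: it avoids redoing the calculus, makes transparent that the two propositions have identical content once the constant $\operatorname{ME}(\mathbf{P})$ is peeled off (this decomposition is literally built into the definition in Equation~\ref{eq:mce}), and is less error-prone than differentiating $\operatorname{tr}(\mathbf{P}\log\mathbf{Q})$ directly, which in general requires care because $\mathbf{P}$ and $\mathbf{Q}$ need not commute. Your secondary sanity-check argument coincides with what the paper actually does, so you have both routes covered.

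Your caveat about boundary behavior is a real observation that the paper silently elides: the statement assumes $\mathbf{P}$ is merely positive semi-definite but takes the argmin over $\mathbf{Q}\succ 0$, so when $\mathbf{P}$ is singular the claimed minimizer $\mathbf{Q}^{\star}=\mathbf{P}$ lies on the boundary of the feasible set and the infimum is only approached, not attained. (The same issue afflicts Proposition~\ref{prop:optimal-point-matrix-kl}.) Your proposed fixes — restrict to $\mathbf{P}\succ 0$, or regularize with $\mathbf{P}+\varepsilon\mathbf{I}$ and pass to the limit — are both standard and correct, and noting this explicitly is an improvement over the paper's proof.
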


\paragraph{Illustrative example. }

Consider a batch size \( B = 2 \) with two augmentation views. Let the representation matrices be \(\mathbf{Z}_1 = [\mathbf{a}_1, \mathbf{b}_1] \in \mathbb{R}^{2 \times 2}\) for the first view, and \(\mathbf{Z}_2 = [\mathbf{a}_2, \mathbf{b}_2] \in \mathbb{R}^{2 \times 2}\) for the second view. Suppose \(\mathbf{a}_1 = (1, 0)^{\top}\) and \(\mathbf{a}_2 = (0.8, 0.6)^{\top}\).

Consider two cases:

\begin{enumerate}
    \item \(\mathbf{b}_1 = (0, 1)^{\top}\) and \(\mathbf{b}_2 = (0.6, 0.8)^{\top}\).
    \item \(\mathbf{b}_1 = (0.6, 0.8)^{\top}\) and \(\mathbf{b}_2 = (0, 1)^{\top}\).
\end{enumerate}

In both cases, the typical alignment loss (e.g., BYOL-type MSE loss, \(\|\mathbf{a}_1 - \mathbf{a}_2\|^2 + \|\mathbf{b}_1 - \mathbf{b}_2\|^2\)) yields a value of 0.8. However, analyzing the covariance matrices \(\mathbf{Z}_1\mathbf{Z}_1^{\top}\) and \(\mathbf{Z}_2\mathbf{Z}_2^{\top}\) reveals more information:

\begin{itemize}
    \item For Case 1:
    \[
    \mathbf{Z}_1 \mathbf{Z}_1^{\top} = \begin{bmatrix} 1 & 0 \\ 0 & 1 \end{bmatrix}, \quad \mathbf{Z}_2 \mathbf{Z}_2^{\top} = \begin{bmatrix} 1 & 0.96 \\ 0.96 & 1 \end{bmatrix}
    \]
    Here $\operatorname{MKL}(\mathbf{Z}_1 \mathbf{Z}_1^{\top} \|\mathbf{Z}_2 \mathbf{Z}_2^{\top}) = 2.55$.
    \item For Case 2:
     $$
     \mathbf{Z}_1 \mathbf{Z}_1^{\top} = \begin{bmatrix} 1.36 & 0.48 \\ 0.48 & 0.64 \end{bmatrix}, \quad \mathbf{Z}_2 \mathbf{Z}_2^{\top} = \begin{bmatrix} 0.64 & 0.48 \\ 0.48 & 1.36 \end{bmatrix}
    $$
    Here $\operatorname{MKL}(\mathbf{Z}_1 \mathbf{Z}_1^{\top} \|\mathbf{Z}_2 \mathbf{Z}_2^{\top}) = 0.60$.
\end{itemize}

Matrix information theory, suitable for handling covariance and Gram matrices, allows us to capture these nuanced differences, enabling a more comprehensive understanding of the data representations. Aligning the matrices \(\mathbf{Z}_1 \mathbf{Z}_1^{\top}\) and \(\mathbf{Z}_2 \mathbf{Z}_2^{\top}\) is beneficial because it can reveal richer training signals beyond the typical vector alignment loss. Even when the vector alignment loss (e.g., BYOL-type MSE loss) yields the same value, the matrix alignment loss, measured by the matrix KL divergence \(\operatorname{MKL}(\mathbf{Z}_1 \mathbf{Z}_1^{\top} || \mathbf{Z}_2 \mathbf{Z}_2^{\top})\), can vary significantly between different cases. This variation provides additional insights into the structural alignment of the representations, ensuring that the learned features capture more detailed and discriminative information about the underlying data distribution. 

\subsection{Effective Rank}
\label{sec:effective-rank}

\citet{roy2007effective} introduced the concept of effective rank, which provides a real-valued extension of the classical rank.

\begin{definition}[Effective rank~\citep{roy2007effective}]
\label{def:erank}
The effective rank of a non-all-zero $\mathbf{A} \in \mathbb{C}^{n \times n}$, denoted $\operatorname{erank}(\mathbf{A})$, is defined as
\begin{equation}
\operatorname{erank}(\mathbf{A}) \triangleq \exp \left\{\operatorname{H}\left(p_1, p_2, \ldots, p_n\right)\right\},
\end{equation}

where $p_i = \frac{\sigma_i}{\sum_{k=1}^{n} \sigma_k}$, $\{\sigma_i | i = 1,\cdots,n \}$ are the singular values of $\mathbf{A}$, and $\text{H}$ is the Shannon entropy defined as $\operatorname{H}\left(p_1, p_2, \ldots, p_n\right) = -\sum_{i=1}^n p_i \log p_i$, with the convention that $0 \log 0 \triangleq 0$.
\end{definition}

\section{On TCR and Matrix KL Divergence} \label{sec:matrix-uniformity}

As we mentioned previously, there are two interesting questions about 
TCR. First, it is not immediately obvious why it is similar to the uniformity loss in contrastive learning. Secondly, one cannot easily integrate matrix alignment loss to directly align the feature covariance matrices in its formulation. In this section, we try to address both problems by building the connection between TCR and MCE/MKL. 

Given a batch of \( B \) data points \(\{ x_i \}^B_{i=1}\), and their $l_2$ normalized representations \( \mathbf{Z} = [\boldsymbol{f}(x_1), \cdots, \boldsymbol{f}(x_B)]\in \R^{d\times B} \). We design the following loss function to pursue uniformity, resulting in the following $\lambda$-regularized ($\lambda \geq 0$) Uniformity-MCE loss, which is well-defined due to Lemma~\ref{lem:psd-1}:
\begin{equation}
\operatorname{MCE}\left(\frac{1}{d}\mathbf{I}_d + \lambda \mathbf{I}_d, \frac{1}{B}\mathbf{Z} \mathbf{Z}^{\top} + \lambda \mathbf{I}_d\right),   
\end{equation}
This MCE-based uniformity loss definition captures the distance of regularized covariance matrix $\bfZ\bfZ^\top$ to the regularized (scaled) identity matrix and  We intentionally introduce the additional regularizer $\lambda\geq 0$ here, because we can prove the closed-form relationship between TCR and MCE/MKL 
for specific $\lambda > 0$, as follows. 

\begin{theorem}[Main Theorem]
\label{thm:mce-tcr-md}
Given a batch of \( B \) data points \(\{ x_i \}^B_{i=1}\), and their $l_2$ normalized representations \( \mathbf{Z} = [\boldsymbol{f}(x_1), \cdots, \boldsymbol{f}(x_B)]\in \R^{d\times B} \).
Assume that \( \lambda = \frac{\epsilon^2}{d} > 0 \) for $\epsilon, d$ in TCR loss~(\ref{eq:tcr}). Then,
\begin{align}
&\operatorname{MCE}\left(\frac{1}{d}\mathbf{I}_d + \lambda \mathbf{I}_d, \frac{1}{B}\mathbf{Z} \mathbf{Z}^{\top} + \lambda \mathbf{I}_d\right)\nonumber
\\=&(1 + d \lambda)\left(-\log \lambda + 1 + 2\mathcal{L}_{\text{TCR}}(\bfZ) \right),
\end{align}
\begin{align}
&\operatorname{MKL}\left(\left.\frac{1}{d}\mathbf{I}_d + \lambda \mathbf{I}_d \,\right | \left | \,\frac{1}{B}\mathbf{Z} \mathbf{Z}^{\top} + \lambda \mathbf{I}_d\right.\right)\nonumber
\\=& (1 + d \lambda)\left(\log \frac{1 + d \lambda}{\lambda d} + 2 \mathcal{L}_{\text{TCR}}(\bfZ)\right).
\end{align}
\end{theorem}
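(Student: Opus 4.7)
The plan is a direct substitution into the definition of MCE from Equation~(\ref{eq:mce}) and of TCR from Equation~(\ref{eq:tcr}), followed by purely matrix-algebraic simplifications. First, I would rewrite the second argument as $\mathbf{Q} := \frac{1}{B}\mathbf{Z}\mathbf{Z}^\top + \lambda\mathbf{I}_d = \lambda(\mathbf{I}_d + \frac{1}{B\lambda}\mathbf{Z}\mathbf{Z}^\top)$, and exploit the fact that $\lambda\mathbf{I}_d$ commutes with every matrix so that the principal matrix logarithm splits as $\log\mathbf{Q} = (\log\lambda)\mathbf{I}_d + \log(\mathbf{I}_d + \frac{1}{B\lambda}\mathbf{Z}\mathbf{Z}^\top)$. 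Since the first argument $\mathbf{P} := \frac{1}{d}\mathbf{I}_d + \lambda\mathbf{I}_d = \frac{1+d\lambda}{d}\mathbf{I}_d$ is itself a scalar multiple of the identity, it factors out of the trace, and the classical identity $\operatorname{tr}\log \mathbf{A} = \log\det \mathbf{A}$ converts the non-scalar summand into $\log\det(\mathbf{I}_d + \frac{1}{B\lambda}\mathbf{Z}\mathbf{Z}^\top)$, which under the substitution $\lambda = \epsilon^2/d$ coincides with $-2\mathcal{L}_{\text{TCR}}(\mathbf{Z})$ up to the normalization convention used in Equation~(\ref{eq:tcr}).

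For the remaining $\operatorname{tr}(\mathbf{Q})$ contribution inside MCE, the $l_2$-normalization of the columns of $\mathbf{Z}$ gives $\operatorname{tr}(\mathbf{Z}\mathbf{Z}^\top) = \sum_i \|\boldsymbol{f}(x_i)\|_2^2 = B$, so that $\operatorname{tr}(\mathbf{Q}) = 1 + d\lambda$. Collecting the $-\log\lambda$ piece (from the scalar summand of $\log\mathbf{Q}$), the $\mathcal{L}_{\text{TCR}}$ piece (from the matrix summand), and this trace, then factoring out the common $(1+d\lambda)$, should yield the first claimed identity.

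For the MKL identity, the cleanest route is to invoke the defining relation $\operatorname{MKL}(\mathbf{P}\|\mathbf{Q}) = \operatorname{MCE}(\mathbf{P},\mathbf{Q}) - \operatorname{ME}(\mathbf{P})$, which is immediate from Equation~(\ref{eq:mce}). Since $\mathbf{P} = \frac{1+d\lambda}{d}\mathbf{I}_d$ has all $d$ eigenvalues equal to $\frac{1+d\lambda}{d}$, Definition~\ref{def:matrix-entropy} gives $\operatorname{ME}(\mathbf{P}) = (1+d\lambda)\bigl[1 - \log\frac{1+d\lambda}{d}\bigr]$. Subtracting this from the MCE formula and combining the logarithms $-\log\lambda$ and $+\log\frac{1+d\lambda}{d}$ into the single term $\log\frac{1+d\lambda}{d\lambda}$ produces the second identity.

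The main obstacle is justifying the additive split of the matrix logarithm: one uses that $\lambda\mathbf{I}_d$ and $\frac{1}{B}\mathbf{Z}\mathbf{Z}^\top$ commute (in fact they are simultaneously diagonalized in the eigenbasis of $\mathbf{Z}\mathbf{Z}^\top$), so $\log\mathbf{Q}$ can be computed eigenvalue-by-eigenvalue and the additive splitting drops out for free; beyond that, the proof is careful scalar bookkeeping with the factors $(1+d\lambda)$, $d$, and $\lambda$ to present the result in the claimed factored form.
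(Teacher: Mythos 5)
Your outline follows the same route as the paper's own proof: factor $\lambda$ out of $\mathbf{Q}$, split the matrix logarithm additively (which you correctly justify by the fact that $\lambda\mathbf{I}_d$ and $\tfrac{1}{B}\mathbf{Z}\mathbf{Z}^{\top}$ are simultaneously diagonalizable), invoke $\operatorname{tr}\log=\log\det$ to expose a TCR term, evaluate $\operatorname{tr}(\mathbf{Q})=1+d\lambda$ from the unit column norms, and obtain the MKL identity by subtracting $\operatorname{ME}(\mathbf{P})$ from the MCE. Your computation $\operatorname{ME}(\mathbf{P})=(1+d\lambda)\bigl[1-\log\tfrac{1+d\lambda}{d}\bigr]$ matches the paper's $\operatorname{tr}(\mathbf{P}\log\mathbf{P}-\mathbf{P})$ bookkeeping. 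So the strategy is the same as the paper's, and no genuinely different idea is introduced.

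However, you stop at ``collecting the pieces \ldots should yield the first claimed identity'' without carrying out the final arithmetic, and that step is not a formality. Since $\mathbf{P}=\tfrac{1+d\lambda}{d}\mathbf{I}_d$ is a scalar multiple of the identity, the multiplier that comes out of $\operatorname{tr}(\mathbf{P}\log\mathbf{Q})$ is the \emph{scalar} $\tfrac{1+d\lambda}{d}$, not $\operatorname{tr}(\mathbf{P})=1+d\lambda$. Carrying this through,
\[
-\operatorname{tr}(\mathbf{P}\log\mathbf{Q})=-\tfrac{1+d\lambda}{d}\bigl(d\log\lambda-2\mathcal{L}_{\text{TCR}}\bigr)=-(1+d\lambda)\log\lambda+\tfrac{2(1+d\lambda)}{d}\mathcal{L}_{\text{TCR}},
\]
so that $\operatorname{MCE}=(1+d\lambda)\bigl(-\log\lambda+1+\tfrac{2}{d}\mathcal{L}_{\text{TCR}}\bigr)$, which carries an extra $1/d$ on the TCR term relative to the statement. (A quick sanity check with $d=B=2$, $\mathbf{Z}=\mathbf{I}_2$, $\lambda=1$ gives $\operatorname{MCE}=3(1-\log\tfrac{3}{2})$, matching the $\tfrac{2}{d}$ version and not the stated one.) The paper's own proof appears to make the same slip in the substitution step, where $-\operatorname{tr}\bigl((\tfrac{1}{d}\mathbf{I}_d+\lambda\mathbf{I}_d)\log(\mathbf{I}_d+\tfrac{1}{\lambda B}\mathbf{Z}\mathbf{Z}^{\top})\bigr)$ is replaced by $2(1+d\lambda)\mathcal{L}_{\text{TCR}}$, i.e.\ it silently uses $\operatorname{tr}(\mathbf{P})$ as the prefactor instead of the scalar coefficient $\tfrac{1+d\lambda}{d}$. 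You should complete the arithmetic explicitly rather than asserting it ``should yield'' the claim; doing so will force you to either produce the correct constant or flag the inconsistency in the stated result, and the same $1/d$ propagates into the MKL identity since it is obtained from the MCE one by a constant shift.
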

Theorem~\ref{thm:mce-tcr-md} shows a deep connection between TCR and MCE/MKL. Indeed, every TCR loss can be transformed into an MCE/MKL loss of the regularized covariance matrix to the scaled identity matrix (but not vice-versa since MCE/MKL has two operands while TCR has only one, and MCE/MKL can also be used for matrix alignment loss introduced in Section~\ref{sec:matrix-alignment}). 

\paragraph{Proof sketch, see the full proof in Appendix~\ref{proof:mce-tcr-md}.}

\begin{proof}
\label{proof:mce-tcr-md-sketch}
For notational simplicity, let
\begin{equation}
\mathcal{L}_{\text{UMCE}}\triangleq \operatorname{MCE}\left(\frac{1}{d}\mathbf{I}_d + \lambda \mathbf{I}_d, \frac{1}{B}\mathbf{Z} \mathbf{Z}^{\top} + \lambda \mathbf{I}_d\right)
\end{equation}

Using the definition of MCE, we get:
\[
\begin{aligned}
 &\operatorname{MCE}\left(\frac{1}{d} \mathbf{I}_d+\lambda \mathbf{I}_d, \frac{1}{B} \mathbf{Z} \mathbf{Z}^{\top}+\lambda \mathbf{I}_d\right)
 \\&=\operatorname{tr}\left(-\left(\frac{1}{d} \mathbf{I}_d+\lambda \mathbf{I}_d\right) \log \left(\frac{1}{B} \mathbf{Z} \mathbf{Z}^{\top}+\lambda \mathbf{I}_d\right)\right)\\
 &\qquad +\operatorname{tr}\left(\frac{1}{B} \mathbf{Z} \mathbf{Z}^{\top}+\lambda \mathbf{I}_d\right),  
\end{aligned}
\]
Now, let us divide and multiply by $\lambda$ of the term $-\log \left(\frac{1}{B} \mathbf{Z} \mathbf{Z}^{\top}+\lambda \mathbf{I}_d\right)$:
\[
-\log \left(\frac{1}{B} \mathbf{Z} \mathbf{Z}^{\top}+\frac{\epsilon^2}{d} \mathbf{I}_d\right)=-\log \left(\lambda\left(\frac{1}{\lambda B} \mathbf{Z} \mathbf{Z}^{\top}+ \mathbf{I}_d\right)\right),
\]

Upon substitution and simplification, we get:
\[
\begin{aligned}
\mathcal{L}_{\text{UMCE}} &= -(1 + d \lambda)\log \lambda +2 (1 + d \lambda) \mathcal{L}_{\text{TCR}}+1+d \lambda\\
&= (1 + d \lambda) \left(-\log \lambda + 1 + 2 \mathcal{L}_{\text{TCR}}\right).
\end{aligned}
\]
This matches the expression given in the proposition for $\mathcal{L}_{\text{UMCE}}$. The other part of the theorem on $\operatorname{MKL}\left(\left.\frac{1}{d}\mathbf{I}_d + \lambda \mathbf{I}_d \,\right | \left | \,\frac{1}{B}\mathbf{Z} \mathbf{Z}^{\top} + \lambda \mathbf{I}_d\right.\right)$ can be proved similarly. 
\end{proof}

From Proposition~\ref{prop:optimal-point-matrix-kl}, Proposition~\ref{prop:optimal-point-mce}, and Theorem~\ref{thm:mce-tcr-md}, we have the following theorem.

\begin{theorem}[Minimization property of TCR loss]
\label{thm:minimize-tcr}
Given a batch of \( B \) data points \(\{ x_i \}^B_{i=1}\), and their \( l_2 \)-normalized representations \( \mathbf{Z} = [\boldsymbol{f}(x_1), \cdots, \boldsymbol{f}(x_B)]\in \mathbb{R}^{d\times B} \), the global and unique minimizer under the constraint $\| \mathbf{z}_i\|^2_2 = 1$, for $i \in \{1,2,\cdots, B\}$ of TCR loss is $\frac{1}{B}\mathbf{Z} \mathbf{Z}^{\top} = \frac{1}{d}\mathbf{I}_d$.
\end{theorem}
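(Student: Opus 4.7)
The plan is to reduce the constrained TCR minimization to the unconstrained matrix-KL minimization already settled in Proposition~\ref{prop:optimal-point-matrix-kl}, using the explicit identity of Theorem~\ref{thm:mce-tcr-md} as the bridge.

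First I would fix $\lambda = \epsilon^2/d > 0$ so that the MKL part of Theorem~\ref{thm:mce-tcr-md} applies verbatim and gives
\[
\operatorname{MKL}\bigl(\tfrac{1}{d}\mathbf{I}_d + \lambda \mathbf{I}_d \,\big\|\, \tfrac{1}{B}\mathbf{Z}\mathbf{Z}^{\top} + \lambda \mathbf{I}_d\bigr) = (1+d\lambda)\log\tfrac{1+d\lambda}{\lambda d} + 2(1+d\lambda)\,\mathcal{L}_{\text{TCR}}(\mathbf{Z}).
\]
Since the coefficient $2(1+d\lambda)$ is strictly positive and the additive term is independent of $\mathbf{Z}$, minimizing $\mathcal{L}_{\text{TCR}}(\mathbf{Z})$ over the feasible set $\{\mathbf{Z}\in\mathbb{R}^{d\times B}:\|\mathbf{z}_i\|_2=1,\ i=1,\ldots,B\}$ is equivalent to minimizing the left-hand MKL over the induced family of PSD matrices $\tfrac{1}{B}\mathbf{Z}\mathbf{Z}^{\top} + \lambda \mathbf{I}_d$.

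Next I would invoke Proposition~\ref{prop:optimal-point-matrix-kl}: when the second argument of MKL ranges over all PSD matrices, the unique minimizer (with value zero) is the first argument. Translating the equality $\tfrac{1}{B}\mathbf{Z}\mathbf{Z}^{\top}+\lambda \mathbf{I}_d = \tfrac{1}{d}\mathbf{I}_d+\lambda \mathbf{I}_d$ produces the candidate $\tfrac{1}{B}\mathbf{Z}\mathbf{Z}^{\top} = \tfrac{1}{d}\mathbf{I}_d$. To promote this unconstrained optimum to a constrained one I would verify feasibility: the unit-norm constraint forces $\operatorname{tr}(\tfrac{1}{B}\mathbf{Z}\mathbf{Z}^{\top})=1=\operatorname{tr}(\tfrac{1}{d}\mathbf{I}_d)$, which is consistent, and any unit-norm tight frame in $\mathbb{R}^d$ (for instance stacking $B/d$ copies of an orthonormal basis when $d\mid B$, or a general finite unit-norm tight frame for $B\ge d$) realizes this Gram matrix. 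Because the unconstrained optimum is attained inside the feasible set, it coincides with the constrained optimum, and uniqueness at the level of $\tfrac{1}{B}\mathbf{Z}\mathbf{Z}^{\top}$ is inherited directly from Proposition~\ref{prop:optimal-point-matrix-kl}.

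The main thing to be careful about is interpretive rather than technical: the data matrix $\mathbf{Z}$ itself cannot be unique (right-multiplication by any $B\times B$ orthogonal matrix preserves $\mathbf{Z}\mathbf{Z}^{\top}$), so the statement must be read as uniqueness of the covariance $\tfrac{1}{B}\mathbf{Z}\mathbf{Z}^{\top}$. Under this reading the proof is essentially a one-line reduction via Theorem~\ref{thm:mce-tcr-md} followed by Proposition~\ref{prop:optimal-point-matrix-kl}, supplemented only by the trace compatibility check above; I do not anticipate any further analytic obstacle.
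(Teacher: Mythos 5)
Your argument is correct, and it is in fact the route the paper itself advertises in the main text (``From Proposition~\ref{prop:optimal-point-matrix-kl}, Proposition~\ref{prop:optimal-point-mce}, and Theorem~\ref{thm:mce-tcr-md}, we have the following theorem''). The written proof in the appendix, however, is deliberately an ``alternative proof without resorting to other literature'': it diagonalizes $\mathbf{Z}\mathbf{Z}^\top$, notes that the unit-norm constraint pins the eigenvalue sum to $\sum_i \lambda_i = B$, and applies a Jensen/AM--GM step to the concave map $x \mapsto \log(1 + \tfrac{d}{B\epsilon^2}x)$ to conclude that $\prod_i(1 + \tfrac{d}{B\epsilon^2}\lambda_i)$ is maximized exactly when all $\lambda_i = B/d$, i.e.\ $\tfrac{1}{B}\mathbf{Z}\mathbf{Z}^\top = \tfrac{1}{d}\mathbf{I}_d$. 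What your reduction buys is conceptual economy and a clean statement of where the strictness of the optimum comes from (the affine, positive-slope relation of Theorem~\ref{thm:mce-tcr-md} plus the unique unconstrained stationary point in Proposition~\ref{prop:optimal-point-matrix-kl}), at the cost of needing the feasibility check you supply. What the paper's eigenvalue proof buys is self-containedness and a more direct view of the constraint's role: the fixed trace appears immediately as the conserved quantity for the Jensen step, whereas in your version it enters only through the final feasibility verification. Both proofs share the same implicit attainability hypothesis, namely $B \ge d$; for $B < d$ the covariance has rank at most $B$ and the stated minimizer cannot be realized, so the infimum is approached but not attained. Your reading of ``unique'' as uniqueness of the Gram matrix rather than of $\mathbf{Z}$ matches the paper's intent.
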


In other words,  the covariance matrix that minimizes the TCR loss is the (scaled) identity matrix. 

\section{Matrix Uniformity and Alignment}
\label{sec:uniformity-and-alignment}

Based on the discussions in Section \ref{sec:matrix-uniformity}, we know that TCR loss can be replaced (up to constant terms and factors) by the MCE loss of the (regularized) covariance matrix to the scaled identity matrix. 
However, if we directly use the covariance matrix of $\bfZ$, the optimization process might be sub-optimal, as $\bfZ$ is not empirically aligned to have zero mean. Fortunately, the next theorem states that even if we center the covariance matrix, it will still be aligned with the scaled identity matrix at the maximal effective rank and unit trace. 

\begin{theorem}
\label{thm:covariance-matrix-effective-rank}
Let $\mathbf{x}$ be a random vector with a distribution supported on the unit hypersphere $S^{d-1}$. If the centered covariance matrix of $\mathbf{x}$, denoted by $\mathbf{C}(\mathbf{x})$, has the maximal possible effective rank $d$ and a trace of at least one, then the expected value of $\mathbf{x}$ will be zero, and $\mathbf{C}(\mathbf{x})$ will equal $\frac{1}{d}\mathbf{I}_d$.
\end{theorem}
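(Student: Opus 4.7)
The plan is to exploit two simple observations: the unit-hypersphere constraint pins down the trace of the second moment matrix, and maximal effective rank pins down the eigenvalue profile. These two constraints together force $\mathbf{C}(\mathbf{x})$ to be a scaled identity, and along the way force the mean to vanish.

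First I would expand the centered covariance as
\begin{equation*}
\mathbf{C}(\mathbf{x}) = \mathbb{E}[\mathbf{x}\mathbf{x}^\top] - \mathbb{E}[\mathbf{x}]\mathbb{E}[\mathbf{x}]^\top,
\end{equation*}
and take the trace. Since $\mathbf{x}\in S^{d-1}$, we have $\mathbb{E}[\mathbf{x}^\top\mathbf{x}] = 1$, hence $\operatorname{tr}(\mathbb{E}[\mathbf{x}\mathbf{x}^\top]) = 1$, and therefore
\begin{equation*}
\operatorname{tr}(\mathbf{C}(\mathbf{x})) = 1 - \|\mathbb{E}[\mathbf{x}]\|_2^2 \leq 1.
\end{equation*}
Combined with the hypothesis $\operatorname{tr}(\mathbf{C}(\mathbf{x})) \geq 1$, this sandwich forces $\|\mathbb{E}[\mathbf{x}]\|_2 = 0$, so $\mathbb{E}[\mathbf{x}] = \mathbf{0}$ and $\operatorname{tr}(\mathbf{C}(\mathbf{x})) = 1$ exactly. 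This already gives the first conclusion.

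Next I would use the maximal effective rank assumption. Because $\mathbf{C}(\mathbf{x})$ is symmetric positive semi-definite, its singular values coincide with its eigenvalues $\lambda_1,\ldots,\lambda_d \geq 0$. By Definition~\ref{def:erank}, $\operatorname{erank}(\mathbf{C}(\mathbf{x})) = \exp\{H(p_1,\ldots,p_d)\}$ with $p_i = \lambda_i / \sum_k \lambda_k$. Since Shannon entropy on a $d$-point simplex is maximized uniquely by the uniform distribution, $\operatorname{erank}(\mathbf{C}(\mathbf{x})) = d$ forces $p_i = 1/d$ for all $i$, i.e.\ all eigenvalues are equal. Combined with the established trace value $\sum_i \lambda_i = 1$, this yields $\lambda_i = 1/d$ for every $i$, so $\mathbf{C}(\mathbf{x}) = \frac{1}{d}\mathbf{I}_d$ after diagonalization.

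The argument is essentially two linear-algebraic facts chained together, so there is no real obstacle; the only subtlety worth stating carefully is the uniqueness of the entropy maximizer on the simplex (which rules out degenerate spectra that might still give effective rank $d$). A brief appeal to strict concavity of $-p\log p$ or to the classical Jensen-inequality proof suffices for that step.
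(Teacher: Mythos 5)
Your proof is correct and takes essentially the same route as the paper's: both deduce $\operatorname{tr}(\mathbf{C}(\mathbf{x})) = 1 - \|\mathbb{E}[\mathbf{x}]\|_2^2$ from the unit-sphere constraint, sandwich it against the trace-$\geq 1$ hypothesis to kill the mean, and invoke maximal effective rank to force a flat spectrum, hence $\frac{1}{d}\mathbf{I}_d$. If anything, your write-up is slightly tighter in that it isolates the uniqueness of the entropy maximizer as the step justifying equal eigenvalues, whereas the paper asserts this more tersely.
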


To achieve matrix information-theoretic uniformity, we propose the following MCE-based uniformity loss, where $\mathbf{C}(\bfZ_1, \bfZ_2) = \frac{1}{B} \bfZ_1 \bfH_B \bfZ_2^{\top}$ (where \(\bfH_B = \bfI_B - \frac{1}{B} \mathbf{1_B 1_B}^{\top}\)) represents the centered sample covariance matrix for simplicity:
\begin{equation}
\label{eq:mkl-uniformity}
\begin{aligned}
\mathcal{L}_{\text{Matrix-Uniformity}}(\bfZ_1, \bfZ_2) &= \operatorname{MCE}\left(\frac{1}{d}\mathbf{I}_d, \bfC\left(\bfZ_1, \bfZ_2\right)\right).\\
\end{aligned}
\end{equation}

The next lemma states why $\bfH_B$ is the correct centering matrix to use. 
\begin{lemma}
\label{lem:covariance}
Let \(\bfZ_1, \bfZ_2 \in \R^{d \times B}\) where \(d\) is the dimensionality of the data and \(B\) is the number of samples. The cross-covariance matrix \(\bfC(\bfZ_1, \bfZ_2)\) can be expressed as:
\[
\bfC\left(\bfZ_1, \bfZ_2\right) = \frac{1}{B} \bfZ_1 \bfH_B \bfZ_2^{\top},
\]
where \(\bfH_B = \bfI_B - \frac{1}{B} \mathbf{1_B 1_B}^{\top}\) is the centering matrix.
\end{lemma}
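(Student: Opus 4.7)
The plan is to start from the elementwise definition of the sample cross-covariance and push the centering through into matrix form. Writing $\bfZ_1 = [\bfz_1^1, \dots, \bfz_1^B]$ and $\bfZ_2 = [\bfz_2^1, \dots, \bfz_2^B]$, I would first recall that the sample cross-covariance is
\[
\bfC(\bfZ_1,\bfZ_2) = \frac{1}{B}\sum_{i=1}^{B} (\bfz_1^i - \bar{\bfz}_1)(\bfz_2^i - \bar{\bfz}_2)^\top,
\]
where $\bar{\bfz}_k = \frac{1}{B}\sum_{i=1}^{B}\bfz_k^i = \frac{1}{B}\bfZ_k \mathbf{1}_B$ for $k\in\{1,2\}$.

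Next, I would rewrite the sample means in matrix form. The matrix whose columns are all copies of $\bar{\bfz}_k$ is $\bar{\bfz}_k \mathbf{1}_B^\top = \frac{1}{B}\bfZ_k \mathbf{1}_B \mathbf{1}_B^\top$. Therefore the column-centered data matrix is
\[
\bfZ_k - \frac{1}{B}\bfZ_k \mathbf{1}_B \mathbf{1}_B^\top = \bfZ_k\left(\bfI_B - \frac{1}{B}\mathbf{1}_B\mathbf{1}_B^\top\right) = \bfZ_k \bfH_B.
\]
Substituting this back into the elementwise sum and recognizing that the sum $\sum_i (\bfZ_1\bfH_B)_{\cdot i}(\bfZ_2\bfH_B)_{\cdot i}^\top$ is exactly the outer-product expansion of $(\bfZ_1\bfH_B)(\bfZ_2\bfH_B)^\top$, I obtain
\[
\bfC(\bfZ_1,\bfZ_2) = \frac{1}{B}(\bfZ_1 \bfH_B)(\bfZ_2 \bfH_B)^\top = \frac{1}{B}\bfZ_1 \bfH_B \bfH_B^\top \bfZ_2^\top.
\]

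The last step is to collapse the two centering factors into one. This is where the essential algebraic property of $\bfH_B$ enters: $\bfH_B$ is symmetric ($\bfH_B^\top = \bfH_B$) and idempotent ($\bfH_B^2 = \bfH_B$, since $\mathbf{1}_B^\top \mathbf{1}_B = B$ makes the cross terms cancel). Both facts follow from a direct one-line expansion, so I would just verify them inline. Applying $\bfH_B^\top \bfH_B = \bfH_B$ yields the claimed identity $\bfC(\bfZ_1,\bfZ_2) = \frac{1}{B}\bfZ_1\bfH_B\bfZ_2^\top$.

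This proof is entirely routine and I do not expect any real obstacle; the only subtle step is noticing that one $\bfH_B$ suffices on the right-hand side, which would be surprising to someone who did not remember idempotency. Thus the proof is essentially a short computation, and the write-up should be three short displayed equations plus the idempotency check.
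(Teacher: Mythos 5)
Your proof is correct and follows essentially the same route as the paper's: express centering as right-multiplication by $\bfH_B$, write the covariance as $\frac{1}{B}(\bfZ_1\bfH_B)(\bfZ_2\bfH_B)^\top$, and collapse $\bfH_B\bfH_B^\top$ to $\bfH_B$ via symmetry and idempotency. You are merely slightly more explicit in deriving the matrix form from the elementwise sum-of-outer-products definition, whereas the paper takes that step for granted.
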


For ease of optimization, a regularization term $\lambda \mathbf{I}_d$ may be added to this cross-covariance matrix to ensure it is non-singular. This adjustment aligns with TCR and MEC methods, differing mainly in mean normalization. An alternative approach is the auto-covariance uniformity loss $\sum_i \operatorname{MCE}\left(\frac{1}{d}\mathbf{I}_d, \bfC\left(\bfZ_i, \bfZ_i\right)\right)$, which is left for future exploration. 

\subsection{Matrix-SSL: Uniformity and Alignment}
\label{sec:matrix-alignment}

To directly pursue the alignment of representations in self-supervised learning, we propose the following loss function using the first-order alignment loss plus the matrix cross-entropy (MCE) between two covariance matrices: 
\begin{equation}
\label{eq:mkl-alignment}
\begin{aligned}
\mathcal{L}_{\text{Matrix-Alignment}}(\bfZ_1, \bfZ_2) &= -\operatorname{tr} \left(\bfC(\bfZ_1, \bfZ_2)\right)+ \\&\gamma \cdot \operatorname{MCE}\left(\mathbf{C}(\mathbf{Z}_1, \mathbf{Z}_1), \mathbf{C}(\mathbf{Z}_2, \mathbf{Z}_2)\right).\\
\end{aligned}
\end{equation}
\textbf{Discussion.} When the stop-gradient technique~\citep{gao2021simcse} is utilized on the target branch $\mathbf{Z}_1$, optimizing the MCE alignment loss is the same as optimizing the matrix KL divergence, since $\operatorname{MCE}(\mathbf{P}, \mathbf{Q}) = \operatorname{MKL}(\mathbf{P} || \mathbf{Q}) + \operatorname{ME}(\mathbf{P})$. We think this can partially answer the effectiveness of stop-gradient (details can be found in Appendix~\ref{sec:more-experiment-details}). 

As we have presented an improved loss for uniformity before, now generalizing \citet{wang2020understanding}'s understanding of contrastive learning, we propose the matrix information-theoretic uniformity and alignment framework to improve self-supervised learning:
\begin{equation}
\label{eq:matrix-ssl}
\begin{aligned}
\mathcal{L}_{\text{Matrix-SSL}} &= \mathcal{L}_{\text{Matrix-Uniformity}} + \mathcal{L}_{\text{Matrix-Alignment}}.\\
\end{aligned}
\end{equation}

\section{Effective Rank and Dimensional Collapse}
\label{sec:rank}
\citet{zhuo2023towards} find an intriguing phenomenon that during the optimization course of self-supervised learning, the effective rank of the (empirical) feature covariance matrix consistently increases.  This phenomenon can be analyzed with the following proposition. 

\begin{proposition}
\label{prop:matrix-entropy-kl-erank-relationship}
Matrix KL divergence of the covariance matrix to the uniform distribution $\frac{1}{d}\bfI_d$ has the following equality with connection to effective rank.
\begin{equation}
    \begin{aligned}
    \operatorname{erank}\left(\frac{1}{B}\mathbf{Z}\mathbf{Z}^{\top}\right) 
&=
\frac{d}{\exp{\left( 
\operatorname{MKL}\left(\frac{1}{B}\mathbf{Z}\mathbf{Z}^{\top} \,||\, \frac{1}{d}\mathbf{I}_d\right)\right) } }   
    \\&= \exp{( \operatorname{VNE}(\frac{1}{B}\mathbf{Z}\mathbf{Z}^{\top}))}.
    \end{aligned}
\end{equation}
\end{proposition}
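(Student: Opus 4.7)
The plan is to reduce everything to a direct spectral computation, using the fact that under the standing assumption of $\ell_2$-normalized representations (used consistently in Section~\ref{sec:matrix-uniformity}), the matrix $\mathbf{A}\triangleq \frac{1}{B}\mathbf{Z}\mathbf{Z}^\top$ is positive semi-definite with $\operatorname{tr}(\mathbf{A}) = \frac{1}{B}\sum_i \|\boldsymbol{f}(x_i)\|_2^2 = 1$. First, I would record this normalization explicitly, so that the nonnegative eigenvalues $\{\lambda_i\}_{i=1}^d$ of $\mathbf{A}$ (which coincide with its singular values, since $\mathbf{A}$ is PSD) already form a probability distribution, i.e.\ $p_i = \lambda_i/\sum_k\lambda_k = \lambda_i$.

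Next, I would expand the matrix KL term using its definition. With $\mathbf{Q} = \tfrac{1}{d}\mathbf{I}_d$ we have $\log\mathbf{Q} = -\log(d)\,\mathbf{I}_d$, so
\begin{equation*}
\operatorname{MKL}\!\left(\mathbf{A}\,\|\,\tfrac{1}{d}\mathbf{I}_d\right)
= \operatorname{tr}(\mathbf{A}\log\mathbf{A}) + \log(d)\operatorname{tr}(\mathbf{A}) - \operatorname{tr}(\mathbf{A}) + \operatorname{tr}(\tfrac{1}{d}\mathbf{I}_d),
\end{equation*}
and plugging in $\operatorname{tr}(\mathbf{A}) = 1$ and $\operatorname{tr}(\tfrac{1}{d}\mathbf{I}_d) = 1$ collapses this to $\operatorname{tr}(\mathbf{A}\log\mathbf{A}) + \log(d) = -\operatorname{VNE}(\mathbf{A}) + \log(d)$.

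In parallel, I would unpack the effective rank: by Definition~\ref{def:erank} and the observation above,
\begin{equation*}
\operatorname{erank}(\mathbf{A}) = \exp\!\bigl(H(\lambda_1,\dots,\lambda_d)\bigr) = \exp\!\Bigl(-\sum_i \lambda_i\log\lambda_i\Bigr) = \exp\!\bigl(\operatorname{VNE}(\mathbf{A})\bigr),
\end{equation*}
which is already the second equality in the proposition. The first equality then follows by substituting $\operatorname{VNE}(\mathbf{A}) = \log(d) - \operatorname{MKL}(\mathbf{A}\,\|\,\tfrac{1}{d}\mathbf{I}_d)$ and exponentiating.

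The main obstacle is not analytic but bookkeeping: one must be careful that the ``entropy'' appearing in $\operatorname{erank}$ agrees with the von Neumann entropy, which requires exactly the unit-trace normalization, and one must handle zero eigenvalues via the convention $0\log 0 := 0$ given in Definition~\ref{def:matrix-entropy} so that both $\operatorname{VNE}$ and $H$ are well-defined on the spectrum of a rank-deficient $\mathbf{A}$. Once these conventions are aligned, the proof is essentially a one-line identity for $\operatorname{tr}(\mathbf{A}\log\mathbf{A})$.
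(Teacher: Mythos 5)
Your proof is correct and follows essentially the same route as the paper's: expand $\operatorname{MKL}(\mathbf{A}\,\|\,\tfrac{1}{d}\mathbf{I}_d)$ using $\log(\tfrac{1}{d}\mathbf{I}_d) = -\log(d)\,\mathbf{I}_d$ and $\operatorname{tr}(\mathbf{A})=1$ to get $-\operatorname{VNE}(\mathbf{A})+\log d$, then identify $\operatorname{erank}(\mathbf{A}) = \exp(\operatorname{VNE}(\mathbf{A}))$ from the unit-trace normalization of the eigenvalues. Your added emphasis on the bookkeeping (unit-trace normalization making singular values a probability distribution, and the $0\log 0$ convention for rank-deficient $\mathbf{A}$) is a useful clarification that the paper leaves implicit, but it is not a different argument.
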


Proposition~\ref{prop:matrix-entropy-kl-erank-relationship} captures the closed-form relationship among effective rank and matrix information-theoretic quantities. Note the batch auto-correlation matrix is a positive semi-definite matrix with all of its diagonal 1. As we have mentioned earlier, many dimension-contrastive losses can be understood from the matrix information-theoretic uniformity viewpoint. As such, during training the matrix KL divergence (MCE) minimizes, thus $\frac{1}{B}\mathbf{Z}\mathbf{Z}^{\top}$ is anticipated to progressively align more with $\frac{1}{d}\mathbf{I}_d$. By the fact that $\frac{1}{d}\mathbf{I}_d$ achieves the maximal possible (matrix) entropy, the rank-increasing phenomenon \citep{zhuo2023towards} can be well understood. Thus we may treat the effective rank as an exact metric to measure the extent of the dimensional collapse.

Feature representations acquired through a deep neural network employing a cross-entropy (CE) loss optimized by stochastic gradient descent, are capable of attaining zero loss~\citep{du2018gradient} with arbitrary label assignments~\citep{zhang2021understanding}. A phenomenon known as neural collapse (NC)~\citep{papyan2020prevalence} is observed when training of the neural network continues beyond zero loss with CE. Based on this, we propose to use effective rank as a unified tool to investigate the difference between supervised, contrastive, and non-contrastive methods, more details can be found in Appendix \ref{sec:measuring-dimensional-collapse}.

\section{Experiments}
\label{sec:implementation-linear-evaluation}

\subsection{Experimental Setup}

\textbf{Experiment details.} In this section, we implement our proposed Matrix-SSL method for self-supervised learning tasks on ImageNet~\citep{deng2009imagenet} dataset\footnote{The code is available at \url{https://github.com/yifanzhang-pro/Matrix-SSL}.}. We use precisely the same data augmentation protocols and hyperparameters as previous baselines such as BYOL~\citep{grill2020bootstrap}, SimSiam~\citep{chen2021exploring} and MEC~\citep{liu2022self}, etc. We augment each image \underline{twice} to get two different views during each training iteration. 
Similar to MEC~\citep{liu2022self}, we select one branch of the Siamese network as the online network and the other branch as the target network, updating the parameters using the exponential moving average method instead of loss backward. The pseudo-code for Matrix-SSL is shown as Algorithm~\ref{algo:pseudo-code}.

\textbf{Model architectures.} We use ResNet50~\citep{DBLP:journals/corr/HeZRS15} without the last linear layer as the backbone encoder, whose output feature dimension is 2048. Then we use a three-layer MLP with BN(Batch Normalization)~\citep{DBLP:journals/corr/IoffeS15} and ReLU~\citep{Nair2010RectifiedLU} as the projector after the encoder, and the projector maintains the feature dimension to be 2048 through three layers. For the online network, we apply an extra two-layer MLP with BN~\citep{DBLP:journals/corr/IoffeS15} and ReLU~\citep{Nair2010RectifiedLU} with hidden dimension 512 and output dimension 2048. 

\textbf{Data augmentations.} Our augmentation protocol consists of random cropping, color jittering, color dropping (grayscale), left-right flipping, Gaussian blurring, and polarization.

\textbf{Optimization and hyperparameters.} For pre-training, we use SGD optimizer with $2048$ batch size, $10^{-5}$ weight decay, $0.9$ momentum, and $4.0$ base learning rate, which is scheduled by cosine decay learning rate scheduler~\citep{DBLP:journals/corr/LoshchilovH16a}, to optimize the online network over training process. 
For the momentum used for the exponential moving average process, it is set to be $0.996$ to $1$ scheduled by another cosine scheduler. As for linear evaluation, we use LARS optimizer~\citep{DBLP:journals/corr/abs-1708-03888} with $4096$ batch size, $0.9$ momentum, no weight decay, and $0.03$ base learning rate scheduled by cosine decay learning rate scheduler, to train the linear layer over 100 epochs, and report the performance of last epoch. 

\begin{algorithm}[!htb]
\SetAlgoLined
    \PyComment{$f$: encoder network} \\
    \PyComment{$B$: batch size} \\
    \PyComment{$\mathcal{L}_{\text{Matrix-Uniformity}}$: Matrix-Uniformity loss} \\
    \PyComment{$\mathcal{L}_{\text{Matrix-Alignment}}$: Matrix-Alignment loss} \\
    \PyComment{$\gamma$: weight ratio used in alignment loss} \\
    \PyCode{for $X$ in loader:} \\
    \Indp   
        \PyComment{augment a batch of $B$ images in $X$} \\
        \PyCode{$X_1$, $X_2$ = aug(X), aug(X)} \\
        \PyCode{}\\
        \PyComment{calculate $l_2$ normalized embeddings} \\
        \PyCode{$\mathbf{Z}_1$, $\mathbf{Z}_2$ = $f(X_1)$, $f(X_2)$} \\
        \PyCode{}\\
        \PyComment{calculate uniformity and alignment loss} \\
        \PyCode{uniformity\_loss = $\mathcal{L}_{\text{Matrix-Uniformity}}(\mathbf{Z}_1, \mathbf{Z_2})$}\\
        \PyCode{alignment\_loss = $\mathcal{L}_{\text{Matrix-Alignment}(\gamma)}(\mathbf{Z}_1, \mathbf{Z_2})$}\\
        \PyCode{}\\ 
        \PyComment{calculate loss} \\
         \PyCode{loss = uniformity\_loss + alignment\_loss}\\
        \PyCode{}\\
        
        \PyComment{optimization step} \\
        \PyCode{loss.backward()}\\
        \PyCode{optimizer.step()}\\
        
    \Indm 
\caption{PyTorch-style Pseudo-code for Matrix-SSL\label{algo:pseudo-code}}
\end{algorithm}

\subsection{Evaluation Results}

\textbf{Linear evaluation.} We follow the standard linear evaluation protocol~\citep{chen2020simple, grill2020bootstrap, chen2021exploring}. We freeze the parameters of the backbone encoder and then connect a linear classification layer after it, and train the linear layer in the supervised setting. During training, each image is augmented by random cropping, resizing to 224$\times$224, and random horizontal flipping. At test time, each image is resized to 256$\times$256 and center cropped to 224$\times$ 224.

The Linear evaluation of the Top-1 accuracy result when pre-trained with 100, 200, and 400 epochs on ImageNet ~\citep{deng2009imagenet} dataset was shown in Table~\ref{tab:results-linear}. Notice that we use ResNet50 backbone as default for a fair comparison. Matrix-SSL consistently outperforms baselines across various pre-training epochs. 

\begin{table}[ht]
\centering
\vspace{-4.7mm}
\caption{\textbf{Linear evaluation} results (Top-1 accuracy) on ImageNet dataset with different pre-training epochs using ResNet50 backbone. \textbf{Bold} means the best, \underline{underline} means the second.}
\vspace{1mm}
\begin{tabular}{cccc}
\toprule
\multirow{2}{*}{Method} & \multicolumn{3}{c}{Pre-training Epochs} \\
\cmidrule{2-4}
 & 100 & 200 & 400
   \\ \midrule
SimCLR        & $66.5$ & $68.3$ & $69.8$\\
MoCo v2       & $67.4$ & $69.9$ & $71.0$\\
BYOL          & $66.5$ & $70.6$ & $73.2$\\
SwAV       & $66.5$ & $69.1$ & $70.7$\\
SimSiam       & $68.1$ & $70.0$ & $70.8$\\
Barlow Twins   & $67.3$ & $70.2$ & $71.8$\\
VICReg        & $68.6$ & $-$ & $-$ \\
MEC           & \underline{$70.6$} & \underline{$71.9$} & $\underline{73.5}$ \\
\midrule
Matrix-SSL {\tiny (Ours)} & $\mathbf{71.1}$ & $\mathbf{72.3}$ & $\mathbf{73.6}$\\ \bottomrule \\
\vspace{-7.7mm}
\end{tabular}
\label{tab:results-linear}
\end{table}

\textbf{Transfer learning.} Following the common protocol of previous works~\citep{chen2020improved, chen2021exploring, liu2022self}, we finetune the pre-trained models on MS-COCO~\citep{MSCOCO2014Lin} object detection and instance segmentation tasks. Table~\ref{tab:results-coco-detection} and Table~\ref{tab:results-coco-instance-segmentation} summarize experiment results of baseline models and Matrix-SSL. The experiment showed that Matrix-SSL consistently outperformed the baselines. It is worth mentioning that Matrix-SSL was only pre-trained for 400 epochs, but it already performed better than all the baselines pre-trained for 800 epochs. For a fair comparison, we employ a standard 2-view augmentation for all methods, more augmentation views such as $2 + 6$ views used in SwAV~\citep{caron2020unsupervised}, $2 + 2$ views used in I-VNE+~\citep{kim2023vne}, and $200$ views used in EMP-SSL~\citep{tong2023emp} would lead to superior performance and have been theoretically justified \citep{allen2020towards}. 

\begin{table}[!htbp]
\centering
\vspace{-2.7mm}
\caption{\textbf{Transfer learning on object detection tasks.} We finetune models pre-trained on ImageNet, with the same experiment settings as SimSiam and MEC for a fair comparison.}
\vspace{1mm}
\begin{tabular}{cccc}
\toprule
Method & $\text{AP}_{50}$ & AP & $\text{AP}_{75}$ \\
\midrule
SimCLR      & $57.7$ & $37.9$ & $40.9$ \\
MoCo v2     & $58.9$ & $39.3$ & $42.5$ \\
BYOL        & $57.8$ & $37.9$ & $40.9$ \\
SwAV        & $58.6$ & $38.4$& $41.3$ \\
Barlow Twins & $59.0$ & $39.2$ & $42.5$ \\
SimSiam     & $59.3$ & $39.2$ & $42.1$ \\
VICReg      & - & $40.0$ & - \\
MEC         & \underline{$59.8$} & \underline{$39.8$} & \underline{$43.2$} \\
\midrule
Matrix-SSL {\tiny (Ours)} & $\mathbf{60.8}$ & $\mathbf{41.0}$ & $\mathbf{44.2}$ \\
\bottomrule
\end{tabular}
\label{tab:results-coco-detection}
\end{table}

\begin{table}[!tb]
\centering
\vspace{-2.7mm}
\caption{\textbf{Transfer learning on instance segmentation tasks.} Employing a similar setup as in the detection tasks, we finetune models pre-trained on ImageNet. \textbf{Bold} means the best, \underline{underline} means the second.}
\vspace{1mm}
\begin{tabular}{cccc}
\toprule
Method & $\text{AP}^{\text{mask}}_{50}$ & $\text{AP}^{\text{mask}}$ & $\text{AP}^{\text{mask}}_{75}$ \\
\midrule
SimCLR      & $54.6$ & $33.3$ & $35.3$ \\
MoCo v2     & $55.8$ & $34.4$ & $36.5$ \\
BYOL        & $54.3$ & $33.2$ & $35.0$ \\
SwAV        & $55.2$ & $33.8$ & $35.9$ \\
Barlow Twins & $56.0$ & $34.3$ & $36.5$ \\
SimSiam     & $56.0$ & $34.4$ & $36.7$ \\
VICReg      & - & - & $36.7$ \\
MEC         & \underline{$56.3$} & \underline{$34.7$} & \underline{$36.8$} \\
\midrule
Matrix-SSL {\tiny(ours)} & $\mathbf{57.5}$ & $\mathbf{35.6}$ & $\mathbf{38.0}$ \\
\bottomrule
\end{tabular}
\label{tab:results-coco-instance-segmentation}
\end{table}

\textbf{Semi-supervised learning.} In semi-supervised learning tasks, we noticed that SwAV~\citep{caron2020unsupervised}, BarlowTwins~\citep{zbontar2021barlow}, and MEC~\citep{liu2022self} all chose different experiment settings and hyperparameters for this task. For a fair comparison, we directly used the same evaluation protocol as MEC and conducted a comparison with semi-supervised learning following 100-epoch pre-training against MEC, since MEC has the best performance in all the baselines on the semi-supervised task. From Table~\ref{tab:semi-supervised-learning}, we found that we achieved a significant improvement over MEC in 1\% semi-supervised learning, and we are comparable to MEC in the 10\% task.

\begin{table}[thb]
\centering
\vspace{-2.7mm}
\tiny
\caption{Results on semi-supervised learning tasks.}
\vspace{1mm}
\label{tab:semi-supervised-learning}
\resizebox{\columnwidth}{!}{
\begin{tabular}{lllll}
\toprule
Method & 1\% Acc@1 & 1\% Acc@5 & 10\% Acc@1 & 10\% Acc@5 \\
\midrule
MEC & 44.442 & 71.430 & 63.918 & $\mathbf{86.270}$ \\
Matrix-SSL & $\mathbf{45.158}$ & $\mathbf{71.848}$ & $\mathbf{63.940}$ & 86.172 \\
\bottomrule
\end{tabular}
}
\end{table}
\subsection{Ablation Studies}
\textbf{Alignment loss ratio.} We first investigate the impact of different alignment loss ratios (i.e., the $\gamma$ in Eqn.~\ref{eq:matrix-ssl}) on performance. We chose the 100-epoch pre-training task for the ablations, and the results are summarized in Table \ref{table:ablation-ratio}. Interestingly, setting $\gamma=1$ achieves the best linear evaluation performance, so we set the ratio to $1$ as the default.
\begin{table}[!htbp]
\vspace{-4.7mm}
\centering
\caption{Ablations on linear probing (\%) with various $\gamma$.}
\label{table:ablation-ratio}
\vspace{1mm}
\resizebox{0.9\columnwidth}{!}{
\begin{tabular}{c|ccccccc}
\toprule
\( \gamma \) & 0 & 0.3 & 0.5 & 0.6 & 1 & 1.3 & 1.5 \\
\midrule
Acc. & 70.6 & 70.7 & 71.0 & 70.9 & \textbf{71.1} & 70.8 & 70.8 \\
\bottomrule
\end{tabular}
}
\end{table}
\begin{table}[htb]
\centering
\vspace{-3.7mm}
\small
\caption{Results of different Taylor expansion orders for linear evaluation results.}
\vspace{1mm}
\label{tab:ablation-taylor}
\begin{tabular}{@{}cccc@{}}
\toprule
Taylor expansion order & 3 & 4 & 5
\\ \midrule
 Top-1 accuracy & $70.9$ & $\mathbf{71.1}$ & $\underline{71.0}$ \\
\midrule
\end{tabular}
\vspace{-4.3mm}
\end{table}

\textbf{Taylor expansion order.} We investigat the effect of the Taylor expansion order of matrix logarithm implementation (which is well-defined according to Theorem~\ref{thm:taylorhall2013}) on linear evaluation tasks, We keep most of the settings unchanged, except the Taylor expansion order. The results are summarized in Table \ref{tab:ablation-taylor}. As shown in the table, we found that Matrix-SSL performs best when the Taylor expansion order is $4$ in this setting, so we chose $4$ as the default parameter.

\section{Matrix Cross-Entropy for Large Language Models}
\label{sec:language}

We further introduce representation learning into the language modeling regime and use the matrix cross-entropy loss to fine-tune large language models by considering how to incorporate the information within feature representations in designing the loss functions.

The main intuition behind our method is that the similarity among the representation vector of different words (tokens) can be utilized to address the \textbf{synonym phenomenon} and \textbf{polysemous phenomenon} within natural language. For example, ``Let \textbf{'s} think step by step'' should be similar to ``Let \textbf{us} think step by step''. The classical cross-entropy loss hasn't captured this intricate part.

Consider the target distribution $\mathbf{p}$ given by the training corpus (which is typically one-hot) and the output distribution $\mathbf{q}$ given by the output of the language model.
Suppose we have $l_2$ normalized representation vectors $\mathbf{e_i} \in \R^{d}$ (column vectors) for tokens $v_i, i \in [n]$, where $n$ is the vocabulary size. One could use LM head embeddings, word embeddings, or any other representation vectors of the models. In our experiments, we use the LM head embeddings as default.

For auto-regressive LLMs with tokens $k \in \{1,2,\cdots,K\}$, we define positive semi-definite matrices $\mathbf{P} \in \R^{d \times d}$ and $\mathbf{Q} \in \R^{d \times d}$ as below:
$$
\mathbf{P}^{(k)} = \sum_i \left( p_i^{(k)} \cdot \mathbf{e}_i \mathbf{e}_i^{\top} \right), \quad \mathbf{Q}^{(k)} = \sum_j \left( q_j^{(k)} \cdot \mathbf{e}_j \mathbf{e}_j^{\top} \right).
$$
Then we define the following loss as our objective (since $\operatorname{tr}\left( \mathbf{Q}^{(k)}\right)$ are constant):
\begin{equation}
\begin{aligned}
\mathcal{L}_{\text{Matrix-LLM}} &= \sum_k \operatorname{CE}(\mathbf{p}^{(k)}, \mathbf{q}^{(k)}) + \sum_k\operatorname{MCE}(\mathbf{P}^{(k)}, \mathbf{Q}^{(k)}) \\
&=-\sum_k \sum_i p_i^{(k)} \log q_i^{(k)} -\sum_k\operatorname{tr} (\mathbf{P}^{(k)} \operatorname{log} \mathbf{Q}^{(k)}). \\
\end{aligned} 
\end{equation}

\subsection{Experiments on Fine-Tuning LLMs}

\begin{table}[htb]
\centering
\vspace{-3.7mm}
\caption{Performance comparison of various models on GSM8K~\citep{cobbe2021training} and MATH~\citep{hendrycks2021measuring} dataset. MM denotes instruction fine-tuned with the MetaMathQA dataset~\citep{yu2023metamath}.}
\vspace{1mm}
\resizebox{\columnwidth}{!}{
\begin{tabular}{l|c|c|l}
\toprule
\small
\textbf{Model} & \textbf{Meth.} & \textbf{GSM8K} (\%) & \textbf{MATH} (\%)\\
\hline
Minerva 8B & CE & 16.2 & 14.1 \\
Minerva 62B & CE & 52.4 & 27.6 \\
Minerva 540B & CE & 58.8 & \textbf{33.6} \\
WizardMath 7B & RL & 54.9 & 10.7 \\
WizardMath 13B & RL & 63.9 & 14.0 \\
WizardMath 70B & RL & \textbf{81.6} & 22.7 \\
LLaMA2 70B & CE & 56.8 & 13.5 \\
MetaMath 7B & CE & 66.5 & 19.8 \\
Llemma 7B & CE & 36.4 & 18.0 \\
Llemma-MM 7B & CE & \underline{69.2} & \underline{30.0} \\
\midrule
Llemma-MM 7B & $\mathcal{L}_{\text{Matrix-LLM}}$ & \textbf{72.3} (\textbf{+3.1}) & \textbf{30.2} (\textbf{+0.2})\\
\bottomrule
\end{tabular}
}
\label{tab:models_benchmarks}
\vspace{-4.3mm}
\end{table}

\textbf{Training Pipeline. } We use Llemma-7B~\citep{azerbayev2023llemma} as the base model, which is continued pre-trained on the Proof-Pile-2 dataset~\citep{paster2023openwebmath} using the CodeLLaMA model~\citep{touvron2023llama}. We then use $\mathcal{L}_{\text{Matrix-LLM}}$ to fine-tune it on the MetaMath dataset~\citep{yu2023metamath}.

\textbf{Experimental Results. } We evaluated the performance of different models on the mathematical reasoning dataset GSM8K~\citep{cobbe2021training} and MATH dataset~\citep{hendrycks2021measuring}, using different loss functions and training methods. The results are shown in Table~\ref{tab:models_benchmarks}. We compared our results against baseline methods, including Minerva \citep{lewkowycz2022solving}, WizardMath~\citep{luo2023wizardmath}, and Llemma~\citep{azerbayev2023llemma} fine-tuned with MetaMath~\citep{yu2023metamath} dataset using classical cross-entropy (CE).

\section{Conclusion}

In this paper, we provide a matrix information-theoretic perspective for understanding and improving self-supervised learning methods. We are confident that our perspective will not only offer a refined and alternative comprehension of self-supervised learning methods but will also act as a catalyst for the design of increasingly robust and effective algorithms in the future.

\section*{Acknowledgment}
Yang Yuan is supported by the Ministry of Science and Technology of the People's Republic of China, the 2030 Innovation Megaprojects ``Program on New Generation Artificial Intelligence'' (Grant No. 2021AAA0150000). 
Weiran Huang is supported by the 2023 CCF-Baidu Open Fund and Microsoft Research Asia. 

We would also like to express our sincere gratitude to the reviewers of ICML 2024 for their insightful and constructive feedback. Their valuable comments have greatly contributed to improving the quality of our work.

\section*{Impact Statement}

This paper aims to contribute to the advancement of machine learning by introducing novel approaches to self-supervised learning. While this work primarily seeks to enrich research within the field, it acknowledges the potential broader societal implications inherent in any advancement in machine learning. However, specific societal consequences are not directly foreseeable at this stage.


\bibliography{reference}

\begin{thebibliography}{88}
\providecommand{\natexlab}[1]{#1}
\providecommand{\url}[1]{\texttt{#1}}
\expandafter\ifx\csname urlstyle\endcsname\relax
  \providecommand{\doi}[1]{doi: #1}\else
  \providecommand{\doi}{doi: \begingroup \urlstyle{rm}\Url}\fi

\bibitem[Allen-Zhu \& Li(2020)Allen-Zhu and Li]{allen2020towards}
Allen-Zhu, Z. and Li, Y.
\newblock Towards understanding ensemble, knowledge distillation and
  self-distillation in deep learning.
\newblock \emph{arXiv preprint arXiv:2012.09816}, 2020.

\bibitem[Amari(2014)]{amari2014information}
Amari, S.-i.
\newblock Information geometry of positive measures and positive-definite
  matrices: Decomposable dually flat structure.
\newblock \emph{Entropy}, 16\penalty0 (4):\penalty0 2131--2145, 2014.

\bibitem[Arora et~al.(2019)Arora, Khandeparkar, Khodak, Plevrakis, and
  Saunshi]{arora2019theoretical}
Arora, S., Khandeparkar, H., Khodak, M., Plevrakis, O., and Saunshi, N.
\newblock A theoretical analysis of contrastive unsupervised representation
  learning.
\newblock In \emph{International Conference on Machine Learning}, 2019.

\bibitem[Azerbayev et~al.(2023)Azerbayev, Schoelkopf, Paster, Santos, McAleer,
  Jiang, Deng, Biderman, and Welleck]{azerbayev2023llemma}
Azerbayev, Z., Schoelkopf, H., Paster, K., Santos, M.~D., McAleer, S., Jiang,
  A.~Q., Deng, J., Biderman, S., and Welleck, S.
\newblock Llemma: An open language model for mathematics.
\newblock \emph{arXiv preprint arXiv:2310.10631}, 2023.

\bibitem[Bach(2022)]{bach2022information}
Bach, F.
\newblock Information theory with kernel methods.
\newblock \emph{IEEE Transactions on Information Theory}, 2022.

\bibitem[Bachman et~al.(2019)Bachman, Hjelm, and
  Buchwalter]{bachman2019learning}
Bachman, P., Hjelm, R.~D., and Buchwalter, W.
\newblock Learning representations by maximizing mutual information across
  views.
\newblock \emph{arXiv preprint arXiv:1906.00910}, 2019.

\bibitem[Balestriero \& LeCun(2022)Balestriero and
  LeCun]{balestriero2022contrastive}
Balestriero, R. and LeCun, Y.
\newblock Contrastive and non-contrastive self-supervised learning recover
  global and local spectral embedding methods.
\newblock \emph{arXiv preprint arXiv:2205.11508}, 2022.

\bibitem[Bardes et~al.(2021)Bardes, Ponce, and LeCun]{bardes2021vicreg}
Bardes, A., Ponce, J., and LeCun, Y.
\newblock Vicreg: Variance-invariance-covariance regularization for
  self-supervised learning.
\newblock \emph{arXiv preprint arXiv:2105.04906}, 2021.

\bibitem[Caron et~al.(2020)Caron, Misra, Mairal, Goyal, Bojanowski, and
  Joulin]{caron2020unsupervised}
Caron, M., Misra, I., Mairal, J., Goyal, P., Bojanowski, P., and Joulin, A.
\newblock Unsupervised learning of visual features by contrasting cluster
  assignments.
\newblock \emph{Advances in Neural Information Processing Systems},
  33:\penalty0 9912--9924, 2020.

\bibitem[Caron et~al.(2021)Caron, Touvron, Misra, J{\'e}gou, Mairal,
  Bojanowski, and Joulin]{caron2021emerging}
Caron, M., Touvron, H., Misra, I., J{\'e}gou, H., Mairal, J., Bojanowski, P.,
  and Joulin, A.
\newblock Emerging properties in self-supervised vision transformers.
\newblock In \emph{Proceedings of the IEEE/CVF international conference on
  computer vision}, pp.\  9650--9660, 2021.

\bibitem[Chen et~al.(2020{\natexlab{a}})Chen, Kornblith, Norouzi, and
  Hinton]{chen2020simple}
Chen, T., Kornblith, S., Norouzi, M., and Hinton, G.
\newblock A simple framework for contrastive learning of visual
  representations.
\newblock In \emph{International Conference on Machine Learning}, pp.\
  1597--1607. PMLR, 2020{\natexlab{a}}.

\bibitem[Chen \& He(2021)Chen and He]{chen2021exploring}
Chen, X. and He, K.
\newblock Exploring simple siamese representation learning.
\newblock In \emph{Proceedings of the IEEE/CVF conference on Computer Vision
  and Pattern Recognition}, pp.\  15750--15758, 2021.

\bibitem[Chen et~al.(2020{\natexlab{b}})Chen, Fan, Girshick, and
  He]{chen2020improved}
Chen, X., Fan, H., Girshick, R., and He, K.
\newblock Improved baselines with momentum contrastive learning.
\newblock \emph{arXiv preprint arXiv:2003.04297}, 2020{\natexlab{b}}.

\bibitem[Cobbe et~al.(2021)Cobbe, Kosaraju, Bavarian, Chen, Jun, Kaiser,
  Plappert, Tworek, Hilton, Nakano, et~al.]{cobbe2021training}
Cobbe, K., Kosaraju, V., Bavarian, M., Chen, M., Jun, H., Kaiser, L., Plappert,
  M., Tworek, J., Hilton, J., Nakano, R., et~al.
\newblock Training verifiers to solve math word problems.
\newblock \emph{arXiv preprint arXiv:2110.14168}, 2021.

\bibitem[Cover(1999)]{cover1999elements}
Cover, T.~M.
\newblock \emph{Elements of information theory}.
\newblock John Wiley \& Sons, 1999.

\bibitem[Deng et~al.(2009)Deng, Dong, Socher, Li, Li, and
  Fei-Fei]{deng2009imagenet}
Deng, J., Dong, W., Socher, R., Li, L.-J., Li, K., and Fei-Fei, L.
\newblock Imagenet: A large-scale hierarchical image database.
\newblock In \emph{2009 IEEE conference on computer vision and pattern
  recognition}, pp.\  248--255. Ieee, 2009.

\bibitem[Du et~al.(2018)Du, Zhai, Poczos, and Singh]{du2018gradient}
Du, S.~S., Zhai, X., Poczos, B., and Singh, A.
\newblock Gradient descent provably optimizes over-parameterized neural
  networks.
\newblock \emph{arXiv preprint arXiv:1810.02054}, 2018.

\bibitem[Galanti et~al.(2021)Galanti, Gy{\"o}rgy, and Hutter]{galanti2021role}
Galanti, T., Gy{\"o}rgy, A., and Hutter, M.
\newblock On the role of neural collapse in transfer learning.
\newblock \emph{arXiv preprint arXiv:2112.15121}, 2021.

\bibitem[Gao et~al.(2021)Gao, Yao, and Chen]{gao2021simcse}
Gao, T., Yao, X., and Chen, D.
\newblock Simcse: Simple contrastive learning of sentence embeddings.
\newblock \emph{arXiv preprint arXiv:2104.08821}, 2021.

\bibitem[Garrido et~al.(2022)Garrido, Chen, Bardes, Najman, and
  Lecun]{garrido2022duality}
Garrido, Q., Chen, Y., Bardes, A., Najman, L., and Lecun, Y.
\newblock On the duality between contrastive and non-contrastive
  self-supervised learning.
\newblock \emph{arXiv preprint arXiv:2206.02574}, 2022.

\bibitem[Grill et~al.(2020)Grill, Strub, Altch{\'e}, Tallec, Richemond,
  Buchatskaya, Doersch, Avila~Pires, Guo, Gheshlaghi~Azar,
  et~al.]{grill2020bootstrap}
Grill, J.-B., Strub, F., Altch{\'e}, F., Tallec, C., Richemond, P.,
  Buchatskaya, E., Doersch, C., Avila~Pires, B., Guo, Z., Gheshlaghi~Azar, M.,
  et~al.
\newblock Bootstrap your own latent-a new approach to self-supervised learning.
\newblock \emph{Advances in Neural Iformation Processing Systems}, 33:\penalty0
  21271--21284, 2020.

\bibitem[Hall(2013)]{hall2013lie}
Hall, B.~C.
\newblock \emph{Lie groups, Lie algebras, and representations}.
\newblock Springer, 2013.

\bibitem[HaoChen et~al.(2021)HaoChen, Wei, Gaidon, and Ma]{haochen2021provable}
HaoChen, J.~Z., Wei, C., Gaidon, A., and Ma, T.
\newblock Provable guarantees for self-supervised deep learning with spectral
  contrastive loss.
\newblock \emph{Advances in Neural Information Processing Systems},
  34:\penalty0 5000--5011, 2021.

\bibitem[HaoChen et~al.(2022)HaoChen, Wei, Kumar, and Ma]{haochen2022beyond}
HaoChen, J.~Z., Wei, C., Kumar, A., and Ma, T.
\newblock Beyond separability: Analyzing the linear transferability of
  contrastive representations to related subpopulations.
\newblock \emph{Advances in Neural Information Processing Systems}, 2022.

\bibitem[He et~al.(2015)He, Zhang, Ren, and Sun]{DBLP:journals/corr/HeZRS15}
He, K., Zhang, X., Ren, S., and Sun, J.
\newblock Deep residual learning for image recognition.
\newblock \emph{CoRR}, abs/1512.03385, 2015.
\newblock URL \url{http://arxiv.org/abs/1512.03385}.

\bibitem[He et~al.(2020)He, Fan, Wu, Xie, and Girshick]{he2020momentum}
He, K., Fan, H., Wu, Y., Xie, S., and Girshick, R.
\newblock Momentum contrast for unsupervised visual representation learning.
\newblock In \emph{Proceedings of the IEEE/CVF conference on computer vision
  and pattern recognition}, pp.\  9729--9738, 2020.

\bibitem[Henaff(2020)]{henaff2020data}
Henaff, O.
\newblock Data-efficient image recognition with contrastive predictive coding.
\newblock In \emph{International Conference on Machine Learning}, pp.\
  4182--4192. PMLR, 2020.

\bibitem[Hendrycks et~al.(2021)Hendrycks, Burns, Kadavath, Arora, Basart, Tang,
  Song, and Steinhardt]{hendrycks2021measuring}
Hendrycks, D., Burns, C., Kadavath, S., Arora, A., Basart, S., Tang, E., Song,
  D., and Steinhardt, J.
\newblock Measuring mathematical problem solving with the math dataset.
\newblock \emph{arXiv preprint arXiv:2103.03874}, 2021.

\bibitem[Higham(2008)]{higham2008functions}
Higham, N.~J.
\newblock \emph{Functions of matrices: theory and computation}.
\newblock SIAM, 2008.

\bibitem[Hjelm et~al.(2018)Hjelm, Fedorov, Lavoie-Marchildon, Grewal, Bachman,
  Trischler, and Bengio]{hjelm2018learning}
Hjelm, R.~D., Fedorov, A., Lavoie-Marchildon, S., Grewal, K., Bachman, P.,
  Trischler, A., and Bengio, Y.
\newblock Learning deep representations by mutual information estimation and
  maximization.
\newblock In \emph{International Conference on Learning Representations}, 2018.

\bibitem[Hu et~al.(2022)Hu, Liu, Zhou, Wang, and Huang]{hu2022your}
Hu, T., Liu, Z., Zhou, F., Wang, W., and Huang, W.
\newblock Your contrastive learning is secretly doing stochastic neighbor
  embedding.
\newblock \emph{arXiv preprint arXiv:2205.14814}, 2022.

\bibitem[Hua(2021)]{Hua2021SimSiam}
Hua, T.
\newblock Simsiam.
\newblock \url{https://github.com/PatrickHua/SimSiam}, 2021.

\bibitem[Huang et~al.(2021)Huang, Yi, and Zhao]{huang2021towards}
Huang, W., Yi, M., and Zhao, X.
\newblock Towards the generalization of contrastive self-supervised learning.
\newblock \emph{arXiv preprint arXiv:2111.00743}, 2021.

\bibitem[Ioffe \& Szegedy(2015)Ioffe and Szegedy]{DBLP:journals/corr/IoffeS15}
Ioffe, S. and Szegedy, C.
\newblock Batch normalization: Accelerating deep network training by reducing
  internal covariate shift.
\newblock \emph{CoRR}, abs/1502.03167, 2015.
\newblock URL \url{http://arxiv.org/abs/1502.03167}.

\bibitem[Kim et~al.(2023)Kim, Kang, Hwang, Shin, and Rhee]{kim2023vne}
Kim, J., Kang, S., Hwang, D., Shin, J., and Rhee, W.
\newblock Vne: An effective method for improving deep representation by
  manipulating eigenvalue distribution.
\newblock In \emph{Proceedings of the IEEE/CVF Conference on Computer Vision
  and Pattern Recognition}, pp.\  3799--3810, 2023.

\bibitem[Krizhevsky et~al.(2009)Krizhevsky, Hinton,
  et~al.]{krizhevsky2009learning}
Krizhevsky, A., Hinton, G., et~al.
\newblock Learning multiple layers of features from tiny images.
\newblock \emph{Citeseer}, 2009.

\bibitem[Lee et~al.(2020)Lee, Lei, Saunshi, and Zhuo]{lee2020predicting}
Lee, J.~D., Lei, Q., Saunshi, N., and Zhuo, J.
\newblock Predicting what you already know helps: Provable self-supervised
  learning.
\newblock \emph{arXiv preprint arXiv:2008.01064}, 2020.

\bibitem[Lee et~al.(2021)Lee, Lei, Saunshi, and Zhuo]{lee2021predicting}
Lee, J.~D., Lei, Q., Saunshi, N., and Zhuo, J.
\newblock Predicting what you already know helps: Provable self-supervised
  learning.
\newblock \emph{Advances in Neural Information Processing Systems},
  34:\penalty0 309--323, 2021.

\bibitem[Lewkowycz et~al.(2022)Lewkowycz, Andreassen, Dohan, Dyer, Michalewski,
  Ramasesh, Slone, Anil, Schlag, Gutman-Solo, Wu, Neyshabur, Gur-Ari, and
  Misra]{lewkowycz2022solving}
Lewkowycz, A., Andreassen, A., Dohan, D., Dyer, E., Michalewski, H., Ramasesh,
  V., Slone, A., Anil, C., Schlag, I., Gutman-Solo, T., Wu, Y., Neyshabur, B.,
  Gur-Ari, G., and Misra, V.
\newblock Solving quantitative reasoning problems with language models, 2022.

\bibitem[Li et~al.(2021)Li, Pogodin, Sutherland, and Gretton]{li2021self}
Li, Y., Pogodin, R., Sutherland, D.~J., and Gretton, A.
\newblock Self-supervised learning with kernel dependence maximization.
\newblock \emph{Advances in Neural Information Processing Systems},
  34:\penalty0 15543--15556, 2021.

\bibitem[Lin et~al.(2014)Lin, Maire, Belongie, Bourdev, Girshick, Hays, Perona,
  Ramanan, Doll{\'{a}}r, and Zitnick]{MSCOCO2014Lin}
Lin, T., Maire, M., Belongie, S.~J., Bourdev, L.~D., Girshick, R.~B., Hays, J.,
  Perona, P., Ramanan, D., Doll{\'{a}}r, P., and Zitnick, C.~L.
\newblock Microsoft {COCO:} common objects in context.
\newblock \emph{CoRR}, abs/1405.0312, 2014.
\newblock URL \url{http://arxiv.org/abs/1405.0312}.

\bibitem[Liu et~al.(2022)Liu, Wang, Li, and Wang]{liu2022self}
Liu, X., Wang, Z., Li, Y.-L., and Wang, S.
\newblock Self-supervised learning via maximum entropy coding.
\newblock \emph{Advances in Neural Information Processing Systems},
  35:\penalty0 34091--34105, 2022.

\bibitem[Logeswaran \& Lee(2018)Logeswaran and Lee]{logeswaran2018efficient}
Logeswaran, L. and Lee, H.
\newblock An efficient framework for learning sentence representations.
\newblock \emph{arXiv preprint arXiv:1803.02893}, 2018.

\bibitem[Loshchilov \& Hutter(2016)Loshchilov and
  Hutter]{DBLP:journals/corr/LoshchilovH16a}
Loshchilov, I. and Hutter, F.
\newblock {SGDR:} stochastic gradient descent with restarts.
\newblock \emph{CoRR}, abs/1608.03983, 2016.
\newblock URL \url{http://arxiv.org/abs/1608.03983}.

\bibitem[Luo et~al.(2023)Luo, Sun, Xu, Zhao, Lou, Tao, Geng, Lin, Chen, and
  Zhang]{luo2023wizardmath}
Luo, H., Sun, Q., Xu, C., Zhao, P., Lou, J., Tao, C., Geng, X., Lin, Q., Chen,
  S., and Zhang, D.
\newblock Wizardmath: Empowering mathematical reasoning for large language
  models via reinforced evol-instruct, 2023.

\bibitem[Ma et~al.(2023)Ma, You, Reddi, Jayasumana, Jain, Yu, Chang, and
  Kumar]{ma2023do}
Ma, J., You, C., Reddi, S.~J., Jayasumana, S., Jain, H., Yu, F., Chang, S.-F.,
  and Kumar, S.
\newblock Do we need neural collapse? learning diverse features for
  fine-grained and long-tail classification.
\newblock \emph{OpenReviewNet}, 2023.

\bibitem[Ma et~al.(2007)Ma, Derksen, Hong, and Wright]{ma2007segmentation}
Ma, Y., Derksen, H., Hong, W., and Wright, J.
\newblock Segmentation of multivariate mixed data via lossy data coding and
  compression.
\newblock \emph{IEEE transactions on pattern analysis and machine
  intelligence}, 29\penalty0 (9):\penalty0 1546--1562, 2007.

\bibitem[Misra \& Maaten(2020)Misra and Maaten]{misra2020self}
Misra, I. and Maaten, L. v.~d.
\newblock Self-supervised learning of pretext-invariant representations.
\newblock In \emph{Proceedings of the IEEE/CVF Conference on Computer Vision
  and Pattern Recognition}, pp.\  6707--6717, 2020.

\bibitem[Nair \& Hinton(2010)Nair and Hinton]{Nair2010RectifiedLU}
Nair, V. and Hinton, G.~E.
\newblock Rectified linear units improve restricted boltzmann machines.
\newblock In \emph{International Conference on Machine Learning}, 2010.

\bibitem[Nozawa \& Sato(2021)Nozawa and Sato]{nozawa2021understanding}
Nozawa, K. and Sato, I.
\newblock Understanding negative samples in instance discriminative
  self-supervised representation learning.
\newblock \emph{Advances in Neural Information Processing Systems},
  34:\penalty0 5784--5797, 2021.

\bibitem[Oord et~al.(2018)Oord, Li, and Vinyals]{oord2018representation}
Oord, A. v.~d., Li, Y., and Vinyals, O.
\newblock Representation learning with contrastive predictive coding.
\newblock \emph{arXiv preprint arXiv:1807.03748}, 2018.

\bibitem[Papyan et~al.(2020)Papyan, Han, and Donoho]{papyan2020prevalence}
Papyan, V., Han, X., and Donoho, D.~L.
\newblock Prevalence of neural collapse during the terminal phase of deep
  learning training.
\newblock \emph{Proceedings of the National Academy of Sciences}, 117\penalty0
  (40):\penalty0 24652--24663, 2020.

\bibitem[Paster et~al.(2023)Paster, Santos, Azerbayev, and
  Ba]{paster2023openwebmath}
Paster, K., Santos, M.~D., Azerbayev, Z., and Ba, J.
\newblock Openwebmath: An open dataset of high-quality mathematical web text.
\newblock \emph{arXiv preprint arXiv:2310.06786}, 2023.

\bibitem[Pedregosa et~al.(2011)Pedregosa, Varoquaux, Gramfort, Michel, Thirion,
  Grisel, Blondel, Prettenhofer, Weiss, Dubourg, et~al.]{pedregosa2011scikit}
Pedregosa, F., Varoquaux, G., Gramfort, A., Michel, V., Thirion, B., Grisel,
  O., Blondel, M., Prettenhofer, P., Weiss, R., Dubourg, V., et~al.
\newblock Scikit-learn: Machine learning in python.
\newblock \emph{Journal of machine learning research}, 12\penalty0
  (Oct):\penalty0 2825--2830, 2011.

\bibitem[Pokle et~al.(2022)Pokle, Tian, Li, and Risteski]{pokle2022contrasting}
Pokle, A., Tian, J., Li, Y., and Risteski, A.
\newblock Contrasting the landscape of contrastive and non-contrastive
  learning.
\newblock \emph{arXiv preprint arXiv:2203.15702}, 2022.

\bibitem[Robinson et~al.(2021)Robinson, Chuang, Sra, and
  Jegelka]{robinson2021contrastive}
Robinson, J.~D., Chuang, C.-Y., Sra, S., and Jegelka, S.
\newblock Contrastive learning with hard negative samples.
\newblock In \emph{ICLR}, 2021.

\bibitem[Roy \& Vetterli(2007)Roy and Vetterli]{roy2007effective}
Roy, O. and Vetterli, M.
\newblock The effective rank: A measure of effective dimensionality.
\newblock In \emph{2007 15th European signal processing conference}, pp.\
  606--610. IEEE, 2007.

\bibitem[Shen et~al.(2022)Shen, Jones, Kumar, Xie, HaoChen, Ma, and
  Liang]{shen2022connect}
Shen, K., Jones, R.~M., Kumar, A., Xie, S.~M., HaoChen, J.~Z., Ma, T., and
  Liang, P.
\newblock Connect, not collapse: Explaining contrastive learning for
  unsupervised domain adaptation.
\newblock In \emph{International Conference on Machine Learning}, pp.\
  19847--19878. PMLR, 2022.

\bibitem[Tan et~al.(2023{\natexlab{a}})Tan, Yang, Huang, Yuan, and
  Zhang]{tan2023information}
Tan, Z., Yang, J., Huang, W., Yuan, Y., and Zhang, Y.
\newblock Information flow in self-supervised learning.
\newblock \emph{arXiv preprint arXiv:2309.17281}, 2023{\natexlab{a}}.

\bibitem[Tan et~al.(2023{\natexlab{b}})Tan, Zhang, Yang, and
  Yuan]{tan2023contrastive}
Tan, Z., Zhang, Y., Yang, J., and Yuan, Y.
\newblock Contrastive learning is spectral clustering on similarity graph.
\newblock \emph{arXiv preprint arXiv:2303.15103}, 2023{\natexlab{b}}.

\bibitem[Tao et~al.(2022)Tao, Wang, Zhu, Dong, Song, Huang, and
  Dai]{tao2022exploring}
Tao, C., Wang, H., Zhu, X., Dong, J., Song, S., Huang, G., and Dai, J.
\newblock Exploring the equivalence of siamese self-supervised learning via a
  unified gradient framework.
\newblock In \emph{Proceedings of the IEEE/CVF Conference on Computer Vision
  and Pattern Recognition}, pp.\  14431--14440, 2022.

\bibitem[Tian(2022)]{tian2022deep}
Tian, Y.
\newblock Deep contrastive learning is provably (almost) principal component
  analysis.
\newblock \emph{arXiv preprint arXiv:2201.12680}, 2022.

\bibitem[Tian et~al.(2019)Tian, Krishnan, and Isola]{tian2019contrastive}
Tian, Y., Krishnan, D., and Isola, P.
\newblock Contrastive multiview coding.
\newblock \emph{arXiv preprint arXiv:1906.05849}, 2019.

\bibitem[Tian et~al.(2020{\natexlab{a}})Tian, Krishnan, and
  Isola]{tian2020contrastive}
Tian, Y., Krishnan, D., and Isola, P.
\newblock Contrastive multiview coding.
\newblock In \emph{Computer Vision--ECCV 2020: 16th European Conference,
  Glasgow, UK, August 23--28, 2020, Proceedings, Part XI 16}, pp.\  776--794.
  Springer, 2020{\natexlab{a}}.

\bibitem[Tian et~al.(2020{\natexlab{b}})Tian, Sun, Poole, Krishnan, Schmid, and
  Isola]{tian2020makes}
Tian, Y., Sun, C., Poole, B., Krishnan, D., Schmid, C., and Isola, P.
\newblock What makes for good views for contrastive learning.
\newblock \emph{arXiv preprint arXiv:2005.10243}, 2020{\natexlab{b}}.

\bibitem[Tian et~al.(2021)Tian, Chen, and Ganguli]{tian2021understanding}
Tian, Y., Chen, X., and Ganguli, S.
\newblock Understanding self-supervised learning dynamics without contrastive
  pairs.
\newblock In \emph{International Conference on Machine Learning}, pp.\
  10268--10278. PMLR, 2021.

\bibitem[Tong et~al.(2023)Tong, Chen, Ma, and Lecun]{tong2023emp}
Tong, S., Chen, Y., Ma, Y., and Lecun, Y.
\newblock Emp-ssl: Towards self-supervised learning in one training epoch.
\newblock \emph{arXiv preprint arXiv:2304.03977}, 2023.

\bibitem[Tosh et~al.(2020)Tosh, Krishnamurthy, and Hsu]{tosh2020contrastive}
Tosh, C., Krishnamurthy, A., and Hsu, D.
\newblock Contrastive estimation reveals topic posterior information to linear
  models.
\newblock \emph{arXiv:2003.02234}, 2020.

\bibitem[Tosh et~al.(2021)Tosh, Krishnamurthy, and Hsu]{tosh2021contrastive}
Tosh, C., Krishnamurthy, A., and Hsu, D.
\newblock Contrastive learning, multi-view redundancy, and linear models.
\newblock In \emph{Algorithmic Learning Theory}, pp.\  1179--1206. PMLR, 2021.

\bibitem[Touvron et~al.(2023)Touvron, Martin, Stone, Albert, Almahairi, Babaei,
  Bashlykov, Batra, Bhargava, Bhosale, et~al.]{touvron2023llama}
Touvron, H., Martin, L., Stone, K., Albert, P., Almahairi, A., Babaei, Y.,
  Bashlykov, N., Batra, S., Bhargava, P., Bhosale, S., et~al.
\newblock Llama 2: Open foundation and fine-tuned chat models.
\newblock \emph{arXiv preprint arXiv:2307.09288}, 2023.

\bibitem[Tsai et~al.(2021{\natexlab{a}})Tsai, Bai, Morency, and
  Salakhutdinov]{DBLP:journals/corr/abs-2104-13712}
Tsai, Y.~H., Bai, S., Morency, L., and Salakhutdinov, R.
\newblock A note on connecting barlow twins with negative-sample-free
  contrastive learning.
\newblock \emph{CoRR}, abs/2104.13712, 2021{\natexlab{a}}.
\newblock URL \url{https://arxiv.org/abs/2104.13712}.

\bibitem[Tsai et~al.(2021{\natexlab{b}})Tsai, Bai, Morency, and
  Salakhutdinov]{tsai2021note}
Tsai, Y.-H.~H., Bai, S., Morency, L.-P., and Salakhutdinov, R.
\newblock A note on connecting barlow twins with negative-sample-free
  contrastive learning.
\newblock \emph{arXiv preprint arXiv:2104.13712}, 2021{\natexlab{b}}.

\bibitem[van~der Maaten \& Hinton(2008)van~der Maaten and
  Hinton]{Maaten2008VisualizingDU}
van~der Maaten, L. and Hinton, G.~E.
\newblock Visualizing data using t-sne.
\newblock \emph{Journal of Machine Learning Research}, 9:\penalty0 2579--2605,
  2008.

\bibitem[von Neumann(1932)]{john1932mathematische}
von Neumann, J.
\newblock Mathematische grundlagen der quantenmechanik, 1932.

\bibitem[Wang \& Isola(2020)Wang and Isola]{wang2020understanding}
Wang, T. and Isola, P.
\newblock Understanding contrastive representation learning through alignment
  and uniformity on the hypersphere.
\newblock In \emph{International Conference on Machine Learning}, pp.\
  9929--9939. PMLR, 2020.

\bibitem[Wang et~al.(2022)Wang, Zhang, Wang, Yang, and Lin]{wang2022chaos}
Wang, Y., Zhang, Q., Wang, Y., Yang, J., and Lin, Z.
\newblock Chaos is a ladder: A new theoretical understanding of contrastive
  learning via augmentation overlap.
\newblock \emph{arXiv preprint arXiv:2203.13457}, 2022.

\bibitem[Wen \& Li(2022)Wen and Li]{wen2022mechanism}
Wen, Z. and Li, Y.
\newblock The mechanism of prediction head in non-contrastive self-supervised
  learning.
\newblock \emph{arXiv preprint arXiv:2205.06226}, 2022.

\bibitem[Witten(2020)]{witten2020mini}
Witten, E.
\newblock A mini-introduction to information theory.
\newblock \emph{La Rivista del Nuovo Cimento}, 43\penalty0 (4):\penalty0
  187--227, 2020.

\bibitem[Wu et~al.(2018)Wu, Xiong, Yu, and Lin]{wu2018unsupervised}
Wu, Z., Xiong, Y., Yu, S.~X., and Lin, D.
\newblock Unsupervised feature learning via non-parametric instance
  discrimination.
\newblock In \emph{Proceedings of the IEEE Conference on Computer Vision and
  Pattern Recognition}, pp.\  3733--3742, 2018.

\bibitem[Ye et~al.(2019)Ye, Zhang, Yuen, and Chang]{ye2019unsupervised}
Ye, M., Zhang, X., Yuen, P.~C., and Chang, S.-F.
\newblock Unsupervised embedding learning via invariant and spreading instance
  feature.
\newblock In \emph{Proceedings of the IEEE/CVF Conference on Computer Vision
  and Pattern Recognition}, pp.\  6210--6219, 2019.

\bibitem[You et~al.(2017)You, Gitman, and
  Ginsburg]{DBLP:journals/corr/abs-1708-03888}
You, Y., Gitman, I., and Ginsburg, B.
\newblock Scaling {SGD} batch size to 32k for imagenet training.
\newblock \emph{CoRR}, abs/1708.03888, 2017.
\newblock URL \url{http://arxiv.org/abs/1708.03888}.

\bibitem[Yu et~al.(2023)Yu, Jiang, Shi, Yu, Liu, Zhang, Kwok, Li, Weller, and
  Liu]{yu2023metamath}
Yu, L., Jiang, W., Shi, H., Yu, J., Liu, Z., Zhang, Y., Kwok, J.~T., Li, Z.,
  Weller, A., and Liu, W.
\newblock Metamath: Bootstrap your own mathematical questions for large
  language models.
\newblock \emph{arXiv preprint arXiv:2309.12284}, 2023.

\bibitem[Zbontar et~al.(2021)Zbontar, Jing, Misra, LeCun, and
  Deny]{zbontar2021barlow}
Zbontar, J., Jing, L., Misra, I., LeCun, Y., and Deny, S.
\newblock Barlow twins: Self-supervised learning via redundancy reduction.
\newblock In \emph{International Conference on Machine Learning}, pp.\
  12310--12320. PMLR, 2021.

\bibitem[Zhang et~al.(2021)Zhang, Bengio, Hardt, Recht, and
  Vinyals]{zhang2021understanding}
Zhang, C., Bengio, S., Hardt, M., Recht, B., and Vinyals, O.
\newblock Understanding deep learning (still) requires rethinking
  generalization.
\newblock \emph{Communications of the ACM}, 64\penalty0 (3):\penalty0 107--115,
  2021.

\bibitem[Zhou et~al.(2022)Zhou, You, Li, Liu, Liu, Qu, and Zhu]{zhou2022all}
Zhou, J., You, C., Li, X., Liu, K., Liu, S., Qu, Q., and Zhu, Z.
\newblock Are all losses created equal: A neural collapse perspective.
\newblock \emph{arXiv preprint arXiv:2210.02192}, 2022.

\bibitem[Zhu et~al.(2021)Zhu, Ding, Zhou, Li, You, Sulam, and
  Qu]{zhu2021geometric}
Zhu, Z., Ding, T., Zhou, J., Li, X., You, C., Sulam, J., and Qu, Q.
\newblock A geometric analysis of neural collapse with unconstrained features.
\newblock \emph{Advances in Neural Information Processing Systems},
  34:\penalty0 29820--29834, 2021.

\bibitem[Zhuo et~al.(2023)Zhuo, Wang, Ma, and Wang]{zhuo2023towards}
Zhuo, Z., Wang, Y., Ma, J., and Wang, Y.
\newblock Towards a unified theoretical understanding of non-contrastive
  learning via rank differential mechanism.
\newblock In \emph{The Eleventh International Conference on Learning
  Representations}, 2023.

\bibitem[Zimmermann et~al.(2021)Zimmermann, Sharma, Schneider, Bethge, and
  Brendel]{zimmermann2021contrastive}
Zimmermann, R.~S., Sharma, Y., Schneider, S., Bethge, M., and Brendel, W.
\newblock Contrastive learning inverts the data generating process.
\newblock In \emph{International Conference on Machine Learning}, pp.\
  12979--12990. PMLR, 2021.

\end{thebibliography}
\bibliographystyle{icml2024}

\newpage
\appendix
\onecolumn

\section{Appendix for Proofs}
\label{sec:proofs}

\paragraph{Proof of Lemma~\ref{lem:psd-1}.}

\begin{proof}
Consider any non-zero matrix \( \mathbf{A} \in \mathbb{R}^{m \times n} \). We want to show that \( \mathbf{A}\mathbf{A}^\top \) is positive semi-definite.

Recall that a matrix \( \mathbf{B} \) is positive semi-definite if for all vectors \( \mathbf{x} \in \mathbb{R}^m \), it holds that \( \mathbf{x}^\top \mathbf{B} \mathbf{x} \geq 0 \). We will apply this definition to \( \mathbf{A}\mathbf{A}^\top \).

Consider any vector \( \mathbf{x} \in \mathbb{R}^m \). We compute \( \mathbf{x}^\top (\mathbf{A}\mathbf{A}^\top) \mathbf{x} \) as follows:
\begin{align*}
    \mathbf{x}^\top (\mathbf{A}\mathbf{A}^\top) \mathbf{x} &= (\mathbf{x}^\top \mathbf{A}) (\mathbf{A}^\top \mathbf{x}) \\
    &= \lVert \mathbf{A}^\top \mathbf{x} \rVert^2.
\end{align*}
The last equality holds because the expression \( (\mathbf{x}^\top \mathbf{A}) (\mathbf{A}^\top \mathbf{x}) \) represents the squared norm of the vector \( \mathbf{A}^\top \mathbf{x} \).

Since the squared norm of any vector is always non-negative, \( \lVert \mathbf{A}^\top \mathbf{x} \rVert^2 \geq 0 \) for any \( \mathbf{x} \in \mathbb{R}^m \).

Therefore, \( \mathbf{x}^\top (\mathbf{A}\mathbf{A}^\top) \mathbf{x} \geq 0 \) for all \( \mathbf{x} \in \mathbb{R}^m \), which means that \( \mathbf{A}\mathbf{A}^\top \) is positive semi-definite.

This completes the proof.
\end{proof}

\paragraph{Proof of Proposition~\ref{prop:optimal-point-matrix-kl}.}

\begin{proof}
\label{proof:optimal-point-matrix-kl}
We consider the matrix KL divergence \(\operatorname{MKL}(\mathbf{P} || \mathbf{Q})\) for positive semi-definite matrices \( \mathbf{P}, \mathbf{Q} \in \mathbb{R}^{n \times n} \). Our goal is to show that this function attains its minimum when \(\mathbf{Q} = \mathbf{P}\).

First, we calculate the gradient of \(\operatorname{MKL}(\mathbf{P} || \mathbf{Q})\) with respect to \(\mathbf{Q}\). Utilizing the properties of the matrix logarithm and trace, we find
\[
\nabla_{\mathbf{Q}} \operatorname{MKL}(\mathbf{P} || \mathbf{Q}) = -\mathbf{P} \mathbf{Q}^{-1} + \mathbf{I},
\]
where \(\mathbf{I}\) is the identity matrix.

Setting this gradient to zero, we obtain the condition for stationary points:
\[
-\mathbf{P} \mathbf{Q}^{-1} + \mathbf{I} = \mathbf{0} \implies \mathbf{P} \mathbf{Q}^{-1} = \mathbf{I}.
\]
Multiplying both sides of this equation by \(\mathbf{Q}\) yields \(\mathbf{Q} = \mathbf{P}\), indicating that \(\mathbf{Q} = \mathbf{P}\) is a stationary point of the function.

To confirm that \(\mathbf{Q} = \mathbf{P}\) is indeed a minimum, we examine the second-order conditions. The Hessian of \(\operatorname{MKL}(\mathbf{P} || \mathbf{Q})\), computed as
\[
\nabla^2_{\mathbf{Q}} \operatorname{MKL}(\mathbf{P} || \mathbf{Q}) = \mathbf{P} \mathbf{Q}^{-2},
\]
is positive semi-definite. This is because for any non-zero matrix \( \mathbf{X} \in \mathbb{R}^{n \times n} \), the expression
\[
\mathbf{X}^{\top} (\mathbf{P} \mathbf{Q}^{-2}) \mathbf{X}
\]
is non-negative, given that both \( \mathbf{P} \) and \( \mathbf{Q}^{-2} \) are positive semi-definite. Therefore, \(\operatorname{MKL}(\mathbf{P} || \mathbf{Q})\) is convex in \(\mathbf{Q}\).

Given the convexity of the function and the identification of a stationary point at \(\mathbf{Q} = \mathbf{P}\), we can conclude that this point is indeed the global minimum of the function over the domain of positive semi-definite matrices.

Hence, we conclude that
\[
\operatorname{argmin}_{\mathbf{Q} \succ 0} \operatorname{MKL}(\mathbf{P} || \mathbf{Q}) = \mathbf{P},
\]
thereby completing the proof.
\end{proof}

\paragraph{Proof of Proposition~\ref{prop:optimal-point-mce}.}

\begin{proof}
\label{proof:optimal-point-mce}
The matrix cross-entropy between two positive semi-definite matrices $\mathbf{P}$ and $\mathbf{Q}$ is defined as:
$$
\operatorname{MCE}(\mathbf{P}, \mathbf{Q})=\operatorname{tr}(-\mathbf{P} \log \mathbf{Q} + \mathbf{Q}).
$$

To find the matrix $\mathbf{Q}$ that minimizes $\operatorname{MCE}(\mathbf{P}, \mathbf{Q})$, we compute the derivative of $\operatorname{MCE}$ with respect to $\mathbf{Q}$. The derivative of the matrix cross-entropy is given by:
$$
\frac{\partial \operatorname{MCE}}{\partial \mathbf{Q}}=-\mathbf{P Q}^{-1}+\mathbf{I},
$$
where we utilized the matrix calculus result that the derivative of $\log \mathbf{Q}$ with respect to $\mathbf{Q}$ is $\mathbf{Q}^{-1}$.

Setting this derivative to zero for optimality, we get:
$$
-\mathbf{P Q}^{-1}+\mathbf{I}=\mathbf{0} \Longrightarrow \mathbf{P Q}^{-1}=\mathbf{I}.
$$
Multiplying both sides by $\mathbf{Q}$, we obtain:
$$
\mathbf{P}=\mathbf{Q}.
$$

To confirm that \(\mathbf{Q} = \mathbf{P}\) is indeed a minimum, we examine the second-order conditions, the proof is similar to Proof~\ref{proof:optimal-point-matrix-kl} for Proposition~\ref{prop:optimal-point-matrix-kl}. Therefore, we conclude that the matrix $\mathbf{Q}$ minimizing the matrix cross-entropy $\operatorname{MCE}(\mathbf{P}, \mathbf{Q})$ is $\mathbf{P}$ itself, i.e.,
$$
\operatorname{argmin}_{\mathbf{Q} \succ 0} \operatorname{MCE}(\mathbf{P}, \mathbf{Q})=\mathbf{P}.
$$
This completes the proof.
\end{proof}

\paragraph{Proof of Theorem~\ref{thm:mce-tcr-md}.}

\begin{proof}
\label{proof:mce-tcr-md}
First, begin with $\mathcal{L}_{\text{UMCE}}$ :
\[
\mathcal{L}_{\text{UMCE}}=\operatorname{MCE}\left(\frac{1}{d} \mathbf{I}_d+\lambda \mathbf{I}_d, \frac{1}{B} \mathbf{Z} \mathbf{Z}^{\top}+\lambda \mathbf{I}_d\right),
\]

Using the definition of MCE, we get:
\[
\mathcal{L}_{\text{UMCE}}=\operatorname{tr}\left(-\left(\frac{1}{d} \mathbf{I}_d+\lambda \mathbf{I}_d\right) \log \left(\frac{1}{B} \mathbf{Z} \mathbf{Z}^{\top}+\lambda \mathbf{I}_d\right)+\frac{1}{B} \mathbf{Z} \mathbf{Z}^{\top}+\lambda \mathbf{I}_d\right),
\]

Now, let us divide and multiply by $\lambda$ of the term $-\log \left(\frac{1}{B} \mathbf{Z} \mathbf{Z}^{\top}+\lambda \mathbf{I}_d\right)$:
\[
-\log \left(\frac{1}{B} \mathbf{Z} \mathbf{Z}^{\top}+\frac{\epsilon^2}{d} \mathbf{I}_d\right)=-\log \left(\lambda\left(\frac{1}{\lambda B} \mathbf{Z} \mathbf{Z}^{\top}+ \mathbf{I}_d\right)\right),
\]

Now, factor out $\lambda$:
\[
-\log \left(\lambda\left(\frac{1}{\lambda B} \mathbf{Z} \mathbf{Z}^{\top}+ \mathbf{I}_d\right)\right)=-\log (\lambda) \mathbf{I}_d-\log \left(\frac{1}{\lambda B} \mathbf{Z} \mathbf{Z}^{\top}+ \mathbf{I}_d\right),
\]

Since $\mathcal{L}_{\text{TCR}}=\frac{1}{2} \log \operatorname{det}\left(\mathbf{I}_d+\frac{d}{B \epsilon^2} \mathbf{Z} \mathbf{Z}^{\top}\right)$, we can rewrite this term in the form of $\mathcal{L}_{\text{TCR}}$.
\[
\operatorname{tr} \left(-\log \left(\frac{1}{\lambda B} \mathbf{Z} \mathbf{Z}^{\top} + \mathbf{I}_d\right)\right)= \operatorname{tr} \left(-\log \left(\mathbf{I}_d+\frac{d}{B \epsilon^2} \mathbf{Z Z}^{\top}\right) \right) = 2 \mathcal{L}_{\text{TCR}},
\]

Upon substitution, it becomes:
\[
\mathcal{L}_{\text{UMCE}}= -\operatorname{tr}\left(\left(\frac{1}{d} \mathbf{I}_d+\lambda \mathbf{I}_d\right)\left(\log (\lambda) \mathbf{I}_d\right)\right) + 2(1 + d \lambda) \mathcal{L}_{\text{TCR}}  +\operatorname{tr}\left(\frac{1}{B} \mathbf{Z} \mathbf{Z}^{\top}+\lambda \mathbf{I}_d \right),
\]

Simplifying, we get:
\[
\begin{aligned}
\mathcal{L}_{\text{UMCE}} &= -(1 + d \lambda)\log \lambda +2 (1 + d \lambda) \mathcal{L}_{\text{TCR}}+1+d \lambda\\
&= (1 + d \lambda) \left(-\log \lambda + 1 + 2 \mathcal{L}_{\text{TCR}}\right).
\end{aligned}
\]
This matches the expression given in the proposition for $\mathcal{L}_{\text{UMCE}}$.

For $\mathcal{L}_{\text{UMKL}}$, Using the definition of Matrix KL divergence, we have:
\[
\begin{aligned}
\mathcal{L}_{\text{UMKL}} &= \operatorname{MKL}\left(\left.\frac{1}{d}\mathbf{I}_d + \lambda \mathbf{I}_d \,\right | \left | \,\frac{1}{B}\mathbf{Z} \mathbf{Z}^{\top} + \lambda \mathbf{I}_d\right.\right),\\
&= \operatorname{MCE}\left(\frac{1}{d} \mathbf{I}_d+\lambda \mathbf{I}_d, \frac{1}{B} \mathbf{Z} \mathbf{Z}^{\top}+\lambda \mathbf{I}_d\right) + \operatorname{tr}\left( \mathbf{P} \log \mathbf{P} - \mathbf{P} \right),\\
\end{aligned}
\]
where $\mathbf{P}$ denotes $\frac{1}{d} \mathbf{I}_d+\lambda \mathbf{I}_d$. 

Now, we simplify $\operatorname{tr}\left( \mathbf{P} \log \mathbf{P} - \mathbf{P}\right)$. We know that \( \mathbf{P} = \frac{1}{d} \mathbf{I}_d + \lambda \mathbf{I}_d = \left( \frac{1}{d} + \lambda \right) \mathbf{I}_d \).

Since \( \mathbf{P} \) is a diagonal matrix with all diagonal entries being \( \frac{1}{d} + \lambda \), its matrix logarithm \( \log \mathbf{P} \) will also be a diagonal matrix with all diagonal entries being \( \log \left( \frac{1}{d} + \lambda \right) \).

Thus, \( \operatorname{tr}\left( \mathbf{P} \log \mathbf{P} - \mathbf{P} \right) \) can be simplified as follows:
\[
\operatorname{tr}\left( \mathbf{P} \log \mathbf{P} - \mathbf{P} \right) = \operatorname{tr}\left( \left( \frac{1}{d} + \lambda \right)\mathbf{I}_d \left(\log \left( \frac{1}{d} + \lambda \right) \mathbf{I}_d\right) - \left( \frac{1}{d} + \lambda \right)\mathbf{I}_d \right),
\]

Since the diagonal matrix $\mathbf{I}_d$ has $d$ ones along its diagonal, the trace operation essentially multiplies each term by $d$.
Therefore, we can write:
$$
\operatorname{tr}(\mathbf{P} \log \mathbf{P}-\mathbf{P})=d\left(\left(\frac{1}{d}+\lambda\right) \log \left(\frac{1}{d}+\lambda\right)-\left(\frac{1}{d}+\lambda\right)\right),
$$
Further simplifying, we get:
$$
\begin{aligned}
\operatorname{tr}(\mathbf{P} \log \mathbf{P}-\mathbf{P}) &= d\left(\frac{1}{d}+\lambda\right) \log \left(\frac{1}{d}+\lambda\right)-d\left(\frac{1}{d}+\lambda\right) \\
 &= (1+d \lambda)(\log (1+d \lambda) - \log d -1),
\end{aligned}
$$

Now, we can rewrite \( \mathcal{L}_{\text{UMKL}} \) using this result:
\[
\begin{aligned}
\mathcal{L}_{\text{UMKL}} &= \mathcal{L}_{\text{UMCE}} + \operatorname{tr}\left( \mathbf{P} \log \mathbf{P} - \mathbf{P} \right)    \\
&= \mathcal{L}_{\text{UMCE}} + (1 + d \lambda)(\log (1+d \lambda) - \log d -1)\\
&= -(1 + d \lambda)\log \lambda + 2(1 + d \lambda) \mathcal{L}_{\text{TCR}}+1+d \lambda + (1+d \lambda)(\log (1+d \lambda) - \log d -1) \\
&= -(1 + d \lambda)\log \lambda + 2(1 + d \lambda) \mathcal{L}_{\text{TCR}}+ (1+d \lambda)\log (1+d \lambda) - (1+d \lambda)\log d \\
&= (1 + d \lambda)(-\log \lambda + 2 \mathcal{L}_{\text{TCR}} + \log (1 + d \lambda) - \log d)\\
&= (1 + d \lambda)(\log \frac{1 + d \lambda}{\lambda d} + 2 \mathcal{L}_{\text{TCR}}).\\
\end{aligned}
\]

This equation represents \( \mathcal{L}_{\text{UMKL}} \) in terms of \( \mathcal{L}_{\text{TCR}} \) and other constants \( d \), \( \lambda \), and \( B \), thus fulfilling the proposition.
\end{proof}

\paragraph{Proof of Theorem~\ref{thm:minimize-tcr}.}

\begin{proof}
Here we present an alternative proof without resorting to other literature. To prove the theorem, we examine the form of the TCR loss:
\[
\mathcal{L}_{\text{TCR}} = -\frac{1}{2} \log \operatorname{det} \left( \mathbf{I}_d + \frac{d}{B \epsilon^2} \mathbf{Z} \mathbf{Z}^{\top} \right),
\]
where $\mathbf{Z} = [\boldsymbol{f}(x_1), \cdots, \boldsymbol{f}(x_B)]\in \mathbb{R}^{d\times B}$.

We note that $\mathbf{Z} \mathbf{Z}^{\top}$ is a positive semi-definite matrix, as it is the product of a matrix and its transpose. Hence, all its eigenvalues are non-negative. Let these eigenvalues be denoted by $\lambda_1, \lambda_2, \ldots, \lambda_d$. 

The determinant of $\mathbf{I}_d + \frac{d}{B \epsilon^2} \mathbf{Z} \mathbf{Z}^{\top}$ can then be expressed as the product of its eigenvalues:
\[
\operatorname{det} \left( \mathbf{I}_d + \frac{d}{B \epsilon^2} \mathbf{Z} \mathbf{Z}^{\top} \right) = \prod_{i=1}^{d} (1 + \frac{d}{B \epsilon^2} \lambda_i).
\]
Since logarithm is a monotonically increasing function, minimizing $\mathcal{L}_{\text{TCR}}$ is equivalent to maximizing the product of $(1 + \frac{d}{B \epsilon^2} \lambda_i)$ terms.

Applying the arithmetic mean-geometric mean inequality, we find that the product of the eigenvalues (and thus the determinant) is maximized when all eigenvalues are equal, i.e., $\lambda_i = \frac{B}{d}$ for all $i$. Therefore, the matrix that maximizes this determinant under the given constraints is one where all eigenvalues are $\frac{B}{d}$.

Hence, the global and unique minimizer of the TCR loss under the constraint $\| \mathbf{z}_i\|^2_2 = 1$ is achieved when $\frac{1}{B}\mathbf{Z} \mathbf{Z}^{\top}$ has eigenvalues equal to $\frac{1}{d}$, which corresponds to $\frac{1}{B}\mathbf{Z} \mathbf{Z}^{\top} = \frac{1}{d}\mathbf{I}_d$.
\end{proof}

\paragraph{Proof of Theorem~\ref{thm:covariance-matrix-effective-rank}.}

\begin{proof}
Based on the definition of effective rank presented in Section~\ref{sec:effective-rank}, a maximal effective rank of $d$ implies that the covariance matrix has $d$ non-negligible eigenvalues.

Let $\mathbf{x} = [x_1, x_2, \ldots, x_d]^\top$ be a random vector on $S^{d-1}$. The covariance matrix $\mathbf{C}(\mathbf{x})$ of $\mathbf{x}$ is defined as $\mathbb{E}[\mathbf{x}\mathbf{x}^\top] - \mathbb{E}[\mathbf{x}]\mathbb{E}[\mathbf{x}]^\top$.

The trace of $\mathbf{C}(\mathbf{x})$, which is the sum of its eigenvalues, must be at least 1. Given the maximal effective rank $d$, each of these $d$ eigenvalues must be equal (denote this common value as $\lambda$), resulting in $\mathbf{C}(\mathbf{x}) = \lambda \mathbf{I}_d$.

From above, we find that $\mathbb{E}[\mathbf{x}\mathbf{x}^\top] = \lambda \mathbf{I}_d$. Noticing that $\operatorname{tr}(\mathbf{C}(\mathbf{x}))=1-\|\mathbb{E}[\mathbf{x}]\|^2 \leq 1$ and the trace at least $1$ assumption, the trace of this matrix, which is $d\lambda$, must be equal to 1, implying $\lambda = \frac{1}{d}$.

Thus, we conclude that if the covariance matrix of $\mathbf{x}$ has the maximal possible effective rank of $d$ and its trace is at least one, then the expected value of $\mathbf{x}$ is zero, and the covariance matrix $\mathbf{C}(\mathbf{x})$ is $\frac{1}{d}\mathbf{I}_d$.
\end{proof}

\paragraph{Proof of Lemma~\ref{lem:covariance}.}
\label{sec:centering-covariance}

\begin{proof}
 To prove the lemma, we first apply the centering matrix \(\bfH_B\) to \(\bfZ_1\) and \(\bfZ_2\) as follows:
\[
\begin{aligned}
\bar{\bfZ}_1 &= \bfZ_1 \bfH_B, \\
\bar{\bfZ}_2 &= \bfZ_2 \bfH_B.
\end{aligned}
\]
These equations remove the mean of each row, effectively centering the data.

The cross-covariance matrix for the centered data \(\bar{\bfZ}_1\) and \(\bar{\bfZ}_2\) is then given by:
\[
\bfC(\bar{\bfZ}_1, \bar{\bfZ}_2) = \frac{1}{B} \bar{\bfZ}_1 \bar{\bfZ}_2^{\top}.
\]

Substituting the expressions for \(\bar{\bfZ}_1\) and \(\bar{\bfZ}_2\), we get:
\[
\bfC\left(\bfZ_1, \bfZ_2\right) = \frac{1}{B} (\bfZ_1 \bfH_B) (\bfZ_2 \bfH_B)^{\top}.
\]

Because \(\bfH_B\) is symmetric (\(\bfH_B = \bfH_B^{\top}\)) and idempotent (\(\bfH_B^2 = \bfH_B\)), this expression simplifies to:
\[
\bfC\left(\bfZ_1, \bfZ_2\right) = \frac{1}{B} \bfZ_1 \bfH_B \bfZ_2^{\top},
\]
completing the proof.
\end{proof}

\paragraph{Proof of Proposition~\ref{prop:matrix-entropy-kl-erank-relationship}.}

Recall the definition of Matrix KL divergence:
\[
\operatorname{MKL}(\mathbf{P}\, || \,\mathbf{Q}) = \operatorname{tr}(\mathbf{P} \log \mathbf{P} - \mathbf{P} \log \mathbf{Q} - \mathbf{P} + \mathbf{Q}),
\]

Substitute \(\mathbf{P} = \frac{1}{B}\mathbf{Z}\mathbf{Z}^{\top}\) and \(\mathbf{Q} = \frac{1}{d}\mathbf{I}_d\) into this:
\[
\begin{aligned}
\operatorname{MKL}\left(\left.\frac{1}{B}\mathbf{Z}\mathbf{Z}^{\top}\, \right | \left | \,\frac{1}{d}\mathbf{I}_d\right.\right) &= \operatorname{tr}\left(\frac{1}{B}\mathbf{Z}\mathbf{Z}^{\top} \log \left(\frac{1}{B}\mathbf{Z}\mathbf{Z}^{\top}\right) - \frac{1}{B}\mathbf{Z}\mathbf{Z}^{\top} \log \left(\frac{1}{d}\mathbf{I}_d\right) - \frac{1}{B}\mathbf{Z}\mathbf{Z}^{\top} + \frac{1}{d}\mathbf{I}_d\right) \\
&= \operatorname{tr}\left(\frac{1}{B}\mathbf{Z}\mathbf{Z}^{\top} \log \left(\frac{1}{B}\mathbf{Z}\mathbf{Z}^{\top}\right) + \frac{\log d}{B}\mathbf{Z}\mathbf{Z}^{\top} - \frac{1}{B}\mathbf{Z}\mathbf{Z}^{\top} + \frac{1}{d}\mathbf{I}_d\right) \\
&= -\operatorname{VNE}\left(\frac{1}{B}\mathbf{Z}\mathbf{Z}^{\top}\right) + \frac{\log d}{B}\operatorname{tr}(\mathbf{Z}\mathbf{Z}^{\top}) - \frac{1}{B}\operatorname{tr}(\mathbf{Z}\mathbf{Z}^{\top}) + \frac{1}{d}\operatorname{tr}(\mathbf{I}_d) \\
&= -\operatorname{VNE}\left(\frac{1}{B}\mathbf{Z}\mathbf{Z}^{\top}\right) + \log d - 1 + \frac{d}{d} \\
&= -\operatorname{VNE}\left(\frac{1}{B}\mathbf{Z}\mathbf{Z}^{\top}\right) + \log d,
\end{aligned}
\]

From this, we conclude that:
\[
\operatorname{VNE}\left(\frac{1}{B}\mathbf{Z}\mathbf{Z}^{\top}\right) = -\operatorname{KL}\left(\frac{1}{B}\mathbf{Z}\mathbf{Z}^{\top}\, || \,\frac{1}{d}\mathbf{I}_d\right) + \log d.
\]
\[
\operatorname{ME}\left(\frac{1}{B}\mathbf{Z}\mathbf{Z}^{\top}\right) = \operatorname{VNE}\left(\frac{1}{B}\mathbf{Z}\mathbf{Z}^{\top}\right) + \operatorname{tr}\left(\frac{1}{B} \bfZ \bfZ^{\top}\right) = \operatorname{VNE}\left(\frac{1}{B}\mathbf{Z}\mathbf{Z}^{\top}\right) + 1.
\]
The effective rank is defined as:
\[
\operatorname{erank}(\mathbf{A}) = \exp{\left\{\operatorname{H}(p_1, p_2, \ldots, p_n)\right\}},
\]
If we substitute \(\mathbf{A} = \frac{1}{B}\mathbf{Z}\mathbf{Z}^{\top}\) and given that \(\operatorname{VNE}\left(\frac{1}{B}\mathbf{Z}\mathbf{Z}^{\top}\right)\) is the entropy of the eigenvalue distribution of \(\frac{1}{B}\mathbf{Z}\mathbf{Z}^{\top}\), then we could directly relate \(\operatorname{erank}\left(\frac{1}{B}\mathbf{Z}\mathbf{Z}^{\top}\right)\) and \(\operatorname{VNE}\left(\frac{1}{B}\mathbf{Z}\mathbf{Z}^{\top}\right)\):
\[
\operatorname{erank}\left(\frac{1}{B}\mathbf{Z}\mathbf{Z}^{\top}\right) = \exp{\left\{\operatorname{VNE}\left(\frac{1}{B}\mathbf{Z}\mathbf{Z}^{\top}\right)\right\}} = \exp{\left\{\operatorname{ME}\left(\frac{1}{B}\mathbf{Z}\mathbf{Z}^{\top} \right) - 1\right\}}.
\]
Finally, we have
\[
\operatorname{erank}\left(\frac{1}{B}\mathbf{Z}\mathbf{Z}^{\top}\right) = \operatorname{exp}\left(\log d - \operatorname{MKL}\left(\left.\frac{1}{B}\mathbf{Z}\mathbf{Z}^{\top}\, \right | \left | \,\frac{1}{d}\mathbf{I}_d\right.\right) \right) = \frac{d}{\operatorname{exp}\left(\operatorname{MKL}\left(\frac{1}{B}\mathbf{Z}\mathbf{Z}^{\top}\, || \,\frac{1}{d}\mathbf{I}_d\right)\right)}
\]

\begin{theorem}[Taylor series expansion~\citep{hall2013lie}]
\label{thm:taylorhall2013}
The function
$$
\log \mathbf{A}=\sum_{m=1}^{\infty}(-1)^{m+1} \frac{(\mathbf{A}-\mathbf{I})^m}{m},
$$
is defined and continuous on the set of all $n \times n$ complex matrices $\mathbf{A}$ with $\|\mathbf{A}-\mathbf{I}\|<1$.
For all $\mathbf{A}$ with $\|\mathbf{A}-\mathbf{I}\|<1$,
$$
e^{\log \mathbf{A}}=\mathbf{A}.
$$
For all $\mathbf{X}$ with $\|\mathbf{X}\|_{F}<\log 2,\left\|e^{\mathbf{X}}-\mathbf{I}\right\|<1$ and
$$
\log e^{\mathbf{X}}=\mathbf{X}.
$$
\end{theorem}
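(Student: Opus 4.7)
}
The plan is to handle the three claims in order, each resting on submultiplicativity of the operator norm and standard power series manipulations.

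\emph{Convergence and continuity.} For the first claim, I would use the norm comparison $\|(\mathbf{A}-\mathbf{I})^m\| \le \|\mathbf{A}-\mathbf{I}\|^m$ from submultiplicativity. This majorizes the series by the scalar series $\sum_{m=1}^{\infty} r^m/m$ with $r = \|\mathbf{A}-\mathbf{I}\| < 1$, which converges. The Weierstrass M-test then gives uniform convergence on every closed sub-ball $\|\mathbf{A}-\mathbf{I}\| \le r_0 < 1$, and continuity on the open unit ball around $\mathbf{I}$ follows.

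\emph{The identity $e^{\log \mathbf{A}} = \mathbf{A}$.} My preferred route is via a matrix ODE along an interpolating path. Define $g(t) = \log(\mathbf{I} + t(\mathbf{A}-\mathbf{I}))$ for $t \in [0,1]$ (well-defined by the first part, since $\|t(\mathbf{A}-\mathbf{I})\| < 1$) and $F(t) = e^{g(t)}$. Termwise differentiation of the log series gives $g'(t) = (\mathbf{I} + t(\mathbf{A}-\mathbf{I}))^{-1}(\mathbf{A}-\mathbf{I})$, and since every partial sum of $g(t)$ is a polynomial in the single matrix $\mathbf{A}-\mathbf{I}$, $g(t)$ commutes with $g'(t)$, so one obtains the linear ODE
\[
F'(t) = F(t)\,(\mathbf{I}+t(\mathbf{A}-\mathbf{I}))^{-1}(\mathbf{A}-\mathbf{I}), \qquad F(0) = \mathbf{I}.
\]
The path $H(t) = \mathbf{I} + t(\mathbf{A}-\mathbf{I})$ satisfies the same ODE with the same initial condition, so uniqueness of solutions to linear initial-value problems forces $F(1) = H(1) = \mathbf{A}$.

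\emph{The third claim and the main obstacle.} The exponential series bound $\|e^{\mathbf{X}} - \mathbf{I}\| \le \sum_{m \ge 1} \|\mathbf{X}\|^m/m! = e^{\|\mathbf{X}\|} - 1$, together with $\|\cdot\| \le \|\cdot\|_F$ and $\|\mathbf{X}\|_F < \log 2$, gives $\|e^{\mathbf{X}} - \mathbf{I}\| < 1$, so $\log e^{\mathbf{X}}$ is defined; the identity $\log e^{\mathbf{X}} = \mathbf{X}$ then falls out of the analogous ODE argument along the path $t \mapsto e^{t\mathbf{X}}$. The main obstacle is the second claim: justifying composition of two distinct power series in a potentially non-normal matrix. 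A naive double-series rearrangement requires delicate absolute-convergence bookkeeping uniform on compacta, whereas the ODE formulation collapses the whole issue to a single linear uniqueness statement using only that partial sums of $g(t)$ commute with $\mathbf{A}-\mathbf{I}$. For this reason I would prefer the ODE route over functional-calculus or Schur-triangularization alternatives.
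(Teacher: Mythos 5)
The paper does not prove this theorem; it cites it directly from Hall's Lie groups textbook, where the argument reduces the two composition identities to their scalar counterparts on eigenvalues: one first verifies $e^{\log \mathbf{A}}=\mathbf{A}$ and $\log e^{\mathbf{X}}=\mathbf{X}$ for diagonalizable matrices (where both sides conjugate down to the scalar identities $e^{\log z}=z$ and $\log e^{z}=z$ on eigenvalues lying in the appropriate disk), and then invokes density of diagonalizable matrices together with the continuity established in the first part. Your ODE route is a genuinely different and self-contained alternative: it bypasses eigenvalue reasoning and the density-plus-continuity step, instead observing that every partial sum of $g(t)=\log(\mathbf{I}+t(\mathbf{A}-\mathbf{I}))$ is a polynomial in $\mathbf{A}-\mathbf{I}$, so $g(t)$ and $g'(t)$ commute, which collapses $e^{\log \mathbf{A}}=\mathbf{A}$ to uniqueness for a first-order linear matrix initial-value problem on $[0,1]$. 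Your first-part estimate ($\|(\mathbf{A}-\mathbf{I})^m\| \le \|\mathbf{A}-\mathbf{I}\|^m$, Weierstrass M-test, continuity) and third-part bound ($\|e^{\mathbf{X}}-\mathbf{I}\| \le e^{\|\mathbf{X}\|}-1$, combined with $\|\cdot\|\le\|\cdot\|_F$ for the operator $2$-norm) are both correct and standard. Two details are worth making explicit in a full write-up: that the path $t\mapsto e^{t\mathbf{X}}$ stays in the domain of $\log$ for every $t\in[0,1]$, which follows from the same bound since $\|e^{t\mathbf{X}}-\mathbf{I}\|\le e^{t\|\mathbf{X}\|}-1<1$; and that termwise differentiation of the $\log$ series in $t$, and of the $\exp$ series when you compute $F'(t)=F(t)g'(t)$, is licensed by uniform convergence of those series and of their formally differentiated series on $[0,1]$. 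Neither is a real gap, just a missing sentence each; the structure of the argument is sound.
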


\section{Details on Experiments}
\label{sec:more-experiment-details}

\subsection{More on Loss Functions}

Now we take a closer look at the loss function:
\begin{equation}
\label{eq:matrix-ssl-more}
\begin{aligned}
\mathcal{L}_{\text{Matrix-SSL}} &= \mathcal{L}_{\text{Matrix-Uniformity}} + \mathcal{L}_{\text{Matrix-Alignment}(\gamma)}\\
&= \operatorname{MCE}\left(\frac{1}{d}\mathbf{I}_d, \bfC\left(\bfZ_1, \bfZ_2\right)\right) -\operatorname{tr} \left(\bfC\left(\bfZ_1, \bfZ_2\right)\right)
+ \gamma \cdot \operatorname{MCE}\left(\mathbf{C}(\mathbf{Z}_1, \mathbf{Z}_1), \mathbf{C}(\mathbf{Z}_2, \mathbf{Z}_2)\right)\\
&= -\operatorname{tr}\left(\left(\frac{1}{d}\mathbf{I}_d\right) \log\left(\bfC\left(\bfZ_1, \bfZ_2\right)\right)\right) -\gamma \cdot \operatorname{tr}\left(\mathbf{C}(\mathbf{Z}_1, \mathbf{Z}_1) \log \left(\mathbf{C}(\mathbf{Z}_2, \mathbf{Z}_2)\right)\right)+ \gamma \cdot \operatorname{tr}\left(\mathbf{C}(\mathbf{Z}_2, \mathbf{Z}_2)\right) + \text{Const. }\\
\end{aligned}
\end{equation}
\paragraph{Employing matrix KL divergence.} As we previously introduced in Section~\ref{sec:implementation-linear-evaluation}, applying the stop gradient technique to the first branch $\mathbf{Z}_1$, as utilized in SimSiam~\cite{Hua2021SimSiam}, renders the third term $\operatorname{ME}(\operatorname{C}(\mathbf{Z}_1, \mathbf{Z}_1))$ a constant in the Matrix-Alignment-KL loss, as delineated in Equation~\ref{eq:mkl-alignment-more}.
\begin{equation}
\label{eq:mkl-alignment-more}
\begin{aligned}
\mathcal{L}_{\text{Matrix-Alignment-KL}} &= -\operatorname{tr} \left(\mathbf{C}(\bfZ_1, \bfZ_2)\right) +\gamma \cdot \operatorname{MKL}\left(\mathbf{C}(\mathbf{Z}_1, \mathbf{Z}_1) || \mathbf{C}(\mathbf{Z}_2, \mathbf{Z}_2)\right)\\
&= -\operatorname{tr} \left(\bfC\left(\bfZ_1, \bfZ_2\right)\right) + \gamma \cdot \operatorname{MCE}\left(\mathbf{C}(\mathbf{Z}_1, \mathbf{Z}_1), \mathbf{C}(\mathbf{Z}_2, \mathbf{Z}_2)\right) - \gamma \cdot \operatorname{ME}(\mathbf{C}(\mathbf{Z}_1, \mathbf{Z}_1)).
\end{aligned}
\end{equation}
\begin{equation}
\label{eq:matrix-ssl-more-kl}
\begin{aligned}
\mathcal{L}_{\text{Matrix-SSL-KL}} &= \mathcal{L}_{\text{Matrix-Uniformity-KL}} + \mathcal{L}_{\text{Matrix-Alignment-KL}(\gamma)}\\
&= \operatorname{MKL}\left(\frac{1}{d}\mathbf{I}_d  || \bfC\left(\bfZ_1, \bfZ_2\right)\right) -\operatorname{tr} \left(\bfC\left(\bfZ_1, \bfZ_2\right)\right)
+ \gamma \cdot \operatorname{MKL}\left(\mathbf{C}(\mathbf{Z}_1, \mathbf{Z}_1) || \mathbf{C}(\mathbf{Z}_2, \mathbf{Z}_2)\right)\\
&= \operatorname{MKL}\left(\frac{1}{d}\mathbf{I}_d || \bfC\left(\bfZ_1, \bfZ_2\right)\right) -\operatorname{tr} \left(\bfC\left(\bfZ_1, \bfZ_2\right)\right)
+ \gamma \cdot \operatorname{MCE}\left(\mathbf{C}(\mathbf{Z}_1, \mathbf{Z}_1), \mathbf{C}(\mathbf{Z}_2, \mathbf{Z}_2)\right) -\gamma \cdot \operatorname{ME}(\mathbf{C}(\mathbf{Z}_1, \mathbf{Z}_1)) \\
&= \operatorname{MCE}\left(\frac{1}{d}\mathbf{I}_d, \bfC\left(\bfZ_1, \bfZ_2\right)\right) - \operatorname{ME}\left(\frac{1}{d}\mathbf{I}_d\right) -\operatorname{tr} \left(\bfC\left(\bfZ_1, \bfZ_2\right)\right)\\
&\quad + \gamma \cdot \operatorname{MCE}\left(\mathbf{C}(\mathbf{Z}_1, \mathbf{Z}_1), \mathbf{C}(\mathbf{Z}_2, \mathbf{Z}_2)\right) -\gamma \cdot \operatorname{ME}(\mathbf{C}(\mathbf{Z}_1, \mathbf{Z}_1))\\
&= -\operatorname{tr}\left(\left(\frac{1}{d}\mathbf{I}_d\right) \log\left(\bfC\left(\bfZ_1, \bfZ_2\right)\right)\right) + \operatorname{tr}\left(\bfC\left(\bfZ_1, \bfZ_2\right)\right) + \text{Const. } - \operatorname{tr}\left(\bfC\left(\bfZ_1, \bfZ_2\right)\right)\\
&\quad - \gamma \cdot \operatorname{tr}\left(\mathbf{C}(\mathbf{Z}_1, \mathbf{Z}_1) \log \left(\mathbf{C}(\mathbf{Z}_2, \mathbf{Z}_2)\right)\right) + \gamma \cdot \operatorname{tr}\left(\mathbf{C}(\mathbf{Z}_2, \mathbf{Z}_2)\right) -\gamma \cdot \operatorname{ME}(\mathbf{C}(\mathbf{Z}_1, \mathbf{Z}_1))\\
&= -\operatorname{tr}\left(\left(\frac{1}{d}\mathbf{I}_d\right) \log\left(\bfC\left(\bfZ_1, \bfZ_2\right)\right)\right) -\gamma \cdot \operatorname{tr}\left(\mathbf{C}(\mathbf{Z}_1, \mathbf{Z}_1) \log \left(\mathbf{C}(\mathbf{Z}_2, \mathbf{Z}_2)\right)\right)\\
&\quad  + \gamma \cdot \operatorname{tr}\left(\mathbf{C}(\mathbf{Z}_2, \mathbf{Z}_2)\right) +\gamma \cdot \operatorname{ME}(\mathbf{C}(\mathbf{Z}_1, \mathbf{Z}_1)) + \text{Const. } \qquad \left(\xRightarrow[\text{}]{\text{Stop Gradient on }\mathbf{Z}_1}\right)\\
&=-\operatorname{tr}\left(\left(\frac{1}{d}\mathbf{I}_d\right) \log\left(\bfC\left(\bfZ_1, \bfZ_2\right)\right)\right) -\gamma \cdot \operatorname{tr}\left(\mathbf{C}(\mathbf{Z}_1, \mathbf{Z}_1) \log \left(\mathbf{C}(\mathbf{Z}_2, \mathbf{Z}_2)\right)\right) + \gamma \cdot \operatorname{tr}\left(\mathbf{C}(\mathbf{Z}_2, \mathbf{Z}_2)\right) + \text{Const.}\\
\end{aligned}
\end{equation}

From Equation~\ref{eq:matrix-ssl-more} and \ref{eq:matrix-ssl-more-kl}, we find that they are essentially the same loss function when the stop gradient is performed.

\section{Neural Collapse and Dimensional Collapse}
\label{sec:neural-collapse}

 Feature representations acquired through a deep neural network employing a cross-entropy (CE) loss optimized by stochastic gradient descent, are capable of attaining zero loss~\citep{du2018gradient} with arbitrary label assignments~\citep{zhang2021understanding}. A phenomenon which known as neural collapse (NC)~\citep{papyan2020prevalence} is observed when training of the neural network continues beyond zero loss with CE. \citet{galanti2021role} demonstrate that the NC phenomenon can facilitate some transfer learning tasks. However, potential concerns associated with neural collapse exist, as \citet{ma2023do} posit that the total within-class features collapse may not be ideal for fine-grained classification tasks.
 
 The NC phenomenon embodies the following characteristics~\citep{zhu2021geometric}:

\begin{itemize}
    \item Variability collapse: The intra-class variability of the final layer's features collapse to zero, signifying that all the features of a single class concentrate on the mean of these features for each class respectively.
    \item Convergence to Simplex ETF: Once centered at their global mean, the class-means are simultaneously linearly separable and maximally distant on a hypersphere. This results in the class-means forming a simplex equiangular tight frame (ETF), a symmetrical structure determined by a set of points on a hypersphere that is maximally distant and equiangular to each other.
    \item Convergence to self-duality: The linear classifiers, existing in the dual vector space of the class-means, converge to their respective class-mean and also construct a simplex ETF.
    \item Simplification to Nearest Class-Center (NCC): The linear classifiers behaviors similarly to the nearest class-mean decision rule.
\end{itemize}

Here we present the definition of standard $K$-Simplex ETF and general $K$-Simplex ETF~\citep{papyan2020prevalence}.

\begin{definition}[$K$-Simplex ETF]
\label{def:simplex-etf}
A standard Simplex ETF is characterized as a set of points in $\mathbb{R}^K$, defined by the columns of
$$
\mathbf{M}=\sqrt{\frac{K}{K-1}}\left(\mathbf{I}_K-\frac{1}{K} \mathbf{1}_K \mathbf{1}_K^{\top}\right),
$$
where $\mathbf{I}_K \in \mathbb{R}^{K \times K}$ is the identity matrix, and $\mathbf{1}_K \in \mathbb{R}^K$ represents a all-one vector. Consequently, we also obtain

$$
\mathbf{M}^{\top} \mathbf{M}=\mathbf{M} \mathbf{M}^{\top}=\frac{K}{K-1}\left(\mathbf{I}_K-\frac{1}{K} \mathbf{1}_K \mathbf{1}_K^{\top}\right).
$$
\end{definition}

\begin{definition}[General $K$-Simplex ETF]
\label{def:general-simplex-etf}
A general Simplex ETF is characterized as a set of points in $\mathbb{R}^K$, defined by the columns of
$$
\tilde{\mathbf{M}}=\alpha \mathbf{U} \mathbf{M},
$$
where $\alpha \in \R_+$ is a scale factor, and $\mathbf{U} \in \R^{p \times K}$ ($p \geq K$) is a partial orthogonal matrix $\mathbf{U}^{\top} \mathbf{U} = \mathbf{I}$.
\end{definition}

\citet{zhu2021geometric} further studied the problem using an unconstrained feature model that separates the topmost layers from the classifier of the neural network. They established that the conventional cross-entropy loss with weight decay presents a benign global landscape, where the only global minimizers are the Simplex ETFs and all other critical points are strict saddles exhibiting negative curvature directions.

The study was later extended~\citep{zhou2022all}, demonstrating through a global solution and landscape analysis that a wide range of loss functions, including commonly used label smoothing (LS) and focal loss (FL), display Neural Collapse. Therefore, all pertinent losses (i.e., CE, LS, FL, MSE) yield comparable features on training data.

\section{Measuring Dimensional Collapse}
\label{sec:measuring-dimensional-collapse}

\begin{figure*}[htb]
\centering
\subfigure[SimCLR]{
\includegraphics[width=0.225\textwidth]{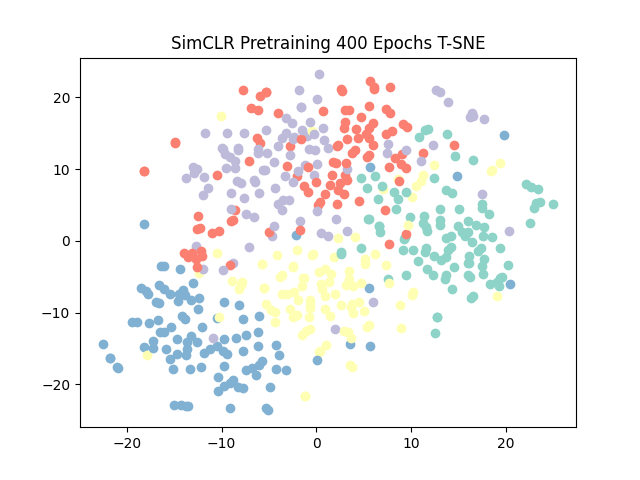}
\label{fig:tsne-simclr-5}
}
\subfigure[BYOL]{
\includegraphics[width=0.225\textwidth]{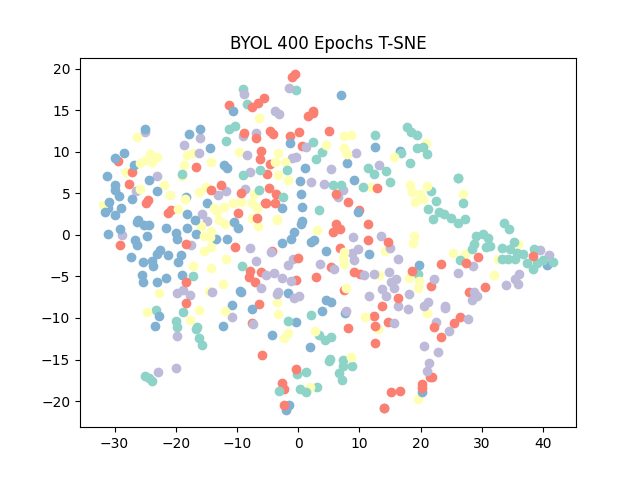}
\label{fig:tsne-byol-5}
}
\subfigure[Barlow Twins]{
\includegraphics[width=0.225\textwidth]{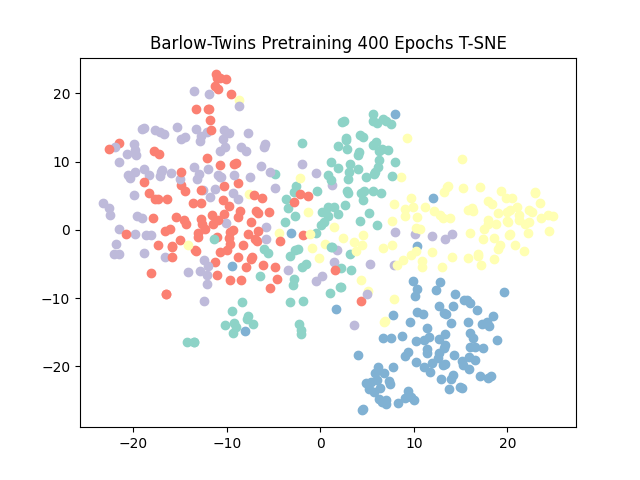}
\label{fig:tsne-barlowtwins-5}
}
\subfigure[SimSiam (collapsed w/o stop gradient)]{
\includegraphics[width=0.225\textwidth]{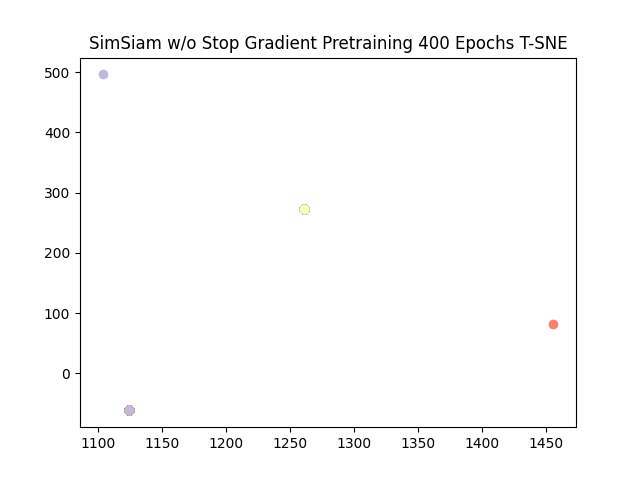}
\label{fig:tsne-collapse-5}
}
\caption{Visualization of feature representation for images in 5 different classes from CIFAR-100 dataset via t-SNE of various self-supervised learning methods. We find that SimCLR has larger inter-class variability than others, as the clusters seem more separable. For illustration, we also introduce a collapsed representation via SimSiam without stop gradient operation.}
\label{fig:t-sne-5}
\end{figure*}

\citet{papyan2020prevalence} discuss the fascinating occurrence of neural collapse during the training of a supervised neural network utilizing cross-entropy loss for classification tasks that result in an intra-class collapse. Contrastive learning has effects of dimensional collapse due to its spectral clustering nature \citep{tan2023contrastive}. As dimension-contrastive learning can be seen as pursuing uniformity, we are also interested in discovering the relationship between dimension-contrastive learning and dimensional collapse. 

Figure~\ref{fig:t-sne-5} illustrates that the non-contrastive method, Barlow Twins, exhibits greater intra-class variability than the contrastive method, SimCLR. However, for larger samples and classes (e.g., Figure~\ref{fig:t-sne-10} in Appendix~\ref{sec:neural-collapse}), this observation is qualitative explicit. To quantify this observation, we propose the introduction of metrics involving class-specific information to quantify dimensional collapse. These measures may enhance our understanding of the differences among supervised learning, contrastive, and non-contrastive SSL.

Assuming a total of $K$ classes and $n$ labeled samples $\{ x_i,y_i \}^{n}_{i=1}$, denote the number of samples in each class $c$ as $n_c$, i.e., $n_c =  |\{ i \mid y_i =c\}| $. We define the \textit{intra-class effective rank} and \textit{inter-class effective rank} as follows.

\begin{definition}[Intra-class effective rank]
Denote the class-mean vector of each class $c$ as $\mathbf{\mu}_c = \frac{1}{n_c}\sum\limits_{y_i =c} \mathbf{f}(\mathbf{x}_i)$, and denote $
\mathbf{C}(\mathbf{f}(x)\mid y)) = \frac{1}{n_y}\sum\limits_{y_i = y} (\mathbf{f}(x_i)-\mathbf{\mu}_y)(\mathbf{f}(x_i)-\mathbf{\mu}_y)^{\top}.$ We define \textit{intra-class effective rank} (intra-class erank) as 
\begin{equation}
\operatorname{erank_{intra-class}} \triangleq \frac{1}{K}\sum_{y \in [K]} \operatorname{erank}(\mathbf{C}(\mathbf{f}(\mathbf{x})\mid y))),    
\end{equation}

which can be viewed as an empirical approximation of $\mathbb{E}_{y \in [K]}\left[\operatorname{erank}(\mathbf{C}(\mathbf{f}(x)\mid y))\right]$, where $x$ is drawn from $p_{\text{data}}$.
\end{definition}

\begin{definition}[Inter-class effective rank]
Denote global mean of representation as $\mathbf{\mu}_G = \frac{1}{n}\sum_{i \in [n]} \mathbf{f}(x_i)$, then we define \textit{inter-class effective rank} (inter-class erank) as the effective rank of the covariance matrix of all $C$ class-mean vectors,
\begin{equation}
\operatorname{erank_{inter-class}} \triangleq
\operatorname{erank}[\frac{1}{K} \sum_{i \in [K]} (\mathbf{\mu}_i-\mathbf{\mu}_G)(\mathbf{\mu}_i-\mathbf{\mu}_G)^{\top}].  
\end{equation}

When class are balanced, intra-class erank is approximately $\operatorname{erank}(\mathbf{C}_{y \in [K]}(E[\mathbf{f}(x) \mid y]))$, where $x$ is drawn from $p_{\text{data}}$.    
\end{definition}

\paragraph{Remark.} These two metrics can be interpreted as an effective rank factorization of the two terms in the total covariance theorem.

\begin{wrapfigure}{r}{0.5\textwidth}
  \vspace{-7.7mm}
   \centering
   \begin{minipage}{0.5\textwidth}
    \centering
\subfigure[Intra-class erank on test dataset]{
\includegraphics[width=0.45\textwidth]{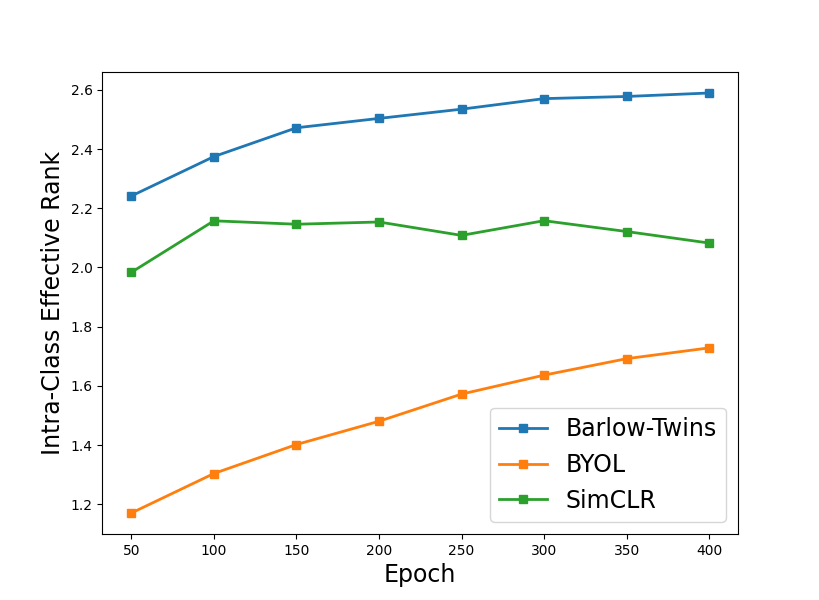}
\label{fig:test-inter-class-er}
}
\subfigure[Inter-class erank on test dataset]{
\includegraphics[width=0.45\textwidth]{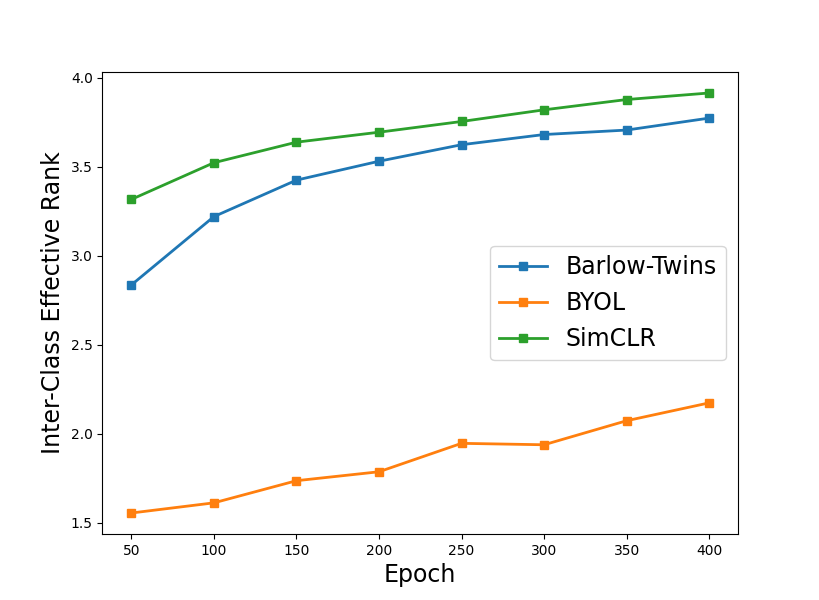}
\label{fig:test-class-means-er}
}
\caption{Intra-class effective rank and inter-class effective rank. It is obvious that intra-class effective rank continues to grow for BYOL or Barlow Twins, but not for SimCLR.}
\label{fig:inter-class-class-means-er}
 \end{minipage}
 \vspace{-1mm}
\end{wrapfigure}

From illustrative examples shown in Figure ~\ref{fig:inter-class-class-means-er}, we observe that SimCLR, as a contrastive method, exhibits a consistent decrease in intra-class effective rank during training. This empirical evidence corroborates the spectral clustering interpretation of contrastive learning. On the contrary, non-contrastive methods like BYOL and Barlow Twins, owing to the inherent property of kernel-uniformity loss (and its low-order Taylor approximations) tending towards a uniform distribution, exhibit larger intra-class effective ranks that continue to increase during training. Regarding the inter-class effective rank, a metric for global class-means effective rank, all three methods show a consistent increase.

We now present some theoretical properties of effective rank and its connections to an equiangular tight frame (ETF). The following theorem suggests that a larger effective rank of the Gram matrix is beneficial for expressiveness.

\begin{theorem}[Maximize effective rank forms a equiangular tight frame (ETF)]
\label{thm:effective-rank-etf}
For $K$ vectors $\mathbf{z}_i$ ($1 \leq i \leq K$), each lying on $S^{d-1}$. Assuming the latent dimension $d$ satisfies $d \geq K$ and the mean of $\mathbf{z}_i$ is $\mathbf{0}$, denote $\mathbf{Z} = [\mathbf{z}_1,\cdots , \mathbf{z}_K]$. If the Gram matrix $\mathbf{Z}^{\top}\mathbf{Z}$ has an effective rank of $K-1$, it implies the existence of an equiangular tight frame (ETF) in the orthonormal span of $\mathbf{z}_i$. Conversely, the Gram matrix of any ETF has an effective rank of $K-1$.
\end{theorem}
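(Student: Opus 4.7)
The plan is to show that the hypotheses on $\mathbf{z}_i$ force the Gram matrix $G = \mathbf{Z}^\top \mathbf{Z} \in \mathbb{R}^{K \times K}$ to have an extremely rigid spectrum, which in turn pins down $G$ up to an orthogonal embedding and exhibits it as the Gram matrix of a Simplex ETF.

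First I would record two immediate consequences of the assumptions. Since $\|\mathbf{z}_i\|_2 = 1$, the diagonal of $G$ is all ones, so $\operatorname{tr}(G) = K$. Since $\sum_i \mathbf{z}_i = \mathbf{0}$, we have $\mathbf{Z} \mathbf{1}_K = \mathbf{0}$ and therefore $G \mathbf{1}_K = \mathbf{0}$, so $\mathbf{1}_K$ lies in the kernel of the PSD matrix $G$. Hence $G$ has at most $K-1$ nonzero eigenvalues (which coincide with its singular values, $G$ being PSD).

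Next I would exploit the effective-rank hypothesis via Definition~\ref{def:erank}. With at most $K-1$ nonzero singular values and the convention $0 \log 0 \triangleq 0$, the Shannon entropy of the normalized singular value distribution is at most $\log(K-1)$, and is attained only when exactly $K-1$ of the $p_i$ are nonzero and equal to $1/(K-1)$. Therefore the assumption $\operatorname{erank}(G) = K-1$ forces $G$ to have exactly $K-1$ equal nonzero eigenvalues; combined with $\operatorname{tr}(G) = K$, the common eigenvalue must be $K/(K-1)$, and the zero eigenspace is precisely $\operatorname{span}(\mathbf{1}_K)$. By the spectral theorem these constraints determine $G$ uniquely:
\[
G = \frac{K}{K-1}\left(I_K - \frac{1}{K}\mathbf{1}_K\mathbf{1}_K^\top\right) = \mathbf{M}^\top \mathbf{M},
\]
which matches the Gram matrix of the standard $K$-Simplex ETF in Definition~\ref{def:simplex-etf}. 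Because two matrices $\mathbf{Z}$ and $\mathbf{M}$ in $\mathbb{R}^{d\times K}$ and $\mathbb{R}^{K\times K}$ with the same Gram matrix differ by a left partial isometry, and since $d \geq K$ permits such an embedding, I can write $\mathbf{Z} = \mathbf{U} \mathbf{M}$ with $\mathbf{U}^\top \mathbf{U} = I_K$, exhibiting $\mathbf{Z}$ as a general Simplex ETF in the orthonormal span of its columns, per Definition~\ref{def:general-simplex-etf}.

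For the converse, if $\mathbf{Z} = \alpha \mathbf{U} \mathbf{M}$ with $\mathbf{U}^\top \mathbf{U} = I_K$, then $\mathbf{Z}^\top \mathbf{Z} = \alpha^2 \mathbf{M}^\top \mathbf{M}$, whose spectrum consists of $K-1$ copies of $\alpha^2 K/(K-1)$ together with a single zero eigenvalue; normalizing the nonzero part yields a uniform distribution on $K-1$ atoms, whose Shannon entropy is $\log(K-1)$, so $\operatorname{erank}(\mathbf{Z}^\top \mathbf{Z}) = K-1$. The main obstacle is the entropy-maximization step: one must carefully justify that, among all probability vectors supported on at most $K-1$ atoms, the uniform distribution is the \emph{unique} maximizer of Shannon entropy, and that combined with the trace and kernel constraints this uniquely fixes every eigenvalue of $G$ rather than leaving slack; the rest is routine bookkeeping with the spectral theorem and the definitions of the two Simplex ETFs.
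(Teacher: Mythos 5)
Your argument is correct and follows essentially the same spectral bookkeeping as the paper's proof --- trace $K$ from the unit-norm columns, the kernel vector $\mathbf{1}_K$ from the zero-mean assumption, and the effective-rank hypothesis forcing the $K-1$ nonzero eigenvalues to coincide --- but it tightens that route in two respects. First, the paper only establishes that $\mathbf{Z}^\top\mathbf{Z}$ and $\frac{K}{K-1}\mathbf{H}_K$ have the same spectrum, so it must introduce an orthogonal conjugation $\mathbf{Q}$ and then cite Lemma~11 of \citet{papyan2020prevalence} to conclude $\mathbf{Z}\mathbf{Q}$ is an ETF; your observation that the one-dimensional null space of $\mathbf{Z}^\top\mathbf{Z}$ must be \emph{exactly} $\operatorname{span}(\mathbf{1}_K)$ pins the Gram matrix down identically as $\frac{K}{K-1}\bigl(\mathbf{I}_K - \frac{1}{K}\mathbf{1}_K\mathbf{1}_K^\top\bigr)$, so the factorization $\mathbf{Z} = \mathbf{U}\mathbf{M}$ with $\mathbf{U}^\top\mathbf{U}=\mathbf{I}_K$ follows from the standard same-Gram-matrix-implies-partial-isometry argument (here using $d \geq K$), invoking only Definition~\ref{def:general-simplex-etf} and no extra rotation or external lemma. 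Second, your entropy-maximization step makes explicit what the paper delegates to Property~1 of \citet{roy2007effective}, and --- more substantively --- you actually prove the converse direction stated in the theorem (that the Gram matrix of a general Simplex ETF has effective rank $K-1$), which the paper's written proof omits altogether.
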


\begin{proof}
Since the mean vector is $\mathbf{0}$, the Gram matrix can have an effective rank of at most $K-1$. By Property $1$ in \cite{roy2007effective}, we deduce that the Gram matrix $\mathbf{Z}^{\top}\mathbf{Z}$ has $K-1$ equal eigenvalues and one eigenvalue equal to $0$.

The trace of the Gram matrix equals $K$ because $\mathbf{z}_i$ lies on $S^{d-1}$. Hence, the Gram matrix has $K-1$ equal eigenvalues of $\frac{K}{K-1}$ and one eigenvalue of $0$. Therefore, the Gram matrix shares the same eigenvalues (spectrum) as $\frac{K}{K-1} \mathbf{H}_K$, where $\mathbf{H}_K$ is the centering matrix $\bfI_K - \frac{1}{K} \mathbf{1_K 1_K}^{\top}$.

Given the orthonormal standard form, there exists an orthonormal matrix $\mathbf{Q} \in \mathbb{R}^{K \times K}$ such that $\mathbf{Q}^{\top}(\mathbf{Z}^{\top}\mathbf{Z})\mathbf{Q}=\frac{K}{K-1} \mathbf{H}_K$. According to Lemma $11$ in \citet{papyan2020prevalence}, $\mathbf{Z}\mathbf{Q}$ constitutes an ETF. As $\mathbf{Z}\mathbf{Q}$ directly represents the orthonormal span of $\mathbf{Z}$'s column space, the conclusion follows.
\end{proof}

Gram matrix plays a key role in connecting our metric with Section~\ref{sec:rank}, i.e., understanding the rank-increasing phenomenon.

\begin{theorem}
\label{thm:eff-2}
The effective rank of the total sample Gram matrix can be effectively estimated by batch.  
\end{theorem}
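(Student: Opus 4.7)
The plan is to reduce the question to a finite-dimensional matrix concentration problem and then invoke continuity of Shannon entropy. Given features $\mathbf{Z}=[\boldsymbol{f}(x_1),\dots,\boldsymbol{f}(x_n)]\in\mathbb{R}^{d\times n}$, the total sample Gram matrix $\mathbf{Z}^{\top}\mathbf{Z}\in\mathbb{R}^{n\times n}$ shares all nonzero singular values with the much smaller $d\times d$ matrix $\mathbf{Z}\mathbf{Z}^{\top}$. Under the convention $0\log 0 := 0$ used in Definition~\ref{def:erank}, zero singular values contribute nothing to the entropy defining the effective rank, so $\operatorname{erank}(\mathbf{Z}^{\top}\mathbf{Z})=\operatorname{erank}(\mathbf{Z}\mathbf{Z}^{\top})$. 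This reduces a typically infeasible $n\times n$ computation to a $d\times d$ one that depends only on the empirical second-moment matrix $\tfrac{1}{n}\mathbf{Z}\mathbf{Z}^{\top}$.

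Next I would compare the batch-level estimator to the population quantity. For a mini-batch $\{x_{i_1},\dots,x_{i_B}\}$ of size $B$ drawn i.i.d.\ from the data distribution with corresponding feature columns $\mathbf{Z}_B$, the matrix $\tfrac{1}{B}\mathbf{Z}_B\mathbf{Z}_B^{\top}=\tfrac{1}{B}\sum_{k=1}^{B}\boldsymbol{f}(x_{i_k})\boldsymbol{f}(x_{i_k})^{\top}$ is an unbiased estimator of $\mathbb{E}[\boldsymbol{f}(x)\boldsymbol{f}(x)^{\top}]$, which is in turn well approximated by $\tfrac{1}{n}\mathbf{Z}\mathbf{Z}^{\top}$. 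Under the $\ell_2$-normalization assumption used throughout the paper, each rank-one summand has operator norm at most $1$, and a matrix Bernstein inequality yields
\[
\Bigl\|\tfrac{1}{B}\mathbf{Z}_B\mathbf{Z}_B^{\top}-\mathbb{E}[\boldsymbol{f}(x)\boldsymbol{f}(x)^{\top}]\Bigr\|_{\mathrm{op}} = O\!\left(\sqrt{\tfrac{\log d}{B}}\right)
\]
with high probability, and the same rate governs $\bigl\|\tfrac{1}{B}\mathbf{Z}_B\mathbf{Z}_B^{\top}-\tfrac{1}{n}\mathbf{Z}\mathbf{Z}^{\top}\bigr\|_{\mathrm{op}}$ up to an additional $O(\sqrt{\log d / n})$ term.

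From here I would invoke Weyl's inequality to lift the operator-norm bound to a uniform bound $|\sigma_i^{(B)}-\sigma_i^{(\mathrm{tot})}|=O(\sqrt{\log d/B})$ on each singular value, and then normalize by the (also concentrated) trace to compare the probability vectors $p^{(B)}$ and $p^{(\mathrm{tot})}$ appearing in Definition~\ref{def:erank}. Since the Shannon entropy $\operatorname{H}$ is continuous on the probability simplex and $\operatorname{erank}=\exp\circ\,\operatorname{H}$ is therefore continuous as well, this transfers to $|\operatorname{erank}(\tfrac{1}{B}\mathbf{Z}_B\mathbf{Z}_B^{\top})-\operatorname{erank}(\tfrac{1}{n}\mathbf{Z}\mathbf{Z}^{\top})|\to 0$ as $B\to\infty$, which by the first paragraph is exactly what is meant by estimating the effective rank of the total Gram matrix by batch.

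The main obstacle is that $x\mapsto -x\log x$ has unbounded derivative at $0$, so Shannon entropy is not globally Lipschitz on the simplex and the delicate case is when some $p_i$'s are near zero, i.e., when the spectrum of $\tfrac{1}{n}\mathbf{Z}\mathbf{Z}^{\top}$ is nearly degenerate. I would handle this with a classical Fannes-type modulus of continuity of the form $|\operatorname{H}(p)-\operatorname{H}(q)|\le\|p-q\|_1\log d+h(\|p-q\|_1)$, which is well suited to controlling entropy under $\ell_1$ perturbations and gives an explicit estimation-error bound of order $\sqrt{(\log d)^{3}/B}$.
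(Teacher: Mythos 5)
Your proposal is correct and rests on the same central observation as the paper's proof: that $\mathbf{Z}^{\top}\mathbf{Z}$ and $\mathbf{Z}\mathbf{Z}^{\top}$ share their nonzero spectrum (and hence their effective rank, since scaling and zero singular values are immaterial), so the $n\times n$ Gram matrix reduces to the $d\times d$ empirical second-moment matrix, which a mini-batch average estimates. The paper's argument stops at that qualitative reduction; your matrix-Bernstein concentration, Weyl perturbation, and Fannes-type entropy-continuity steps are a correct and genuinely more rigorous elaboration of the same route, supplying the quantitative content that ``effectively estimated by batch'' only gestures at in the paper.
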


\begin{proof}
Note scaling does not change effective rank. Change the order of $\mathbf{Z}^{\top}\mathbf{Z}$ to $\mathbf{Z}\mathbf{Z}^{\top}$, then can rewrite self-correlation as the empirical estimation of expected self-correlation by samples in a batch. This explains the estimation given by~\citet{zhuo2023towards}. 
\end{proof}

Interestingly, the following theorem connects our metrics with the Gram matrix.  

\begin{theorem}
Assuming the dataset is class-balanced and the global mean is $0$, then the effective rank of the covariance matrix of all $K$ class-mean vectors is exactly the same as the effective rank of the Gram matrix.    
\end{theorem}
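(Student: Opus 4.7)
The plan is to exploit the standard fact from the singular value decomposition that $MM^\top$ and $M^\top M$ share the same nonzero eigenvalues, and to observe that the effective rank is invariant under both positive scaling and extension by zero eigenvalues.

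First, I would set up notation. Let $M = [\mu_1, \mu_2, \ldots, \mu_K] \in \mathbb{R}^{d \times K}$ collect the class-mean vectors as columns. Under the balanced-class and zero-global-mean assumption, $\sum_{i=1}^K \mu_i = 0$, so the covariance of the class means reduces to $\mathbf{C} = \frac{1}{K}\sum_{i=1}^K \mu_i \mu_i^\top = \frac{1}{K} M M^\top \in \mathbb{R}^{d \times d}$, while the Gram matrix is $G = M^\top M \in \mathbb{R}^{K \times K}$.

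Next, I would apply the SVD $M = U \Sigma V^\top$ to deduce $MM^\top = U \Sigma \Sigma^\top U^\top$ and $M^\top M = V \Sigma^\top \Sigma V^\top$. The nonzero diagonal entries of $\Sigma \Sigma^\top$ and $\Sigma^\top \Sigma$ coincide (they are the squares of the nonzero singular values of $M$), so $MM^\top$ and $M^\top M$ have identical multisets of nonzero eigenvalues, differing only in how many zero eigenvalues they are padded with.

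Finally, I would invoke the definition of effective rank from Definition~\ref{def:erank}. Since $\mathbf{C}$ and $G$ are positive semi-definite, their singular values equal their eigenvalues. The normalization $p_i = \sigma_i / \sum_k \sigma_k$ shows that multiplying all eigenvalues by the positive constant $1/K$ leaves the probability vector $(p_i)$ unchanged, so the scaling factor is irrelevant. Moreover, zero eigenvalues contribute $0 \log 0 \triangleq 0$ to the Shannon entropy $H(p_1,\ldots,p_n)$ and thus do not affect the value of $\exp\{H\}$. Combining these two observations with the equality of the nonzero spectra yields $\operatorname{erank}(\mathbf{C}) = \operatorname{erank}(G)$.

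There is no real obstacle here; the result is essentially a reformulation of the classical eigenvalue correspondence between $MM^\top$ and $M^\top M$. The only point that merits a careful sentence is verifying that effective rank is well-defined and insensitive to zero padding and positive rescaling, which follows directly from the definition.
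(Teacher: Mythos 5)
Your proof is correct and follows exactly the paper's approach: the paper's proof is the one-liner that $\mathbf{Z}\mathbf{Z}^\top$ and $\mathbf{Z}^\top\mathbf{Z}$ share the same nonzero eigenvalues and hence have the same effective rank, and you spell out precisely why (SVD for the common nonzero spectrum, plus invariance of effective rank under positive rescaling and zero-padding). You also correctly resolved the mild ambiguity in the statement, reading ``the Gram matrix'' as the Gram matrix $M^\top M$ of the class-mean vectors, which is what the paper's proof requires.
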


\begin{proof}
As $\mathbf{Z}\mathbf{Z}^{\top}$ and $\mathbf{Z}^{\top}\mathbf{Z}$ have the same non-zero eigenvalues, thus having the same effective rank.  
\end{proof}

\subsection{Experiments on Dimensional Collapse}

\begin{figure*}[htbp]
\centering
\subfigure[SimCLR]{
\includegraphics[width=0.225\textwidth]{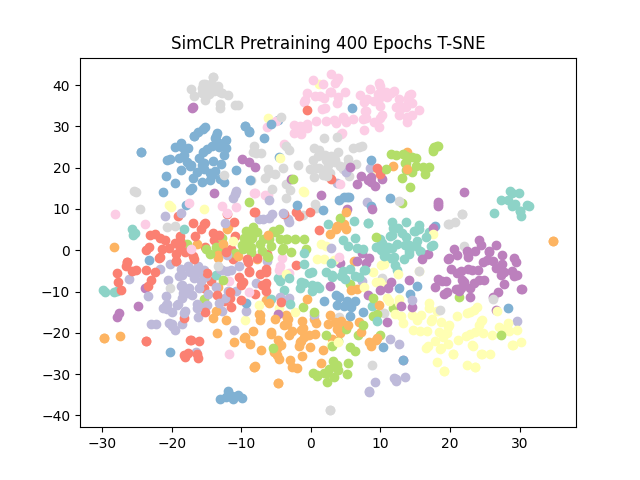}
\label{fig:tsne-simclr-10}
}
\subfigure[BYOL]{
\includegraphics[width=0.225\textwidth]{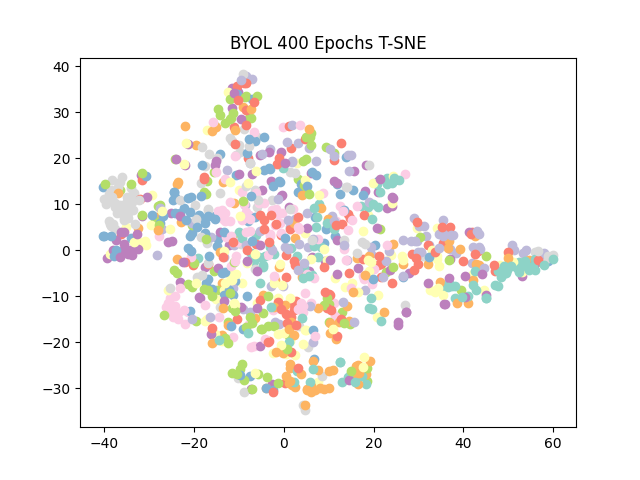}
\label{fig:tsne-byol-10}
}
\subfigure[Barlow Twins]{
\includegraphics[width=0.225\textwidth]{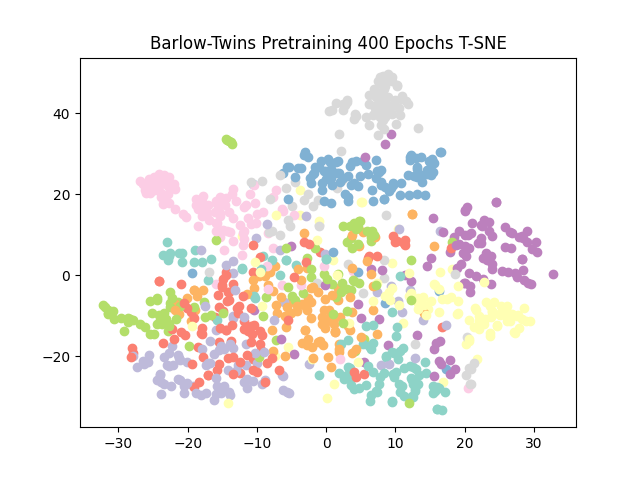}
\label{fig:tsne-barlowtwins-10}
}
\subfigure[SimSiam (collapsed w/o stop gradient)]{
\includegraphics[width=0.225\textwidth]{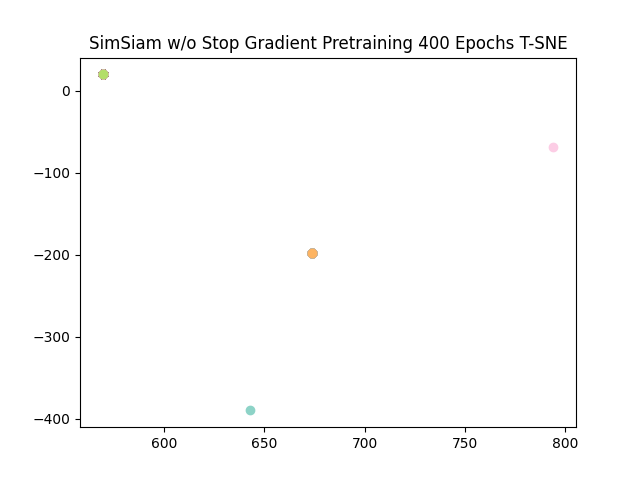}
\label{fig:tsne-collapse-10}
}
\caption{Visualization of feature representation for images in 10 different classes from CIFAR-100 dataset via t-SNE of various self-supervised learning methods. We find that in many categories, it is difficult to distinguish between two non-contrastive methods (BYOL, Barlow Twins) and contrastive method (SimCLR) by t-SNE.}
\label{fig:t-sne-10}
\end{figure*}

We measure dimensional collapse on various self-supervised learning methods, including SimCLR~\citep{chen2020simple}, BYOL~\citep{grill2020bootstrap}, Barlow Twins~\citep{zbontar2021barlow} and SimSiam~\citep{chen2021exploring} with or without stop gradient. We reproduce the above methods on the self-supervised learning task of CIFAR100~\citep{krizhevsky2009learning} dataset, using the open source implementations ~\citep{DBLP:journals/corr/abs-2104-13712, Hua2021SimSiam} of the above methods tuned for CIFAR. After pre-training, we use the saved checkpoints to evaluate the results of these methods on different metrics. 

We calculate the intra-class and inter-class effective rank directly by definition, while for MCE, we shuffle the testing dataset, import the data with 512 batch size, and finally output the average metrics of all batches.

We perform t-SNE~\citep{Maaten2008VisualizingDU} visualization on the last checkpoint of each method with the help of scikit-learn~\citep{pedregosa2011scikit}. We use the default t-SNE~\citep{Maaten2008VisualizingDU} parameter of scikit-learn~\citep{pedregosa2011scikit} and select the first $5$ or $10$ categories from $100$ categories in CIFAR-100~\citep{krizhevsky2009learning} for visualization.

\end{document}